\definecolor{lightergray}{HTML}{e5e5e5}
\definecolor{checkgreen}{RGB}{76,175,80} 
\newcommand{\hlred}[1]{{\sethlcolor{red}\hl{#1}}}
\newcommand{\hlgreen}[1]{{\sethlcolor{green}\hl{#1}}}
\newcommand{\hlgray}[1]{{\sethlcolor{gray}\hl{#1}}}
\newcommand{\ours}{\texttt{TokUR}\xspace}
\newcommand{\tfb}{TFB\xspace}
\newcommand{\blob}{BLoB\xspace}
\DeclareRobustCommand
\def\red#1{\textcolor{red}{#1}}
\newlength\savewidth
\newcolumntype{C}{>{\centering\let\newline\\\arraybackslash\hspace{0pt}}m{2cm}}
\newcommand{\StartMenu}{\raisebox{-0.13cm}{\includegraphics[scale=0.14]{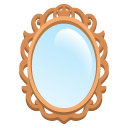}}}%
\newcommand{\SODA}[1]{%
   \StartMenu
   \foreach \x in {#1} {%
   \texttt{\x}%
   }%
}%
\newcommand{\authorsep}[0]{\ \ }
\newtheorem{assumption}{Assumption}[section]
\newtheorem{definition}{Definition}[section]
\newtheorem{theorem}{Theorem}[section]
\newtheorem{lemma}[theorem]{Lemma}
\newtheorem{proposition}{Proposition}[section]
\providecommand{\customgenericname}{}
\newcommand{\newcustomtheorem}[2]{%
  \newenvironment{#1}[1]
  {%
   \renewcommand\customgenericname{#2}%
   \renewcommand\theinnercustomgeneric{##1}%
   \innercustomgeneric
  }
  {\endinnercustomgeneric}
}
\newcommand{\vectorize}{\operatorname{vec}}
\newcommand{\diag}{\operatorname{diag}}
\def\Tabref#1{Table~\ref{#1}}
\def\Defref#1{Definition~\ref{#1}}
\def\defref#1{Definition~\ref{#1}}
\def\Lmmref#1{Lemma~\ref{#1}}
\def\Figref#1{Fig.~\ref{#1}}
\def\appref#1{Appendix~\ref{#1}}
\def\Secref#1{Sec.~\ref{#1}}
\def\eqref#1{equation~\ref{#1}}
\def\Eqref#1{Eqn.~\ref{#1}}
\def\eqnref#1{Eqn.~\ref{#1}}
\def\1{\bm{1}}
\def\rx{{\textnormal{x}}}
\def\rvw{{\mathbf{w}}}
\def\rvx{{\mathbf{x}}}
\def\rvy{{\mathbf{y}}}
\def\vzero{{\bm{0}}}
\def\vmu{{\bm{\mu}}}
\def\vtheta{{\bm{\theta}}}
\def\vepsilon{{\bm{\epsilon}}}
\def\vd{{\bm{d}}}
\def\vh{{\bm{h}}}
\def\vu{{\bm{u}}}
\def\vx{{\bm{x}}}
\def\vy{{\bm{y}}}
\def\vz{{\bm{z}}}
\def\vepsilon{{\bm{\varepsilon}}}
\def\vepsilon{{\boldsymbol{\epsilon}}}
\def\mA{{\bm{A}}}
\def\mB{{\bm{B}}}
\def\mI{{\bm{I}}}
\def\mM{{\bm{M}}}
\def\mU{{\bm{U}}}
\def\mV{{\bm{V}}}
\def\mW{{\bm{W}}}
\def\mX{{\bm{X}}}
\def\mY{{\bm{Y}}}
\def\mSigma{{\bm{\Sigma}}}
\def\mOmega{{\bm{\Omega}}}
\DeclareMathAlphabet{\mathsfit}{\encodingdefault}{\sfdefault}{m}{sl}
\SetMathAlphabet{\mathsfit}{bold}{\encodingdefault}{\sfdefault}{bx}{n}
\def\gD{{\mathcal{D}}}
\def\gH{{\mathcal{H}}}
\def\gI{{\mathcal{I}}}
\def\gN{{\mathcal{N}}}
\def\gU{{\mathcal{U}}}
\def\gV{{\mathcal{V}}}
\def\gX{{\mathcal{X}}}
\def\gY{{\mathcal{Y}}}
\newcommand{\E}{\mathbb{E}}
\renewcommand{\tilde}{\widetilde}
\renewcommand{\hat}{\widehat}
\renewcommand{\frac}{\tfrac}
\renewcommand{\cite}{\citep}
\title{
\SODA{TokUR}: Token-Level Uncertainty Estimation for Large Language Model Reasoning
}
\newcommand{\RU}[0]{\textbf{\textsuperscript{1}}}
\newcommand{\UIUC}[0]{\textbf{\textsuperscript{2}}}
\newcommand{\AMZ}[0]{\textbf{\textsuperscript{3}}}
\newcommand{\RH}[0]{\textbf{\textsuperscript{4}}}
\author{\leavevmode\unskip
\textbf{Tunyu Zhang}\thanks{
 Equal Contribution.
 $^{\textbf{\textdagger}}$Correspondence to: 
  Tunyu Zhang <ty.zhang@rutgers.edu>,
  Haizhou Shi <haizhou.shi@rutgers.edu>, 
  Hao Wang <hw488@cs.rutgers.edu>.
  $^\ddagger$Work done outside Amazon.
} \space$^{\text{\textdagger}}$\RU\authorsep
\textbf{Haizhou Shi}$^{*\textbf{\textdagger}}$\RU\authorsep
\textbf{Yibin Wang}\UIUC\authorsep
\textbf{Hengyi Wang}\RU\authorsep
\textbf{Xiaoxiao He}\RU\authorsep
\textbf{Zhuowei Li}\RU\authorsep
\\
\textbf{Haoxian Chen}$^{\ddagger}$\AMZ\authorsep 
\textbf{Ligong Han}\RH\authorsep
\textbf{Kai Xu}\RH\authorsep
\textbf{Huan Zhang}\UIUC\authorsep
\textbf{Dimitris Metaxas}\RU\authorsep
\textbf{Hao Wang}$^{\textbf{\textdagger}}$\RU\authorsep
\\
\RU Rutgers University \authorsep
\UIUC UIUC \authorsep
\AMZ Amazon \authorsep
\RH Red Hat AI Innovation \authorsep
}
\begin{document}

\maketitle

\begin{abstract}
While Large Language Models (LLMs) have demonstrated impressive capabilities, their output quality remains inconsistent across various application scenarios, making it difficult to identify trustworthy responses, especially in complex tasks requiring multi-step reasoning.
In this paper, we propose a \textbf{Tok}en-level \textbf{U}ncertainty estimation framework for \textbf{R}easoning~(\textbf{\ours}) that enables LLMs to self-assess and self-improve their responses in mathematical reasoning.
Specifically, we introduce low-rank random weight perturbation during LLM decoding to generate predictive distributions for token-level uncertainty estimation, and we aggregate these uncertainty quantities to capture the semantic uncertainty of generated responses.
Experiments on mathematical reasoning datasets of varying difficulty demonstrate that \ours exhibits a strong correlation with answer correctness and model robustness, and the uncertainty signals produced by \ours can be leveraged to enhance the model’s reasoning performance at test time.
These results highlight the effectiveness of \ours as a principled and scalable approach for improving the reliability and interpretability of LLMs in challenging reasoning tasks. The source code is avaliable at \href{https://github.com/Wang-ML-Lab/TokUR}{https://github.com/Wang-ML-Lab/TokUR}. 

\end{abstract}

\section{Introduction}
\label{sec:intro}

Large Language Models~(LLMs) have demonstrated remarkable capabilities in various reasoning tasks~\cite{wei2022emergent,wang2022self,chung2024scaling,guo2025deepseek}, yet they often struggle to reliably assess the quality of their own responses~\cite{xiong2023can,tian2023just,kapoor2024large,liu2024dellma,zhang2025cot,da2025understanding,liu2025uncertainty}. 
This limitation becomes particularly evident in complex reasoning scenarios where models may generate seemingly convincing but incorrect solutions without indicating uncertainty. 

Beyond the dominant body of uncertainty estimation methods that largely focus on short-form question answering~\cite{zhang2023enhancing,yadkori2024believe} and classification tasks~\cite{yang2023bayesian,wang2024blob,shi2024training},
two main approaches have been explored for the more challenging setting of sequence uncertainty estimation:
\textbf{(i) Query-level methods}~\cite{gal2016uncertainty,osband2023epistemic,hou2023decomposing}, despite their solid theoretical foundation, estimate uncertainties $\gU(\rvy|\vx)$ with respect to \emph{input prompts $\vx$} alone, without evaluating \emph{the quality of specific generated responses $\vy$} conditioned on those inputs (see \Secref{sec:preliminaries-ue}). 
Besides, these methods require marginalization over the entire output space $\rvy$; this becomes intractable as sequence length grows.
\textbf{(ii) Response-level methods}~\cite{murray2018correcting,malininuncertainty,kadavath2022language}, typically variants of log-probabilities, have shown empirical success but lack strong theoretical grounding~\cite{kuhn2023semantic}.
As a result, the limitations of the aforementioned methods in capturing response-specific uncertainty hinder the deployment of LLMs in high-stakes reasoning tasks that demand reliable self-assessment.

To address this challenge, we propose a principled framework, dubbed \textbf{Tok}en-level \textbf{U}ncertainty estimation for \textbf{R}easoning ({\ours}), for estimating the uncertainty of generated sequences by aggregating {tok}en-level uncertainties based on random low-rank weight perturbation. 
\ours introduces carefully calibrated perturbations to the weights of attention layers, creating an ensemble of model variants that enables principled uncertainty estimation without requiring costly retraining or extensive parameter updates. 
Building on this, we decompose the \emph{total uncertainty} of each generated token into \emph{aleatoric uncertainty} (inherent randomness in the data) and \emph{epistemic uncertainty} (model uncertainty about its parameters), providing a theoretically grounded assessment of confidence across the generation process. We then aggregate these token-level uncertainties to evaluate entire reasoning responses, demonstrating both theoretical consistency with established uncertainty principles and practical utility in downstream applications.

Empirically, \ours enhances LLM reasoning in three key aspects: 
(i) token-level \emph{epistemic uncertainty} effectively identifies incorrect reasoning paths, outperforming baselines across three mathematical reasoning benchmarks, 
(ii) \ours excels at selecting high-quality solutions from multiple candidates, and 
(iii) it functions as an implicit reward to guide reasoning, improving accuracy when combined with off-the-shelf test-time-scaling algorithms~\cite{puri2025probabilistic}. 
In summary, our contributions are: 
\begin{itemize}[nosep]
    \item We introduce \ours, a training-free token-level uncertainty estimation approach for LLM reasoning through low-rank weight perturbation, providing a principled decomposition of uncertainties with proven theoretical properties. 
    \item We demonstrate that epistemic uncertainty can serve as a good metric to measure the quality of generated reasoning paths, consistently outperforming conventional confidence metrics across diverse mathematical reasoning tasks. 
    \item We demonstrate practical applications\footnote{We provide an implementation of our framework that is compatible with vLLM~\cite{kwon2023efficient} for efficient deployment.} of our uncertainty estimation framework: it improves reasoning performance through incorrect path detection, high-quality solution selection, and uncertainty-guided generation.
\end{itemize}


\section{Preliminaries}
\label{sec:preliminaries}
In this section, we first introduce the notation used in the remaining sections, and then review the key concepts of uncertainties~(\Secref{sec:preliminaries-ue}) and existing Bayesian LLMs for downstream adaptation~(\Secref{sec:preliminaries-bayes-lm}). 

\textbf{Notation.}\quad
In this paper, scalars are denoted by lowercase letters~($x$), vectors by lowercase bold-math letters~($\vx$), random vectors by lowercase boldface letters~($\rvx$), and matrices by uppercase boldface letters~($\mX$). 
We use $[m]=\{1,2,\cdots,m\}$ to denote the set of consecutive integer numbers from $1$ to $m$.
Following convention, we use $p$ for probability, $\E$ for expectation, $\gH$ for entropy, and $\gI$ for mutual information. 
Specifically, $\gH[\rvy|\rvx]$ denotes the conditional entropy between \emph{random variables} $\rvy$ and $\rvx$. We use $\gH[p(\rvy|\rvx=\vx)]$ to denote the \emph{predictive entropy} of the output variable conditioned on input $\vx$, with $\gH[p(\rvy|\vx)]$ as a shorthand notation when context is clear.

\subsection{Uncertainty Estimation of Long-Form Generation}
\label{sec:preliminaries-ue}

\textbf{Prediction with Bayesian Neural Networks.}\quad
Bayesian Neural Networks~(BNNs)~\cite{neal2012bayesian,hernandez2015probabilistic,gal2016dropout,blundell2015BBB,wang2016towards,NPN,lakshminarayanan2017simple,wang2020survey} predict responses and estimate their uncertainties using the variational distribution $q(\vtheta|\gD)$ {that approximates the true weight posterior $p(\vtheta|\gD)$}.
Given an input sequence $\vx = (x_1, \cdots, x_L) \in \mathcal{X}$, the probability of the output sequence $\vy = (y_1, \cdots, y_T) \in \mathcal{Y}$ is defined as marginalization over the parameters and estimated by Bayesian Model Averaging~(BMA) of size $M$:
\begin{equation}
\label{eq:pre-distribution}
    p(\vy|\vx) = \int p(\vy|\vx; \vtheta) \, q(\vtheta|\gD) \, d\vtheta \approx \frac{1}{M}\sum\nolimits_{m=1}^{M} p(\vy|\vx; \vtheta^{(m)}), \quad \vtheta^{(m)}\sim q(\vtheta | \gD).
\end{equation}

\textbf{Query-Level Uncertainty Estimation.}\quad
Established techniques of uncertainty estimation~\cite{gal2016uncertainty} mainly quantify the uncertainty of input $\vx$ (\emph{query-level uncertainty}) by
\begin{equation}
\label{eq:total-unc}
    \gH[p(\rvy | \vx)] = \mathbb{E}_{\vy\sim p(\rvy | \vx)}[-\log p(\vy | \vx)].
\end{equation}
In the context of BNNs~(\Eqref{eq:pre-distribution}), the predictive distribution of $\vy$ is the marginalized predictive distribution over the model parameters, and hence \Eqref{eq:total-unc} is defined as \emph{``total uncertainty''}~\cite{gal2016uncertainty, depeweg2017decomposition}.

A model's uncertainty about a specific input cannot be solely attributed to the randomness of the approximate posterior $q(\vtheta|\gD)$, which is input-agnostic. For instance, when faced with a query \emph{``Name a city in the UK?''}~\cite{yadkori2024believe}, even if an infinite amount of data is observed (eliminating the randomness of the model parameters), the uncertainty of this question remains high, as there are many correct candidate answers. 
Hence to distinguish different sources of uncertainty, \emph{total uncertainty} is decomposed into \emph{epistemic uncertainty} and \emph{aleatoric uncertainty}~\cite{gal2016uncertainty}:
\begin{equation}
\label{eq:unc-decompose}
    \underbrace{\vphantom{E_{q(\vtheta|\gD)}[\gH[p(\rvy | \vx; \vtheta)]]}\gH[p(\rvy | \vx)]}_{\text{Total Uncertainty}} = 
    \underbrace{\vphantom{E_{q(\vtheta|\gD)}[\gH[p(\rvy | \vx; \vtheta)]]}\E_{q(\vtheta|\gD)}[\gH[p(\rvy | \vx; \vtheta)]]}_{\text{Aleatoric Uncertainty}} + \underbrace{\vphantom{E_{q(\vtheta|\gD)}[\gH[p(\rvy | \vx; \vtheta)]]}\gI(\rvy ; \vtheta| \vx)}_{\text{Epistemic Uncertainty}}
.
\end{equation}
Here, \emph{aleatoric uncertainty} captures the intrinsic randomness in data and cannot be reduced even with more data observed. In contrast, \emph{epistemic uncertainty}, defined as the mutual information $\mathcal{I}(\rvy; \boldsymbol{\theta}| \vx)$ between $\rvy$ and $\bm{\theta}$, reflects the model's uncertainty about its own parameters, which can in principle be reduced by collecting more evidence. {We use $\gU(\rvy|\vx)$ defined in \Defref{def:query-unc} to denote any of the three uncertainties.}

\begin{definition}[\textbf{Query-Level Uncertainty}]
\label{def:query-unc}
Query-level uncertainty $\gU(\rvy|\vx)$ is the uncertainty of the predictive distribution $p(\rvy|\vx)$ given an input query $\vx$. Total Uncertainty (TU), Aleatoric Uncertainty (AU), and Epistemic Uncertainty (EU) in \Eqref{eq:unc-decompose} are all instances of query-level uncertainty.
\end{definition}

\textbf{Limitations of Query-Level Uncertainty.}\quad
{Using the chain rule for conditional entropy~\cite{cover1999elements}, the \emph{{query-level} uncertainty} estimation can be decomposed token-by-token as
\begin{align}
\label{eq:chain-rule}
    \mathcal{U}(\rvy| \vx) = \sum\nolimits_{t=1}^T \mathcal{U}(\rvy_t| \rvy_{<t}, \vx).
\end{align}
However, 
the uncertainty term $\mathcal{U}(\rvy_t| \rvy_{<t}, \vx)$ in \Eqref{eq:chain-rule} requires marginalization over the random variable $\rvy_{<t}$, which is \emph{(i) computationally intractable,} and \emph{(ii) only reflecting the quality of the input query.} 
Hence, these query-level uncertainties are not proper indicators for evaluating a concrete output response $\vy$.}

\subsection{Bayesian Large Language Models}
\label{sec:preliminaries-bayes-lm}

\textbf{Bayesian Low-Rank Adaptation.}\quad
For a pre-trained network layer with weight matrix $\mW_0$, Low-Rank Adaptation~(LoRA)~\cite{hu2022lora} optimizes the parameters within a constrained low-rank subspace. 
Specifically, the weight update matrix is modeled by $\Delta\mW = \mB \mA$, where $\Delta\mW \in \mathbb{R}^{m \times n}$, $\mB \in \mathbb{R}^{m \times r}$, $\mA \in \mathbb{R}^{r \times n}$, {and $r \ll \min(m, n)$.} 
The output $\vz\in\mathbb{R}^{m \times 1}$ of forwarding the input vector $\vh \in\mathbb{R}^{n \times 1}$ is then
\begin{align}
    \vz &= \mW_0\vh + \Delta\mW \vh = \mW_0\vh + \mB\mA \vh.
\end{align}
Leveraging LoRA's parameter efficiency, Bayesian LoRAs~\cite{yang2023bayesian,wang2024blob,shi2024training} aim to further integrate BNN's uncertainty estimation capabilities into LLMs without significant increasing memory complexity.
The key idea is to model $\mA$ and/or $\mB$ as approximate distributions of the true weight posterior.
The asymmetric Bayesianization, exemplified by \blob~\cite{wang2024blob} and \tfb~\cite{shi2024training}, models the elements of $\mA$ with independent Gaussian distributions while keeping $\mB$ deterministic. Specifically, we have
\begin{align}
\label{eq:bayesian_LoRA}
    q(\mA|\{\mM, \mOmega\}) &= \prod\nolimits_{ij}q(A_{ij}|M_{ij}, \Omega_{ij}) = \prod\nolimits_{ij}\gN(A_{ij}|M_{ij}, \Omega_{ij}^2),
\end{align}
where $\mM$ and $\mOmega$ share the same shape as $\mA$ and denote the mean and standard deviation of the random variable $\mA$, respectively. 
To estimate this distribution, \blob jointly trains the mean and covariance through the re-parameterization trick~\cite{wang2024blob}, while \tfb uses a simple training-free maximal variance searching technique by fixing the approximate distribution to the family of low-rank isotropic Gaussian distributions~\cite{shi2024training}.

\textbf{Limited Scope of Existing Bayesian LLMs.}\quad
Existing Bayesian LLMs have been primarily validated in downstream classification tasks of simple single- or multiple-choice problems, where uncertainty estimation is quantitatively assessed via the alignment of prediction confidence and accuracy~\cite{yang2023bayesian,balabanov2024uncertainty,wang2023lora,wang2024blob,shi2024training}.
However, these methods have not yet demonstrated effective generalization to long-form generation tasks, i.e., LLM reasoning. Therefore, our \ours, which estimates token-level uncertainties via weight perturbations, represents an initial step toward \emph{extending Bayesian LLMs to long-form generation}, an area where uncertainty estimation remains largely unexplored and technically challenging.

\section{\ours: Token-Level Uncertainty Estimation via Low-Rank Weight Perturbation}
\label{sec:method}

\Secref{sec:method-token-level-unc} introduces the key techniques of token-level uncertainty estimation.
\Secref{sec:method-sequence-unc} then details how token-level uncertainties can be aggregated for response-level uncertainty estimation, and describes the underlying theoretical foundation.
Finally, \Secref{sec:method-low-rank-weight-perturb} presents our low-rank weight perturbation as posterior approximation. 
\textbf{All proofs of propositions can be found in \appref{app:proof}.}

\subsection{Token-Level Uncertainties \textbf{in General}}
\label{sec:method-token-level-unc}

Given an approximate posterior $q(\vtheta|\gD)$, a fixed input query~$\vx\in\mathcal{X}$ and a {\emph{specific output response}} $\vy=(y_1,y_2,\ldots,y_T)\in\mathcal{Y}$ sampled from the base policy $p(\mathbf{y}|\vx)$, we denote the predictive distribution of the next token $y_t$ produced by marginalization over weights as
\begin{equation}
    \bar{p}(y_t|\vy_{<t},\vx)\triangleq \E_{\vtheta\sim q(\cdot|\gD)}[p(y_t|\vy_{<t},\vx;\vtheta)].
\end{equation}

\begin{assumption}[\textbf{Stepwise Posterior Sampling}]
\label{assump:stepwise}
We assume that the weights $\vtheta$ sampled from the approximate posterior $q(\cdot|\gD)$ are \emph{not shared} across decoding steps. Formally, the probability of a sequence is factorized as
\begin{equation}
    \bar{p}(\vy|\vx)\triangleq \prod\nolimits_{t=1}^T \bar{p}(y_t|\vx,\vy_{<t})
= \prod\nolimits_{t=1}^T \Bigl\{\E_{\vtheta_t\sim q(\cdot|\gD)}[p(y_t|\vx,\vy_{<t},\vtheta_t)]\Bigr\},
\end{equation}
instead of adopting the joint formulation
\begin{equation}
    \bar{p}(\vy|\vx)\triangleq \E_{\vtheta\sim q(\cdot|\gD)}[p(\vy|\vx,\vtheta)].
\end{equation}
\end{assumption}

{While both are valid probabilistic models, the joint formulation is incompatible with the autoregressive decoding mechanism of LLMs. Hence, we adopt the stepwise formulation in Assumption~\ref{assump:stepwise}. To validate this assumption, we further conduct an ablation study comparing the stepwise formulation in Assumption~\ref{assump:stepwise} with the joint formulation, and report the results in~\appref{app:ablation-assump}.}

Given an input $\vx$ and a partial output $\vy_{<t}$, for the time step $t$, we have the following three uncertainties:
\begin{itemize}[nosep]
    \item \textbf{Total Uncertainty~(TU)} is the entropy of random variable $\rvy_t$ conditioned on $\vx$ and $\vy_{<t}$: 
    \begin{equation}
        \operatorname{TU}(\rvy_t | \vy_{<t}, \vx) \triangleq \gH[\bar{p}(\rvy_t|\vy_{<t},\vx)] = 
        -\sum\nolimits_{y_t\in\gV}\bar{p}(y_t|\vy_{<t},\vx)\log \bar{p}(y_t|\vy_{<t},\vx),
        \label{eq:unc-token-tu}
    \end{equation}
    \item \textbf{Aleatoric Uncertainty~(AU)} is the expectation of entropy of random variable $\rvy_t$ over the weights $\vtheta$ sampled from the approximate posterior $q(\cdot|\gD)$
    as in \Eqref{eq:unc-decompose}:
    \begin{equation}
        \operatorname{AU}(\rvy_t | \vy_{<t}, \vx) \triangleq \E_{\vtheta\sim q(\cdot|\gD)}\big[\gH[p(\rvy_t | \vy_{<t},\vx; \boldsymbol{\theta})]\big],
    \label{eq:unc-token-au}
    \end{equation}
    \item \textbf{Epistemic Uncertainty~(EU)} is the difference between TU and AU: 
    \begin{align}
        \operatorname{EU}(\rvy_t | \vy_{<t}, \vx)
        \triangleq \operatorname{TU}(\rvy_t | \vy_{<t}, \vx) - \operatorname{AU}(\rvy_t | \vy_{<t}, \vx) 
        = \mathcal{I}(\rvy_t ; \boldsymbol{\theta}| \vy_{<t},\vx),
        \label{eq:unc-token-eu}
    \end{align}
\end{itemize}
where $\mathcal{V}$ is the vocabulary and all the expectations are estimated with BMA.

\subsection{\textbf{Token-Level} Uncertainty for \textbf{Response-Level} Uncertainty Estimation}
\label{sec:method-sequence-unc}

\begin{definition}[\textbf{Response-Level Uncertainty}]
\label{def:response-unc}
Given the token-level uncertainties $\mathcal{U}(\rvy_t|\vy_{<t}, \vx)$ defined in \Eqref{eq:unc-token-tu}-\ref{eq:unc-token-eu}, we define \emph{response-level uncertainty} as their cumulative sum across all tokens in the output sequence:
\begin{equation}
    \tilde{\mathcal{U}}(\vy|\vx) \triangleq \sum\nolimits_{t=1}^T \mathcal{U}(\rvy_t|\vy_{<t}, \vx),
    \label{eq:unc-long-estimation}
\end{equation}
where $\mathcal{U}$ can denote any of the considered uncertainty measures (TU, AU, or EU in~\eqnref{eq:unc-token-tu}-\ref{eq:unc-token-eu}).
\end{definition}

\begin{proposition}[\textbf{Response-Level Uncertainty as an Unbiased Estimator of Query-Level Uncertainty}]
Given an input query $\vx$, let $\vy \sim p(\rvy|\vx)$ be a generated sample of length $T$. 
Then the response-level uncertainty $\tilde{\mathcal{U}}$ (\defref{def:response-unc}) is an \textbf{unbiased estimator} of the query-level uncertainty $\mathcal{U}$ (\defref{def:query-unc}), i.e.,
\begin{align}
\label{eq:response-unc}
\mathbb{E}_{\vy \sim p(\rvy|\vx)}[\tilde{\mathcal{U}}(\vy|\vx)] = \mathcal{U}(\rvy|\vx).
\end{align}
\label{prop:unbiased}
\end{proposition}
\vspace{-5mm}
\begin{proposition}[\textbf{Token-Level and Response-Level Uncertainty}]
Given an input query $\vx$, let $\vy \sim p(\rvy|\vx)$ be a generated sample of length $T$. Let $\mathcal{U}(\rvy_t | \vy_{<t}, \vx)$ denote the token-level uncertainty as defined in~\Eqref{eq:unc-token-tu}-\ref{eq:unc-token-eu}, with $\tilde{\mathcal{U}}(\rvy | \vx)$ as the corresponding response-level uncertainty (\defref{def:response-unc}). 
Our token-level uncertainty is equivalent to the response-level uncertainty when $T = 1$: 
\begin{align}
    \tilde{\mathcal{U}}(\vy_1 | \vx) = \mathcal{U}(\rvy_1 |\vx).
\end{align}
\label{prop:consistency}
\end{proposition}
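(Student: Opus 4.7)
The plan is to verify the statement by direct unpacking of the two definitions and then checking the claim for each of the three uncertainty types (TU, AU, EU) in turn. The essential observation is that \Eqref{eq:unc-long-estimation} was built precisely so that for a single-token output it should agree with the classical sequence-level uncertainty of \Eqref{eq:unc-decompose}; there is no real probabilistic content to prove, only a definitional collapse. So the proof will be short, and the only care required is to make sure the conditioning sets match correctly.

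First I would substitute $T=1$ into the length-normalized aggregation in \Eqref{eq:unc-long-estimation}. The sum then contains a single summand indexed by $t=1$, and the normalization factor $1/T$ becomes $1$. The prefix $\vy_{<1}$ is the empty string, so conditioning on it is vacuous. Thus the right-hand side becomes $\gU(\rvy_1 \mid \vx)$ with the token-level definition instantiated at $t=1$.

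Next I would check each uncertainty type separately to show the instantiated token-level quantity equals the corresponding sequence-level quantity from \Eqref{eq:unc-decompose}. For TU, \Eqref{eq:unc-token-tu} at $t=1$ gives $\gH[\bar{p}(\rvy_1\mid\vx)]$, and since $\bar p(\rvy_1\mid\vx) = \E_{\vtheta\sim q(\cdot\mid\gD)}[p(\rvy_1\mid\vx;\vtheta)]$ coincides with the marginal predictive $p(\rvy\mid\vx)$ of \Eqref{eq:pre-distribution} for a length-one output, the two entropies agree. For AU, \Eqref{eq:unc-token-au} at $t=1$ yields $\E_{q(\vtheta\mid\gD)}[\gH[p(\rvy_1\mid\vx;\vtheta)]]$, which is exactly the aleatoric term of \Eqref{eq:unc-decompose}. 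For EU, \Eqref{eq:unc-token-eu} at $t=1$ yields $\gI(\rvy_1;\vtheta\mid\vx)$, which matches the epistemic term of \Eqref{eq:unc-decompose}; alternatively, one can just subtract the previous two identities.

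The main (and really only) obstacle is a notational one: making sure that the ``input-only'' uncertainties $\gU(\rvy\mid\vx)$ appearing in \Eqref{eq:unc-decompose} are being identified with the token-level uncertainties evaluated at $t=1$ without double-counting the empty prefix or mis-indexing the marginalization over $\vtheta$. Once these are aligned, the proposition is immediate. I would close by noting that this property is stated in the single-token form standard in prior Bayesian-LoRA work on classification~\cite{yang2023bayesian,wang2024blob}, thereby confirming consistency of our aggregation with existing sequence-level formulations.
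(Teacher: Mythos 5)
Your proposal is correct and follows essentially the same route as the paper's proof: substitute $T=1$ into \Eqref{eq:unc-long-estimation}, note the single summand with vacuous empty-prefix conditioning, and conclude $\bar{\gU}(\rvy|\vx)=\gU(\rvy_1|\vx)$. Your additional per-type check that the $t=1$ instances of TU, AU, and EU coincide with the terms of \Eqref{eq:unc-decompose} is a harmless (and slightly more explicit) elaboration of the same definitional collapse the paper relies on.
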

\vspace{-5mm}
The two propositions above provide key connections between \Eqref{eq:unc-long-estimation} and existing uncertainty estimation theory~\cite{malininuncertainty, ling2024uncertainty}. 
{Proposition~\ref{prop:unbiased} shows that $\tilde{\mathcal{U}}(\rvy|\vx)$ is an unbiased estimator of the true query-level uncertainty $\gU(\rvy|\vx)$, ensuring its statistical consistency with the ideal formulation. }
Proposition~\ref{prop:consistency} confirms that when the sequence length $T = 1$, e.g., single-token prediction tasks such as multiple-choice QA~\cite{yang2023bayesian,wang2024blob}, 
the estimator exactly recovers the token-level uncertainty, demonstrating structural consistency. These results support the validity and reliability of our approximation.

\textbf{Advantages of Token-Level Uncertainty.}\quad
{Compared to Query-Level Uncertainty (\Defref{def:query-unc}), }
{token- and response-level uncertainties}
\emph{(i) avoid expensive marginalization over sequences} {(note the difference between $\rvy_{<t}$ in \Eqref{eq:chain-rule} and $\vy_{<t}$ in \Eqref{eq:response-unc})} while still \emph{(ii) capturing the expected uncertainty conditioned on the generated output response.} Moreover, since $\mathcal{U}(\rvy_t | \vy_{<t}, \vx)$ depends on the quality of the prefix $\vy_{<t}$, \emph{(iii) the estimate retains rich semantic information,} making it well-suited for entropy-based sequential 
decision-making~\cite{kuhn2023semantic, ye2025uncertainty} or hallucination detection~\cite{farquhar2024detecting,kossen2024semantic, ye2025uncertainty} in downstream tasks.
\subsection{Low-Rank Weight Perturbation as Approximation of Weight Posterior}
\label{sec:method-low-rank-weight-perturb}
Suppose that we have an LLM policy $p(\rvy|\rvx)$. To estimate the uncertainty of its output, we cast this model into a Bayesian framework by introducing weight perturbations. 
Due to the established advantages of \emph{efficiency}, \emph{performance preservation} of pre-perturbation model, and \emph{effectiveness} of uncertainty estimation~\cite{shi2024training}, we adopt a low-rank structure for the noise added to the model weights. 
Given a rank-$r$ weight matrix $\mW_0 \in \mathbb{R}^{m \times n}$ of a neural network layer, we first perform compact Singular Value Decomposition~(SVD)~\cite{klema1980singular}:
\begin{equation}
\label{eq:svd}
    \mW_0 = \mU \diag(\vd)\mV^\top,
\end{equation}
where $\vd \succ \vzero \in \mathbb{R}^{r \times 1}$ is the vector of singular values, 
{and $\mU \in \mathbb{R}^{m \times r}$ and $\mV \in \mathbb{R}^{n \times r}$ both contain orthonormal columns, i.e., $\mU^\top\mU=\mV^\top\mV=\mI_{r}$.}
To ensure computational efficiency, we introduce a low-rank noise matrix $\vepsilon\in\mathbb{R}^{n\times r^{\prime}}$ whose rank $r^\prime \ll r$ is significantly smaller than the rank of weight matrix, and whose entries are sampled i.i.d. from a Gaussian distribution of standard deviation of $\sigma_q$, which we refer to as \emph{perturbation strength}, i.e., $\epsilon_{ij} \sim \mathcal{N}(0, \sigma_q^2), \forall i\in[n], j\in[r^\prime]$.
The perturbed weight matrix is then constructed as
\begin{equation}\label{eq:weight-perturb}
    \mW = \mW_0 + \mU^\prime \vepsilon^\top,
\end{equation}
where the matrix $\mU^\prime$ contains the top-$r^\prime$ columns of $\mU$. 
This perturbation transforms the deterministic $\mW_0$ to a variational low-rank isotropic Gaussian distribution $\mW$~\cite{wang2024blob,shi2024training}: 
    \begin{equation}
    \begin{aligned}
        q(\vectorize(\mW)|\sigma_q) &= \gN(\vectorize(\mW)|\vmu_q, \mSigma_q), \\
        \text{where }\quad 
        \vmu_q &= \vectorize(\mW_0), \\
        \mSigma_q &= \sigma_q^2  \mI_n \otimes 
        \begin{bmatrix}
            \mI_{r'} & \\
            & \mathbf{0}_{m-r'}
        \end{bmatrix}.
    \end{aligned}
    \label{eq:pos-mean-var}
    \end{equation}
Let $\bm{\theta}$ denote the collection of all perturbed weight matrices across the model. By assuming the statistical independence among layers, the overall approximate posterior becomes
\begin{equation}
    q(\vtheta | \sigma_q) = \prod\nolimits_i q(\vectorize(\mW^i) | \sigma_q).
\label{eq:post-theta}
\end{equation}
\textbf{Utilizing the Approximate Weight Posterior $q(\vtheta|\sigma_q)$.}\quad
Notably, while we leverage the variational posterior formulation of \Eqref{eq:post-theta} to quantify uncertainty~(detailed in \Secref{sec:method-token-level-unc}), 
we use only the mean weights $\mW_0$ for decoding of each step rather than BMA as in \Eqref{eq:pre-distribution}. This approach allows for a controlled study of the effects of uncertainty estimation itself, separate from the effects of BNNs.
\textbf{For the complete algorithmic description and overview, please refer to \appref{app:algorithm}.} 

\section{Experiments}
\label{sec:experiments}

This section presents practical applications of our \ours for LLM reasoning.
\textbf{For additional experimental results, please refer to \appref{app:experiments}.}



\textbf{Datasets.}\quad We run our main experiments on three mathematical reasoning benchmarks of varying difficulty levels:
\textbf{GSM8K}~\cite{cobbe2021gsm8k} (grade-school arithmetic problems), \textbf{MATH500}~\cite{lightman2023let} (challenging high school/college mathematics competition problems), and 5,000-example subset of \textbf{DeepScaleR}~\cite{luo2025deepscaler} (high-difficulty problems from diverse sources). 
For these complex math problems, LLMs often need to perform multi-step reasoning~\cite{wei2022chain, yao2023tree, zhou2023language} 
to reach the final answer. These tasks inherently involve long-form generation, therefore well-suited for evaluation of uncertainty estimation methods. 

To assess the generalization of \ours beyond mathematical reasoning, we further evaluate \ours on five non-math long-form generation tasks, spanning \textbf{logical reasoning}, \textbf{code generation}, and \textbf{truthfulness evaluation}. 
For logical reasoning, we use three tasks from Reasoning Gym~\cite{stojanovski2025reasoning}: \emph{Zebra Puzzles}, \emph{Leg Counting}, and \emph{Color-Cube Rotation}. 
For code generation, we evaluate on the \textbf{HumanEval}~\cite{chen2021evaluating} benchmark, a widely adopted standard for functional code synthesis. 
For truthfulness, we use the \textbf{FactScore}~\cite{min2023factscore} dataset, which measures factual consistency by decomposing generated outputs into atomic facts; we follow prior work and use \texttt{GPT-5-mini} as both the fact annotator and judge.


\textbf{Models.}\quad We evaluate our \ours using models from two open-source LLM families: \texttt{Llama} (\texttt{3.2-1B-Instruct} and \texttt{3.1-8B-Instruct})~\cite{grattafiori2024llama} and \texttt{Qwen} (\texttt{2.5-3B-Instruct} and \texttt{2.5-7B-Instruct})~\cite{team2024qwen2}. These models represent recent advances in open-source instruction tuning and provide a practical balance between capability and efficiency. 
Their differing \textbf{model scales} \textbf{and architectural families} further enable us to examine the consistency of uncertainty estimation across both model sizes and model types.

\textbf{Implementation of our \ours.}\quad
We estimate token-level uncertainties by applying random perturbations as in \Eqref{eq:weight-perturb} to the query and key weight matrices~($\mW^Q,\mW^K$)~\cite{vaswani2017attention}
in all the attention layers of LLMs~\cite{hu2022lora,yang2023bayesian,wang2024blob,shi2024training}. {For more details, please refer to \appref{app:implementation-unc}.}


\subsection{Do \ours's Uncertainties Accurately Reflect Response Quality?}
\label{sec:expriments-poc-study}
This section assesses if our \ours's uncertainties reflect response quality in math reasoning tasks.

\vspace{-0.5em}
\subsubsection{\ours's Uncertainties and Question Difficulty}
\begin{figure}[!t]
    \centering
    \vspace{-1.5em}
    \includegraphics[width=\linewidth]{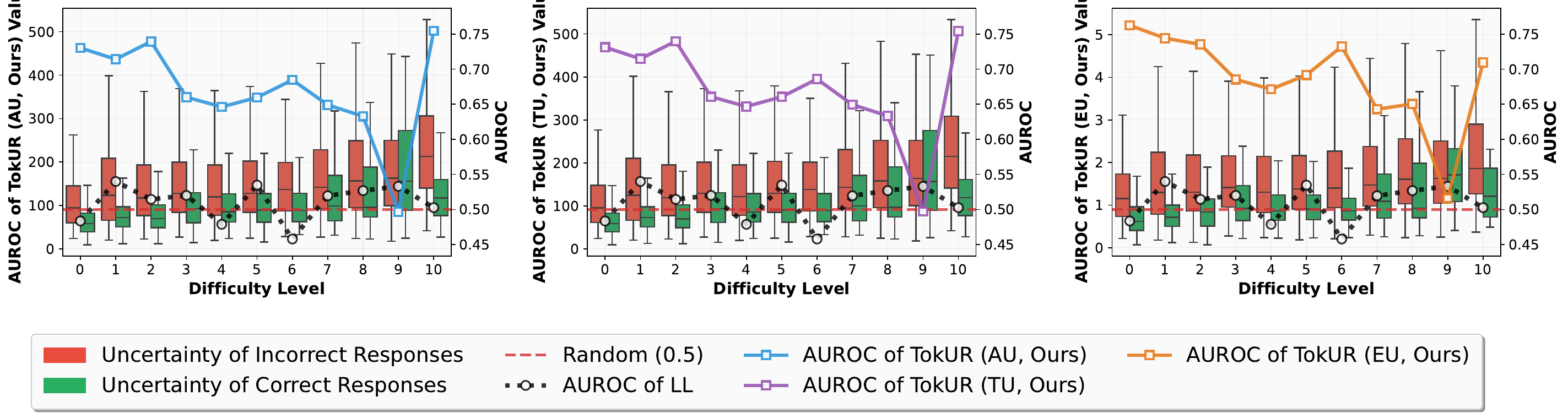}
    \vspace{-1em}
    \caption{
        \textbf{Distribution of \ours's Uncertainty Scores and AUROC across Different Difficulty Levels,} applied to \texttt{Llama-3.2-1B-Instruct}. {\textbf{Left:} \ours~(AU, Ours); \textbf{Middle:} \ours~(TU, Ours); \textbf{Right:} \ours~(EU, Ours).}
    }
    \vspace{-1.5em}
    \label{fig:uncertainty-distribution-correlation}
\end{figure}
\textbf{Experimental Setting.}\quad
To better understand the relationships among uncertainty estimates, question difficulty, and their ability to distinguish correct from incorrect responses, we sample a subset of math questions from math-orz~\cite{hu2025open}. {A question's difficulty level is determined by the number of failed attempts out of 10 when using the \texttt{Qwen2.5-3B-Instruct} model. A difficulty level of 0 means the model solved the question every time, while a level of 10 indicates it failed on every attempt.} We sample 500 questions per difficulty level, yielding a 5,500-question dataset. We then prompt \texttt{Llama-3.2-1B-Instruct} to solve each question with greedy decoding and apply \ours to compute uncertainties for both correct and incorrect responses across difficulty levels. \Figref{fig:uncertainty-distribution-correlation} summarizes the results. 

{\textbf{Results.}\quad
\ours's uncertainty estimates remain positively correlated with question difficulty: for all three types of uncertainty (AU, TU, and EU), incorrect responses consistently exhibit higher uncertainty than correct ones across difficulty levels. In terms of discriminative power, AUROC values are consistently above random (0.5), confirming that \ours provides useful signals for distinguishing correct from incorrect reasoning. Yet, AUROC tends to decrease as difficulty increases, especially in the mid-to-high range (levels 7–9), showing that uncertainty estimates become less reliable at separating outcomes on challenging tasks. Interestingly, the AUROC score shows a slight increase at the highest difficulty level (10). This is likely a result of the imbalanced data distribution (mostly incorrect), where the model consistently produces high uncertainty, which causes a misleading high metric value.}


\vspace{-0.8em}
\subsubsection{\ours for Incorrect Reasoning Path Detection}
\label{sec:unc-detect}
\textbf{Experimental Setting.}\quad The preliminary study demonstrates that our \ours's uncertainty estimation can reflect the quality of generated responses, with lower uncertainty generally associated with better outputs.  One important application of uncertainty estimation is hallucination detection in LLMs~\cite{farquhar2024detecting,kossen2024semantic, ye2025uncertainty}. In this context, we treat uncertainty as a scoring function to identify hallucinated (incorrect) responses for long-form reasoning tasks. 
We adopt three metrics: Area Under the Receiver Operating Characteristic Curve~(\textbf{AUROC}), Area Under the Precision-Recall Curve~(\textbf{AUPRC}), and \textbf{Top-50\% ACC (ACC*)}~\cite{farquhar2024detecting, ye2025uncertainty, hanley1982meaning, boyd2013area}. AUROC and AUPRC measure the overall power of uncertainty scores in distinguishing correct from incorrect responses. In addition, we report Top-50\% ACC, defined as the accuracy of the top 50\% samples ranked by the corresponding score. This metric reflects the model’s ability to prioritize higher-quality generations under a fixed budget.
We repeat the experiments with three different random seeds to obtain the mean and standard deviation across runs.

\begin{table*}[t]
\caption{
    \textbf{{Performance of Uncertainty Estimation Methods} for Incorrect Reasoning Path Detection.} AUROC, AUPRC, and ACC$^*$ are all reported as percentage~(\%).
    {\textbf{``ISO?''} indicates whether the method utilizes \textbf{I}nternal \textbf{S}ignal \textbf{O}nly for uncertainty estimation.}
    {We include the accuracy of CoT (i.e., greedy decoding with Chain-of-Thought prompting) in the first row for reference.}
    \textbf{Boldface} and \underline{underlining} denote the best and the second-best performance, respectively. 
}
\vspace{-1em}
\label{tab:uncertainty-single-greedy}
\begin{center}
\resizebox{1\linewidth}{!}{%
\setlength{\tabcolsep}{6pt}
\begin{tabular}{lc ccc c cc ccc}
\toprule

\multirow{2}{*}{\textbf{\makecell{Method}}} 

&\multirow{2}{*}{\textbf{\makecell{ISO?}}} 

& \multicolumn{3}{c}{\textbf{MATH500}} 
& \multicolumn{3}{c}{\textbf{GSM8K}}
& \multicolumn{3}{c}{\textbf{DeepScaleR}} \\
\cmidrule(lr){3-5} \cmidrule(lr){6-8} \cmidrule(lr){9-11}
&
& {AUROC} & {AUPRC} & {ACC$^*$} 
& {AUROC} & {AUPRC} & {ACC$^*$} 
& {AUROC} & {AUPRC} & {ACC$^*$} \\
\midrule
\multicolumn{11}{c}{\texttt{Llama-3.2-1B-Instruct}} \\
\midrule


CoT (Lower-Bound)
& - 
& -
& - 
& 25.60\scriptsize{$\pm$0.00}
& -
& - 
& 44.43\scriptsize{$\pm$0.00}
& -
& - 
& 14.25\scriptsize{$\pm$0.00}
\\

SE
& \red{\ding{55}}

& 47.29\scriptsize{$\pm$3.81}
& 25.71\scriptsize{$\pm$2.33}
& 24.13\scriptsize{$\pm$4.42}
& 50.64\scriptsize{$\pm$4.44}
& 45.09\scriptsize{$\pm$0.72}
& 42.62\scriptsize{$\pm$0.16}

& 46.30\scriptsize{$\pm$0.21}
& 12.94\scriptsize{$\pm$0.23}
& 12.58\scriptsize{$\pm$0.49}
\\

SAR
& \red{\ding{55}}
& 44.57\scriptsize{$\pm$2.04}
& 24.03\scriptsize{$\pm$2.53}
& 21.07\scriptsize{$\pm$1.62}

& 50.28\scriptsize{$\pm$0.97}
& 43.24\scriptsize{$\pm$0.89}
& 43.95\scriptsize{$\pm$0.77}

& 43.14\scriptsize{$\pm$1.42}
& 12.34\scriptsize{$\pm$0.35}
& 11.14\scriptsize{$\pm$0.47}
\\
$U_{Ecc}$
& \red{\ding{55}}
& 48.75\scriptsize{$\pm$1.05} 
& 25.79\scriptsize{$\pm$1.83} 
& 25.20\scriptsize{$\pm$0.33} 
& 49.05\scriptsize{$\pm$0.46} 
& 60.02\scriptsize{$\pm$0.44} 
& 59.62\scriptsize{$\pm$0.22} 
& 48.68\scriptsize{$\pm$0.24} 
& 13.77\scriptsize{$\pm$0.29} 
& 14.23\scriptsize{$\pm$0.45}
\\

$U_{Deg}$ 
& \red{\ding{55}}
& 60.57\scriptsize{$\pm$2.31} 
& 36.32\scriptsize{$\pm$2.59} 
& 30.93\scriptsize{$\pm$0.94} 
& 66.60\scriptsize{$\pm$0.36} 
& 75.72\scriptsize{$\pm$0.36} 
& 71.99\scriptsize{$\pm$0.39} 
& 56.88\scriptsize{$\pm$0.54} 
& 18.04\scriptsize{$\pm$0.63} 
& 16.50\scriptsize{$\pm$0.39} 
\\
P(True)
& \textcolor{checkgreen}{\ding{51}}
& 54.38\scriptsize{$\pm$1.20} 
& 26.39\scriptsize{$\pm$1.26} 
& 27.60\scriptsize{$\pm$1.18}
& 56.64\scriptsize{$\pm$0.04}
& 48.22\scriptsize{$\pm$0.03}
& 48.92\scriptsize{$\pm$0.00}
& 59.58\scriptsize{$\pm$0.43}
& 17.48\scriptsize{$\pm$0.25}
& 17.52\scriptsize{$\pm$0.50}
\\
LLM-Check
& \textcolor{checkgreen}{\ding{51}}
& 56.41\scriptsize{$\pm$0.96}
& 27.01\scriptsize{$\pm$1.22}
& 31.33\scriptsize{$\pm$1.29}

& 71.01\scriptsize{$\pm$0.02}
& 61.29\scriptsize{$\pm$0.08}
& 59.54\scriptsize{$\pm$0.00}

& 55.76\scriptsize{$\pm$0.48}
& 14.55\scriptsize{$\pm$0.26}
& 17.30\scriptsize{$\pm$0.51}
\\
INSIDE
& \textcolor{checkgreen}{\ding{51}}
& 55.71\scriptsize{$\pm$4.69}
& 28.82\scriptsize{$\pm$4.05}
& 29.20\scriptsize{$\pm$4.33}

& 53.66\scriptsize{$\pm$0.92}
& 46.03\scriptsize{$\pm$0.23}
& 45.79\scriptsize{$\pm$1.25}

& 54.73\scriptsize{$\pm$0.82}
& 15.50\scriptsize{$\pm$0.48}
& 16.30\scriptsize{$\pm$0.35}
\\
PE
& \textcolor{checkgreen}{\ding{51}}
& 57.08\scriptsize{$\pm$0.89}
& 26.88\scriptsize{$\pm$1.05}
& 31.33\scriptsize{$\pm$0.82}
& 71.21\scriptsize{$\pm$0.03}
& 61.61\scriptsize{$\pm$0.08}
& 59.85\scriptsize{$\pm$0.00}

& 56.09\scriptsize{$\pm$0.46}
& 14.74\scriptsize{$\pm$0.23}
& 17.33\scriptsize{$\pm$0.92}
\\




LL
& \textcolor{checkgreen}{\ding{51}}
& 55.41\scriptsize{$\pm$0.54}
& 25.88\scriptsize{$\pm$0.87}
& 29.87\scriptsize{$\pm$0.82}
& 69.01\scriptsize{$\pm$0.03}
& 58.51\scriptsize{$\pm$0.09}
& 57.38\scriptsize{$\pm$0.00}
& 53.84\scriptsize{$\pm$0.47}
& 13.93\scriptsize{$\pm$0.23}
& 16.83\scriptsize{$\pm$0.48}
\\

Self-Certainty
& \textcolor{checkgreen}{\ding{51}}
& 71.17\scriptsize{$\pm$0.30} 
& 48.37\scriptsize{$\pm$0.50} 
& 38.13\scriptsize{$\pm$0.61}

& 73.41\scriptsize{$\pm$0.00} 
& 68.38\scriptsize{$\pm$0.00} 
& 61.38\scriptsize{$\pm$0.00}

& 71.93\scriptsize{$\pm$0.04} 
& 33.81\scriptsize{$\pm$0.08} 
& 21.76\scriptsize{$\pm$0.04}
\\
DeepConf
& \textcolor{checkgreen}{\ding{51}}
& 71.77\scriptsize{$\pm$0.12} 
& 46.00\scriptsize{$\pm$0.42} 
& 39.87\scriptsize{$\pm$0.46}

& \textbf{75.70\scriptsize{$\pm$0.00}}
& 69.72\scriptsize{$\pm$0.00} 
& \textbf{62.77\scriptsize{$\pm$0.00}}

& 71.65\scriptsize{$\pm$0.04} 
& 29.99\scriptsize{$\pm$0.05} 
& 22.00\scriptsize{$\pm$0.04}
\\



\ours~(TU, Ours)
& \textcolor{checkgreen}{\ding{51}}
& \textbf{80.64\scriptsize{$\pm$0.29}}
& \textbf{56.79\scriptsize{$\pm$0.74}}
& \textbf{44.67\scriptsize{$\pm$0.46}}

& \underline{75.07\scriptsize{$\pm$0.05}}
& \textbf{70.29\scriptsize{$\pm$0.07}}
& \underline{62.31\scriptsize{$\pm$0.00}}

& \textbf{83.55\scriptsize{$\pm$0.02}}
& \textbf{47.56\scriptsize{$\pm$0.04}}
& \textbf{25.71\scriptsize{$\pm$0.02}}
\\
\ours~(AU, Ours)
& \textcolor{checkgreen}{\ding{51}}
& \underline{80.61\scriptsize{$\pm$0.27}}
& \underline{56.73\scriptsize{$\pm$0.75}}
& \textbf{44.67\scriptsize{$\pm$0.46}}

& 75.03\scriptsize{$\pm$0.06} 
& \underline{70.22\scriptsize{$\pm$0.05}}
& 62.21\scriptsize{$\pm$0.18}

& \underline{83.52\scriptsize{$\pm$0.02}}
& \underline{47.48\scriptsize{$\pm$0.05}}
& \textbf{25.71\scriptsize{$\pm$0.02}}
\\

\ours~(EU, Ours)
& \textcolor{checkgreen}{\ding{51}}
& 79.74\scriptsize{$\pm$0.21} 
& 56.64\scriptsize{$\pm$0.41} 
& \underline{44.13\scriptsize{$\pm$0.83}}

& 71.79\scriptsize{$\pm$0.80} 
& 66.40\scriptsize{$\pm$1.02} 
& 59.74\scriptsize{$\pm$1.00}

& 82.87\scriptsize{$\pm$0.32} 
& 46.76\scriptsize{$\pm$0.38} 
& \underline{25.52\scriptsize{$\pm$0.11}}
\\


\midrule
\multicolumn{11}{c}{\texttt{Llama-3.1-8B-Instruct}} \\
\midrule

{CoT (Lower-Bound)}
& - 
& -
& - 
& {48.60\scriptsize{$\pm$0.00}}
& -
& - 
& {85.69\scriptsize{$\pm$0.00}}
& -
& - 
& {24.86\scriptsize{$\pm$0.00}}
\\

SE
& \red{\red{\ding{55}}}
& 62.93\scriptsize{{$\pm$0.90}} 
& 55.21\scriptsize{{$\pm$1.04}}
& 55.73\scriptsize{{$\pm$0.83}} 

& 55.61\scriptsize{$\pm$3.36}
& 87.16\scriptsize{$\pm$1.14}
& 86.77\scriptsize{$\pm$1.01}

& 67.68\scriptsize{{$\pm$0.94}}
& 35.18\scriptsize{{$\pm$1.00}}
& 35.55\scriptsize{{$\pm$0.37}}
\\

SAR
& \red{\ding{55}}
& 69.42\scriptsize{{$\pm$2.19}}
& 63.74\scriptsize{{$\pm$3.03}}
& 59.20\scriptsize{{$\pm$1.06}}
& 60.16\scriptsize{$\pm$2.22}
& 89.24\scriptsize{$\pm$0.74}
& 87.99\scriptsize{$\pm$0.81}

& 73.01\scriptsize{{$\pm$0.28}}
& 42.89\scriptsize{{$\pm$0.65}}
& 37.51\scriptsize{{$\pm$0.12}}
\\
$U_{Ecc}$
& \red{\ding{55}}
& 50.23\scriptsize{$\pm$2.23}
& 49.48\scriptsize{$\pm$2.44}
& 49.60\scriptsize{$\pm$2.04}

& 47.47\scriptsize{$\pm$2.15}
& 84.69\scriptsize{$\pm$0.89}
& 84.87\scriptsize{$\pm$1.17}

& 50.16\scriptsize{$\pm$0.66}
& 25.08\scriptsize{$\pm$0.18}
& 25.48\scriptsize{$\pm$0.53}
\\

$U_{Deg}$ 
& \red{\ding{55}}
& 58.62\scriptsize{$\pm$0.36}
& 57.69\scriptsize{$\pm$0.90}
& 53.47\scriptsize{$\pm$1.64}

& 67.22\scriptsize{$\pm$1.06}
& 92.24\scriptsize{$\pm$0.53}
& 92.62\scriptsize{$\pm$0.88}

& 59.14\scriptsize{$\pm$0.37}
& 32.64\scriptsize{$\pm$0.43}
& 29.75\scriptsize{$\pm$0.36}
\\
P(True)
& \textcolor{checkgreen}{\ding{51}}
& 33.41\scriptsize{$\pm$0.25}
& 36.05\scriptsize{$\pm$0.55}
& 35.33\scriptsize{$\pm$0.19}
& 41.94\scriptsize{$\pm$0.01}
& 82.19\scriptsize{$\pm$0.00}
& 82.77\scriptsize{$\pm$0.00}
& 33.64\scriptsize{$\pm$0.20}
& 18.06\scriptsize{$\pm$0.06}
& 16.23\scriptsize{$\pm$0.02}
\\
LLM-Check
& \textcolor{checkgreen}{\ding{51}}
& 57.41\scriptsize{{$\pm$0.44}} 
& 49.69\scriptsize{{$\pm$1.07}} 
& 52.80\scriptsize{{$\pm$1.38}} 

& 73.98\scriptsize{$\pm$0.01}
& 93.37\scriptsize{$\pm$0.01}
& 93.23\scriptsize{$\pm$0.00}

& 55.42\scriptsize{{$\pm$0.27}} 
& 26.46\scriptsize{{$\pm$0.19}} 
& 28.37\scriptsize{{$\pm$0.40}}
\\
INSIDE
& \textcolor{checkgreen}{\ding{51}}
& 62.94\scriptsize{{$\pm$1.72}}
& 55.06\scriptsize{{$\pm$3.19}}
& 57.33\scriptsize{{$\pm$1.01}}
& 58.86\scriptsize{$\pm$2.11}
& 87.44\scriptsize{$\pm$0.94}
& 88.21\scriptsize{$\pm$0.90}

& 67.05\scriptsize{{$\pm$0.49}}
& 33.83\scriptsize{{$\pm$0.42}}
& 34.13\scriptsize{{$\pm$0.10}}
\\
PE
& \textcolor{checkgreen}{\ding{51}}
& 57.98\scriptsize{$\pm$0.49}
& 49.72\scriptsize{$\pm$0.84}
& 53.07\scriptsize{$\pm$0.94}
& 74.03\scriptsize{$\pm$0.01}
& 93.37\scriptsize{$\pm$0.00}
& 93.23\scriptsize{$\pm$0.00}
& 55.90\scriptsize{$\pm$0.23}
& 26.80\scriptsize{$\pm$0.16}
& 28.65\scriptsize{$\pm$0.22}
\\

LL
& \textcolor{checkgreen}{\ding{51}}
& 55.36\scriptsize{$\pm$0.49}
& 47.24\scriptsize{$\pm$0.90}
& 51.07\scriptsize{$\pm$0.94}
& 72.21\scriptsize{$\pm$0.02}
& 92.64\scriptsize{$\pm$0.00}
& 92.46\scriptsize{$\pm$0.00}
& 52.82\scriptsize{$\pm$0.32}
& 24.48\scriptsize{$\pm$0.13}
& 26.85\scriptsize{$\pm$0.19}
\\

Self-Certainty
& \textcolor{checkgreen}{\ding{51}}
& 76.41\scriptsize{$\pm$0.61} 
& 76.22\scriptsize{$\pm$0.87} 
& 69.07\scriptsize{$\pm$0.83}

& 80.60\scriptsize{$\pm$0.11} 
& \underline{95.65\scriptsize{$\pm$0.03}}
& \underline{96.26\scriptsize{$\pm$0.09}}

& 76.72\scriptsize{$\pm$0.09} 
& 56.15\scriptsize{$\pm$0.30} 
& 39.03\scriptsize{$\pm$0.23}
\\

DeepConf
& \textcolor{checkgreen}{\ding{51}}
& 71.86\scriptsize{$\pm$0.70} 
& 69.57\scriptsize{$\pm$0.94} 
& 66.27\scriptsize{$\pm$1.15}

& \textbf{83.30\scriptsize{$\pm$0.07}}
& \textbf{96.23\scriptsize{$\pm$0.02}}
& \textbf{96.56\scriptsize{$\pm$0.09}}

& 73.05\scriptsize{$\pm$0.08} 
& 48.76\scriptsize{$\pm$0.10} 
& 37.48\scriptsize{$\pm$0.14}
\\

\ours~(TU, Ours)
& \textcolor{checkgreen}{\ding{51}}
& \underline{82.47\scriptsize{$\pm$0.47}}
& \underline{79.62\scriptsize{$\pm$0.33}}
& \textbf{74.00\scriptsize{$\pm$0.69}}

& \underline{81.01\scriptsize{$\pm$0.04}}
& 95.53\scriptsize{$\pm$0.05} 
& 95.54\scriptsize{$\pm$0.00}

& \textbf{85.33\scriptsize{$\pm$0.07}}
& \underline{65.25\scriptsize{$\pm$0.01}} 
& \textbf{43.91\scriptsize{$\pm$0.09}}
\\

\ours~(AU, Ours)
& \textcolor{checkgreen}{\ding{51}}
& 82.43\scriptsize{$\pm$0.48} 
& 79.56\scriptsize{$\pm$0.35} 
& \textbf{74.00\scriptsize{$\pm$0.69}}

& 80.97\scriptsize{$\pm$0.02} 
& 95.52\scriptsize{$\pm$0.03} 
& 95.49\scriptsize{$\pm$0.09}

& \underline{85.31\scriptsize{$\pm$0.07}}
& 65.20\scriptsize{$\pm$0.02} 
& \underline{43.89\scriptsize{$\pm$0.08}}
\\

\ours~(EU, Ours)
& \textcolor{checkgreen}{\ding{51}}
& \textbf{82.86\scriptsize{$\pm$0.42}}
& \textbf{81.35\scriptsize{$\pm$0.66}}
& \underline{72.40\scriptsize{$\pm$1.20}}

& 78.31\scriptsize{$\pm$1.58} 
& 94.91\scriptsize{$\pm$0.59} 
& 94.67\scriptsize{$\pm$0.77}

& 84.92\scriptsize{$\pm$0.28} 
& \textbf{65.57\scriptsize{$\pm$0.43}}
& \underline{43.89\scriptsize{$\pm$0.27}}
\\

\bottomrule
\end{tabular}
}
\vspace{-0.5em}
\end{center}
\end{table*}


\textbf{Baselines.}\quad 
We systematically categorize our baselines into two distinct types: (i) {those relying solely on the LLM's internal signals,} including {the most recent}
\textbf{\underline{Self-Certainty}}~\cite{kang2025scalable}, Deep Think With Confidence (\textbf{\underline{DeepConf}})~\cite{fu2025deep}, 
\textbf{LLM-Check}~\cite{sriramanan2024llm}, 
{Degree Matrix ($U_{\text{Ecc}}$ and $U_{\text{Deg}}$)~\cite{lin2023generating}}, and 
INternal States for hallucInation DEtection~(\textbf{INSIDE})~\cite{chen2024inside},
as well as classic
\textbf{\underline{P(True)}}~\cite{kadavath2022language}, 
Predictive Entropy~(\textbf{\underline{PE}})~\cite{malininuncertainty}, 
Log-Likelihood~(\underline{\textbf{LL}})~\cite{murray2018correcting}, 
and (ii) those leveraging external signals, such as an auxiliary {Natural Language Inference} model~\cite{he2020deberta}: Semantic Entropy (\textbf{SE})~\cite{kuhn2023semantic}, and {Shifting Attention to Relevance}~(\textbf{SAR})~\cite{duan2024shifting}. 
Note that, apart from the {\underline{five baselines with underlines}}~\cite{kang2025scalable,fu2025deep,kadavath2022language,malininuncertainty,murray2018correcting}, the others were originally designed for {{query}-level de-hallucination} in short-form QA tasks and are therefore not directly comparable to \ours; we include them for completeness (see \appref{app:baselines} for details). 

\textbf{Results.}\quad 
{As shown in \Tabref{tab:uncertainty-single-greedy}, our proposed \ours consistently outperforms baselines across AUROC, AUPRC, and ACC*. For example, on \texttt{Llama-3.2-1B-Instruct}, \ours (TU) achieves an AUROC of \textbf{80.64\%} and an AUPRC of \textbf{56.79\%} on MATH500, clearly surpassing all baselines. On the larger \texttt{Llama-3.1-8B-Instruct}, the improvements are also substantial: \ours (EU) attains \textbf{82.86\%} AUROC and \textbf{81.35\%} AUPRC on MATH500, establishing new state-of-the-art performance. These results highlight an important insight: \ours provides a reliable and scalable uncertainty estimation framework, achieving strong performance without relying on external signals.} \textbf{For results of \texttt{Qwen} models, please see \appref{app:unc-for-qwen}.}

\subsubsection{Experiments Beyond Mathematical Reasoning}
\textbf{Experimental Setting.}\quad In this section, we evaluate the generalization ability of \ours on domains beyond mathematical reasoning. For verifiable tasks such as Logical Reasoning and Code Generation, we report \textbf{AUROC}, \textbf{AUPRC}, and \textbf{Top-50\% ACC (ACC*)}, following the protocol described in \Secref{sec:unc-detect}. For the Truthfulness task, we report \textbf{Top-50\% Precision (Precision*)} based on uncertainty-ranked responses. Due to the increased difficulty of these tasks, all evaluations are conducted using \texttt{Qwen-2.5-3B-Instruct}.

\textbf{Results.}\quad The results are summarized in \Tabref{tab:logical-reasoning} (logical reasoning) and \Tabref{tab:code-factscore} (code generation and truthfulness). Across all non-mathematical tasks, \ours consistently ranks as the \textbf{best or second-best method}, demonstrating that it generalizes robustly to diverse long-form generation settings while maintaining strong performance beyond the mathematical reasoning domain.

\begin{table*}[t]
\caption{
\textbf{Performance of Uncertainty Estimation Methods on Logical Reasoning Tasks.}
We report AUROC, AUPRC, and ACC*, all reported as percentage (\%), for three logical reasoning tasks from Reasoning Gym.
\textbf{Boldface} and \underline{underlining} denote the best and second-best performance.
}
\vspace{-1em}
\label{tab:logical-reasoning}
\begin{center}
\resizebox{\linewidth}{!}{
\setlength{\tabcolsep}{6pt}
\begin{tabular}{l ccc ccc ccc}
\toprule
\multirow{2}{*}{\textbf{Method}} &
\multicolumn{3}{c}{\textbf{Zebra Puzzles}} &
\multicolumn{3}{c}{\textbf{Leg Counting}} &
\multicolumn{3}{c}{\textbf{Color Cube}} \\
\cmidrule(lr){2-4} \cmidrule(lr){5-7} \cmidrule(lr){8-10}
& AUROC & AUPRC & ACC*
& AUROC & AUPRC & ACC*
& AUROC & AUPRC & ACC* \\
\midrule
CoT (Lower-Bound)
& -
& -
& 33.67$\pm$\scriptsize{0.00}
& -
& -
& 35.00$\pm$\scriptsize{0.00}
& -
& -
& 26.00$\pm$\scriptsize{0.00} \\
PE               
& 44.70$\pm$\scriptsize{0.00} & 21.76$\pm$\scriptsize{0.00} & 24.00$\pm$\scriptsize{0.00}
& 38.87$\pm$\scriptsize{0.25} & 27.44$\pm$\scriptsize{0.05} & 26.00$\pm$\scriptsize{0.00}
& 46.73$\pm$\scriptsize{0.00} & 28.75$\pm$\scriptsize{0.00} & 22.00$\pm$\scriptsize{0.00} \\
LL               
& 42.15$\pm$\scriptsize{0.00} & 21.02$\pm$\scriptsize{0.00} & 22.00$\pm$\scriptsize{0.00}
& 37.92$\pm$\scriptsize{0.43} & 27.10$\pm$\scriptsize{0.13} & 32.67$\pm$\scriptsize{1.15}
& 46.36$\pm$\scriptsize{0.00} & 26.01$\pm$\scriptsize{0.00} & 24.00$\pm$\scriptsize{0.00} \\
Self-Certainty   
& 47.77$\pm$\scriptsize{0.00} & 22.95$\pm$\scriptsize{0.00} & 30.00$\pm$\scriptsize{0.00}
& 50.46$\pm$\scriptsize{0.30} & 32.88$\pm$\scriptsize{0.37} & 34.00$\pm$\scriptsize{0.00}
& 50.62$\pm$\scriptsize{0.00} & 30.80$\pm$\scriptsize{0.00} & 28.00$\pm$\scriptsize{0.00} \\
DeepConf         
& 42.41$\pm$\scriptsize{0.00} & 21.07$\pm$\scriptsize{0.00} & 26.00$\pm$\scriptsize{0.00}
& 42.48$\pm$\scriptsize{0.71} & 28.96$\pm$\scriptsize{0.32} & 30.00$\pm$\scriptsize{0.00}
& 49.01$\pm$\scriptsize{0.00} & 24.41$\pm$\scriptsize{0.00} & 24.00$\pm$\scriptsize{0.00} \\
\ours (TU, Ours) 
& \underline{71.38$\pm$\scriptsize{0.29}} & \underline{41.42$\pm$\scriptsize{0.38}} & \textbf{39.33$\pm$\scriptsize{1.15}}
& \underline{69.58$\pm$\scriptsize{2.05}} & \underline{48.16$\pm$\scriptsize{0.99}} & \underline{48.67$\pm$\scriptsize{1.15}}
& \underline{58.33$\pm$\scriptsize{2.86}} & \underline{31.14$\pm$\scriptsize{1.56}} & \underline{31.33$\pm$\scriptsize{1.15}} \\
\ours (AU, Ours) 
& \textbf{71.66$\pm$\scriptsize{0.35}} & \textbf{41.71$\pm$\scriptsize{0.44}} & \textbf{39.33$\pm$\scriptsize{1.15}}
& 69.51$\pm$\scriptsize{1.82} & 47.69$\pm$\scriptsize{0.70} & \underline{48.67$\pm$\scriptsize{1.15}}
& 56.57$\pm$\scriptsize{2.88} & 30.02$\pm$\scriptsize{1.36} & 28.00$\pm$\scriptsize{2.00} \\
\ours (EU, Ours) 
& 71.28$\pm$\scriptsize{0.38} & 41.20$\pm$\scriptsize{0.76} & \underline{37.33$\pm$\scriptsize{2.31}}
& \textbf{69.66$\pm$\scriptsize{2.51}} & \textbf{51.27$\pm$\scriptsize{2.07}} & \textbf{50.67$\pm$\scriptsize{2.31}}
& \textbf{60.08$\pm$\scriptsize{1.98}} & \textbf{33.71$\pm$\scriptsize{1.01}} & \textbf{32.00$\pm$\scriptsize{2.00}} \\

\bottomrule
\end{tabular}
}
\end{center}
\vspace{-2em}
\end{table*}

\begin{table*}[t]
\caption{
\textbf{Performance of Uncertainty Estimation Methods on Code Generation and Truthfulness Tasks.}
HumanEval results report AUROC, AUPRC, and ACC*, all reported as percentage (\%); FactScore reports Precision*(\%). 
\textbf{Boldface} marks best results.}
\vspace{-1em}
\label{tab:code-factscore}
\begin{center}
\resizebox{0.7\linewidth}{!}{%
\setlength{\tabcolsep}{10pt}
\begin{tabular}{l ccc c}
\toprule
\multirow{2}{*}{\textbf{Method}} &
\multicolumn{3}{c}{\textbf{HumanEval}} &
\textbf{FactScore} \\
\cmidrule(lr){2-4}
\cmidrule(lr){5-5}

& AUROC & AUPRC & ACC* & Precision* \\
\midrule
CoT (Lower-Bound)
& -
& -
& 74.39$\pm$\scriptsize{0.00}
& 52.43$\pm$\scriptsize{0.00}\\
PE               
& 60.52\scriptsize{$\pm$0.00} & 81.45\scriptsize{$\pm$0.00} & 78.05\scriptsize{$\pm$0.00} & 41.86\scriptsize{$\pm$0.02} \\
LL               
& 56.42\scriptsize{$\pm$0.00} & 79.48\scriptsize{$\pm$0.00} & 75.61\scriptsize{$\pm$0.00} & 45.03\scriptsize{$\pm$0.38} \\
Self-Certainty   
& 61.48\scriptsize{$\pm$0.00} & 80.27\scriptsize{$\pm$0.00} & \underline{80.49\scriptsize{$\pm$0.00}} & 40.10\scriptsize{$\pm$0.38} \\
DeepConf         
& \textbf{63.60\scriptsize{$\pm$0.00}} & \textbf{82.75\scriptsize{$\pm$0.00}} & \underline{80.49\scriptsize{$\pm$0.00}} & 41.96\scriptsize{$\pm$0.49} \\
\ours (TU, Ours) 
& \underline{63.15\scriptsize{$\pm$0.82}} & 81.59\scriptsize{$\pm$0.41} & \textbf{80.90\scriptsize{$\pm$0.70}} & \underline{60.35\scriptsize{$\pm$0.74}} \\
\ours (AU, Ours) 
& 60.50\scriptsize{$\pm$0.80} & \underline{81.75\scriptsize{$\pm$0.41}} & \textbf{80.90\scriptsize{$\pm$1.41}} & \textbf{61.53\scriptsize{$\pm$0.58}} \\
\ours (EU, Ours) 
& 60.64\scriptsize{$\pm$0.77} & 80.75\scriptsize{$\pm$0.48} & \underline{80.49\scriptsize{$\pm$0.00}} & 45.26\scriptsize{$\pm$0.54} \\
\bottomrule
\end{tabular}
}
\end{center}
\vspace{-1.5em}
\end{table*}

\subsection{Can \ours's Uncertainties Improve Generation Quality?}
\label{sec:expriments-unc-guide}



\begin{table*}[t]
\caption{
{\textbf{Performance of Uncertainty Estimation Methods for Test-Time Scaling.} 
\textbf{Boldface} and \underline{underlining} denote the best and the second-best performance, respectively.}
}
\centering
\vspace{-0.8em}
\resizebox{\textwidth}{!}{
\setlength{\tabcolsep}{7pt}
\begin{tabular}{c l l ccc ccc}
\toprule[0.12em]
\multirow{2}{*}{\textbf{Dataset}} & \multirow{2}{*}{\textbf{Score}} & \multirow{2}{*}{\textbf{Method}} 
& \multicolumn{3}{c}{\texttt{Llama-3.2-1B-Instruct}} 
& \multicolumn{3}{c}{\texttt{Llama-3.1-8B-Instruct}} \\
\cmidrule(lr){4-6} \cmidrule(lr){7-9}
& & & N=16 & N=64 & N=256 & N=16 & N=64 & N=256 
\\
\midrule
\multirow{13}{*}{\begin{tabular}{c}\textbf{GSM8K}\\(Pass@1: 44.43 / 85.69)\end{tabular}}
& \multirow{2}{*}{LL} 
& Maj@N 
& 47.10{\scriptsize $\pm$0.85} 
& 54.11{\scriptsize $\pm$0.52} 
& 58.89{\scriptsize $\pm$0.36} 
& 86.74{\scriptsize $\pm$0.62}
& 90.48{\scriptsize $\pm$0.48} 
& 91.01{\scriptsize $\pm$0.28} 
\\
& 
& WBoN  
& 47.10{\scriptsize $\pm$0.85}
& 54.15{\scriptsize $\pm$0.55} 
& 58.92{\scriptsize $\pm$0.37} 
& 86.74{\scriptsize $\pm$0.62} 
& 90.48{\scriptsize $\pm$0.49} 
& 91.00{\scriptsize $\pm$0.29} 
\\
\cmidrule(lr){2-9}

& \multirow{2}{*}{Self-Certainty} 
& Maj@N 
& 45.02{\scriptsize $\pm$0.92} 
& 52.61{\scriptsize $\pm$0.72} 
& 57.18{\scriptsize $\pm$0.53} 
& 80.02{\scriptsize $\pm$0.70} 
& 87.25{\scriptsize $\pm$0.49} 
& 90.05{\scriptsize $\pm$0.40} 
\\
& 
& WBoN  
& 45.02{\scriptsize $\pm$0.92} 
& 52.65{\scriptsize $\pm$0.70} 
& 57.22{\scriptsize $\pm$0.54}
& 80.02{\scriptsize $\pm$0.70} 
& 87.25{\scriptsize $\pm$0.50} 
& 90.05{\scriptsize $\pm$0.41} 
\\
\cmidrule(lr){2-9}

& \multirow{2}{*}{DeepConf} 
& Maj@N 
& 46.72{\scriptsize $\pm$0.89} 
& 53.50{\scriptsize $\pm$0.66} 
& 58.05{\scriptsize $\pm$0.44}
& 86.24{\scriptsize $\pm$0.66} 
& 90.34{\scriptsize $\pm$0.46} 
& 90.92{\scriptsize $\pm$0.28} 
\\
& 
& WBoN  
& 46.72{\scriptsize $\pm$0.89} 
& 53.47{\scriptsize $\pm$0.65}
& 58.10{\scriptsize $\pm$0.45} 
& 86.24{\scriptsize $\pm$0.66} 
& 90.32{\scriptsize $\pm$0.46} 
& 91.02{\scriptsize $\pm$0.00}
\\
\cmidrule(lr){2-9}

& \multirow{2}{*}{\ours (TU, Ours)} 
& Maj@N 
& \underline{50.29{\scriptsize $\pm$1.03}} 
& 57.18{\scriptsize $\pm$0.45} 
& 60.68{\scriptsize $\pm$0.49}
& \underline{87.68{\scriptsize $\pm$0.57}} 
& \underline{90.67{\scriptsize $\pm$0.45}} 
& 90.96{\scriptsize $\pm$0.36}
\\
& 
& WBoN  
& \underline{50.29{\scriptsize $\pm$1.03}} 
& \textbf{57.22{\scriptsize $\pm$0.45}} 
& \underline{60.71{\scriptsize $\pm$0.49}} 
& \underline{87.68{\scriptsize $\pm$0.57}} 
& 90.65{\scriptsize $\pm$0.46} 
& 90.98{\scriptsize $\pm$0.37} 
\\
\cmidrule(lr){2-9}

& \multirow{2}{*}{\ours (AU, Ours)} 
& Maj@N
& 50.20{\scriptsize $\pm$0.98} 
& \underline{57.21{\scriptsize $\pm$0.46}} 
& 60.70{\scriptsize $\pm$0.41} 
& 87.42{\scriptsize $\pm$0.66} 
& 90.60{\scriptsize $\pm$0.44} 
& 90.99{\scriptsize $\pm$0.32} 
\\
& 
& WBoN  
& 50.20{\scriptsize $\pm$0.98} 
& 57.19{\scriptsize $\pm$0.44} 
& \textbf{60.78{\scriptsize $\pm$0.42}} 
& 87.42{\scriptsize $\pm$0.66} 
& 90.57{\scriptsize $\pm$0.43} 
& 91.02{\scriptsize $\pm$0.00}
\\
\cmidrule(lr){2-9}

& \multirow{2}{*}{\ours (EU, Ours)} 
& Maj@N 
& \textbf{50.38{\scriptsize $\pm$0.92}} 
& 56.92{\scriptsize $\pm$0.60} 
& 59.88{\scriptsize $\pm$0.52} 
& \textbf{88.06{\scriptsize $\pm$0.57}} 
& \textbf{90.69{\scriptsize $\pm$0.47}} 
& \underline{91.07{\scriptsize $\pm$0.33}} \\
& 
& WBoN  
& \textbf{50.38{\scriptsize $\pm$0.92}} 
& 56.89{\scriptsize $\pm$0.54}
& 59.91{\scriptsize $\pm$0.58} 
& \textbf{88.06{\scriptsize $\pm$0.57}}
& \underline{90.67{\scriptsize $\pm$0.48}} 
& \textbf{91.09{\scriptsize $\pm$0.36}} 
\\

\midrule

\multirow{13}{*}{\begin{tabular}{c}\textbf{MATH500}\\(Pass@1: 25.60 / 48.60)\end{tabular}}
& \multirow{2}{*}{LL} 
& Maj@N 
& 26.42{\scriptsize $\pm$0.84} 
& 33.28{\scriptsize $\pm$0.97} 
& 38.56{\scriptsize $\pm$0.75} 
& 50.92{\scriptsize $\pm$1.77} 
& 59.36{\scriptsize $\pm$0.74} 
& 64.10{\scriptsize $\pm$0.61} 
\\

& 
& WBoN  
& 26.42{\scriptsize $\pm$0.84} 
& 33.30{\scriptsize $\pm$1.10}
& 38.58{\scriptsize $\pm$0.73} 
& 50.92{\scriptsize $\pm$1.77} 
& 59.46{\scriptsize $\pm$0.78}
& 64.02{\scriptsize $\pm$0.71} 
\\
\cmidrule(lr){2-9}

& \multirow{2}{*}{Self-Certainty} 
& Maj@N 
& 20.14{\scriptsize $\pm$1.14} 
& 29.12{\scriptsize $\pm$1.11} 
& 36.68{\scriptsize $\pm$0.83}
& 44.00{\scriptsize $\pm$1.82} 
& 55.56{\scriptsize $\pm$1.08}
& 62.66{\scriptsize $\pm$0.75}
\\

& 
& WBoN  
& 20.14{\scriptsize $\pm$1.14} 
& 29.16{\scriptsize $\pm$0.99} 
& 36.80{\scriptsize $\pm$0.80} 
& 44.00{\scriptsize $\pm$1.82} 
& 55.58{\scriptsize $\pm$1.06}
& 62.52{\scriptsize $\pm$0.53}
\\
\cmidrule(lr){2-9}

& \multirow{2}{*}{DeepConf} 
& Maj@N 
& 25.68{\scriptsize $\pm$1.38} 
& 33.30{\scriptsize $\pm$1.10} 
& 38.52{\scriptsize $\pm$0.43} 
& 49.88{\scriptsize $\pm$1.29} 
& 59.74{\scriptsize $\pm$1.17}
& 64.30{\scriptsize $\pm$0.63} 
\\
& 
& WBoN  
& 25.68{\scriptsize $\pm$1.38} 
& 32.44{\scriptsize $\pm$1.20} 
& 37.08{\scriptsize $\pm$0.78} 
& 49.88{\scriptsize $\pm$1.29} 
& 58.22{\scriptsize $\pm$1.00}
& 63.14{\scriptsize $\pm$0.55} 
\\

\cmidrule(lr){2-9}

& \multirow{2}{*}{\ours (TU, Ours)} 
& Maj@N 
& \underline{27.06{\scriptsize $\pm$0.94}} 
& \underline{33.76{\scriptsize $\pm$0.84}} 
& 39.18{\scriptsize $\pm$0.70} 
& \underline{51.26{\scriptsize $\pm$1.36}}
& 59.44{\scriptsize $\pm$1.31} 
& 63.86{\scriptsize $\pm$0.44} 
\\

& 
& WBoN  
& \underline{27.06{\scriptsize $\pm$0.94}} 
& 33.60{\scriptsize $\pm$0.82} 
& 39.20{\scriptsize $\pm$0.65} 
& \underline{51.26{\scriptsize $\pm$1.36}} 
& 59.44{\scriptsize $\pm$1.30}
& 63.84{\scriptsize $\pm$0.51} 
\\
\cmidrule(lr){2-9}

& \multirow{2}{*}{\ours (AU, Ours)} 
& Maj@N 
& \underline{27.06{\scriptsize $\pm$0.91}} 
& 33.64{\scriptsize $\pm$0.76} 
& 39.12{\scriptsize $\pm$0.72} 
& 51.16{\scriptsize $\pm$1.45} 
& 59.42{\scriptsize $\pm$1.16} 
& 64.00{\scriptsize $\pm$0.44} 
\\
& 
& WBoN  
& \underline{27.06{\scriptsize $\pm$0.91}} 
& 33.48{\scriptsize $\pm$0.73} 
& 39.10{\scriptsize $\pm$0.69} 
& 51.16{\scriptsize $\pm$1.45} 
& 59.44{\scriptsize $\pm$1.19}
& 63.92{\scriptsize $\pm$0.47}
\\
\cmidrule(lr){2-9}

& \multirow{2}{*}{\ours (EU, Ours)} 
& Maj@N 
& \textbf{28.28{\scriptsize $\pm$1.32}} 
& \textbf{35.44{\scriptsize $\pm$0.79}} 
& \textbf{39.44{\scriptsize $\pm$0.88}} 
& \textbf{52.40{\scriptsize $\pm$1.39}} 
& \underline{60.90{\scriptsize $\pm$0.93}} 
& \underline{65.32{\scriptsize $\pm$0.80}}
\\
& 
& WBoN  
& \textbf{28.28{\scriptsize $\pm$1.32}} 
& \textbf{35.44{\scriptsize $\pm$0.78}} 
& \underline{39.38{\scriptsize $\pm$0.87}} 
& \textbf{52.40{\scriptsize $\pm$1.39}}
& \textbf{61.04{\scriptsize $\pm$0.88}}
& \textbf{65.48{\scriptsize $\pm$0.75}}
\\

\bottomrule[0.12em]
\end{tabular}}
\vspace{-1.5em}
\label{tab:scaling-models}
\end{table*}

{In this section, we explore the direct application of \ours to reasoning tasks to enhance generation quality. Following previous works~\cite{fu2025deep}, we apply \ours to measure the confidence of reasoning traces generated from a question and aggregate them via voting to obtain a final solution. In addition, we investigate the possibility of utilizing \ours in an \emph{online} manner to dynamically guide the generation process itself. Further details of online method are provided in \ref{app:scaling-online}.}


\textbf{Baselines.}\quad 
{We adopt Log-Likelihood (\textbf{LL}) as a baseline, given its widespread use as a proxy for generation quality~\cite{manakul2023selfcheckgpt,rafailov2023direct,chen2024mallowspo}. 
In addition, we compare against \textbf{Self-Certainty}~\cite{kang2025scalable} and \textbf{DeepConf}~\cite{fu2025deep}, two recent uncertainty-driven approaches for test-time scaling. 
As our study emphasizes model self-awareness of the boundaries of its knowledge, we do not include baselines that rely on external reward models~\cite{guan2025rstar,puri2025probabilistic,beeching2024scalingtesttimecompute,uesato2022solving,lightman2023let}.}



{\textbf{Response Aggregation with Uncertainties.}\quad
We first rank all $N$ candidate responses using one of the scoring methods (\textbf{LL}, \textbf{Self-Certainty}, \textbf{DeepConf}, or \ours) and retain the top-$P\%$ candidates. We then employ two common aggregation strategies: \emph{Weighted Best-of-N} (\textbf{WBoN}) and \emph{Majority Voting} (\textbf{Maj@N})~\cite{brown2024large}. WBoN performs weighted voting by assigning weights to the retained candidates according to their scores, whereas Maj@N simply selects the most frequent response among them, regardless of scoring.
}

\textbf{Experimental Setting}.\quad 
We randomly sample 512 responses for each question in MATH500 and GSM8K with a decoding temperature of $\tau=0.8$. 
For each $N$, we first retain the \textbf{top-10\%} of samples ranked by their scores. From this subset, the final prediction is determined using either {Maj@N} or {WBoN}. Each experiment is repeated 10 times ({sample w/o replacement using offline records}).

\textbf{Results}.\quad
{As shown in \Tabref{tab:scaling-models}, accuracy consistently improves with larger $N$ across both GSM8K and MATH500. Our \ours-based selection methods achieve clear gains over all baselines, particularly in the low-sample regime ($N{=}16$), where they deliver up to 3–4 points of improvement. Notably, \ours (EU) attains the best overall performance on both datasets, with strong advantages in the challenging MATH500 benchmark. In addition, results for {Maj@N} and {WBoN} are similar, indicating that both aggregation strategies are similarly effective once the top candidates are identified.}

\section{Related Work}
\label{sec:related}
\textbf{Uncertainty Estimation of LLMs.}\quad
Uncertainty estimation in LLMs is gaining traction for improving model calibration in data-scarce adaptation tasks and for reducing hallucinations in text generation~\cite{liu2025uncertainty, vashurin2025benchmarking}. 
One prominent approach is Bayesian Adaptation, which combines Bayesian inference with low-rank adaptation~(LoRA)~\cite{hu2022lora} to approximate weight posterior distributions efficiently, avoiding the high computational cost of full Bayesian modeling~\cite{yang2023bayesian, wang2024blob, shi2024training}.
To estimate uncertainty in generation, two main lines of work have emerged. The first focuses on \emph{verbalized uncertainty,} where models are prompted to express confidence in natural language~\cite{lin2022teaching, kadavath2022language, tian2023just, kapoor2024large}. The second line includes \emph{logits-based methods,} which estimate uncertainty directly from the model’s output distributions~\cite{van2022mutual, renout, duan2024shifting, RainProof}.
In parallel, other approaches aim to refine these estimation strategies. For instance, \citet{malininuncertainty} investigates techniques for estimating epistemic uncertainty in structured prediction tasks, while semantic entropy~\cite{kuhn2023semantic} captures uncertainty by leveraging invariance in meaning across paraphrases. 
More recently, \citet{zhang2025cot} introduces a method that leverages the reasoning capabilities of LLMs to enhance uncertainty quantification, using chain-of-thought prompting to better reflect model confidence in multi-step tasks.
These works complement verbalized and logits-based methods by offering orthogonal perspectives on how uncertainty can be interpreted and measured. Beyond specific estimation methods, recent studies reassess broader principles of LLM uncertainty: \citet{kirchhof2025position} argues that uncertainty quantification for LLM agents requires rethinking foundational assumptions, while \citet{devic2025calibration} highlights the limitations of existing approaches and calls for more human-centered uncertainty interpretation.

\textbf{Uncertainty for Improving LLM Generation.}\quad
Uncertainty estimation for improving LLM generation, while not entirely novel, has been predominantly limited to approaches based on log-probability or its variants, 
{Self-Certainty~\cite{kang2025scalable} estimates confidence via KL divergence from a uniform distribution, DeepConf~\cite{fu2025deep} aggregates top-$K$ log-probabilities as scores. Beam search~\cite{lowerre1976harpy,sutskever2014sequence,freitag2017beam,xie2023self} selects higher-confidence sequences by retaining candidates with the largest cumulative log-probability.}
UAG~\cite{yin2024reasoning} leverages abrupt log-probability changes to select appropriate demonstrations for in-context learning~\cite{brown2020language}. 
UnCert-CoT~\cite{zhu2025uncertainty} alternates between greedy and Chain-of-Thought decoding based on log-probability scores. Complementing these, \citet{zhou2025theoretical} provides theoretical insights into how internal probabilities relate to self-consistency and reasoning errors in LLMs.
Our work differs fundamentally by estimating token-level uncertainties with rigorous theoretical foundations, representing a significant step toward extending Bayesian LLMs to long-form generation scenarios.

\textbf{Limitations.}\quad 
Our work is subject to several limitations.
First, although compatible with efficient deployment frameworks such as vLLM~\cite{kwon2023efficient}, repeated weight perturbation sampling during inference still poses efficiency challenges for real-time use.
Second, our token-level uncertainty aggregation may miss higher-level semantic or logical inconsistencies across multiple tokens or reasoning steps, limiting its utility in complex generation tasks.
Finally, the problem of high-variance estimation in our \ours remains unresolved, constraining reliability in real-world scenarios.

\section*{Acknowledgement}
\label{sec:acknowledgement}
We thank all reviewers, AC, and SAC for their valuable comments. HW is supported by Amazon Faculty Research Award, Microsoft AI \& Society Fellowship, NSF CAREER Award IIS-2340125, NIH grant R01CA297832, and NSF grant IIS-2127918. 





{
\bibliographystyle{iclr2026_conference}
\bibliography{ref}
}


\appendix
\clearpage
\section*{\LARGE Appendix}
\markboth{Appendix}{Appendix}
{In \appref{app:llm-disclosure}, we describe the role of large language models (LLMs) in our work.}
In \appref{app:algorithm}, we present the full algorithmic description of our method with low-rank weight perturbation.
In \appref{app:proof}, we provide detailed proofs for all propositions presented in the main paper. 
In \appref{app:implementation}, we provide our implementation details of the experiments, including: 
\begin{itemize}[nosep]
    \item {\textbf{implementation of our \ours}~(\appref{app:implementation-unc})},
    \item \textbf{dataset details}~(\appref{app:datasets}),
    \item \textbf{prompt templates} used in LLM reasoning~(\appref{app:prompt}),
    \item \textbf{baseline details}~(\appref{app:baselines}),
    \item and \textbf{evaluation metrics}~(\appref{app:parsing-metrics}).
\end{itemize}
Finally, in \appref{app:experiments}, we present additional empirical results, including:
\begin{itemize}[nosep]
    \item \textbf{preliminary study} on the uncertainty distributions produced by \ours~(\appref{app:preliminary-on-unc-dist}),
    \item additional results of \textbf{\texttt{Qwen}} models~(\appref{app:unc-for-qwen}),
    \item \textbf{detailed numerical results} of the test-time scaling~(\appref{app:scaling-offline}),
    \item {\textbf{online test-time scaling of \ours}~(\appref{app:scaling-online})},
    \item \textbf{an ablation study} on different components of our token-level uncertainties~(\appref{app:experiments-ablation}), 
    \item and \textbf{a case study} of our token-level uncertainties~(\appref{app:vis-token}).
\end{itemize}

\section{LLM Usage Disclosure}
\label{app:llm-disclosure}
{We used large language models (LLMs) solely to assist with polishing the writing of this paper, including improving grammar, clarity, and readability. The LLMs did not contribute to research ideation, experimental design, analysis, or the generation of scientific content. All technical contributions, claims, and conclusions are the authors’ own.}

\section{Algorithm Details}
\label{app:algorithm}
\begin{algorithm}
\caption{Low-Rank Weight Perturbation as Approximation of Weight Posterior.}
\label{app:al-low-rank}
\begin{algorithmic}[1]
\Input
\State The base model policy $p(\rvy|\rvx)$; 
\State The set of weight matrices to be Bayesianized $\{\mW_0^k\}_{k=1}^{N}$;
\State rank of noise matrix $r'$;
\State The perturbation strength $\sigma_q$.
\EndInput
\For{$i = 1$ to $N$}
\State $\mU,\text{diag}(\vd),\mV^\top \gets \text{SVD}(\mW_0^k)$. \Comment{\Eqref{eq:svd}}
\State $\mU'$ $\gets$ the first $r'$ columns of matrix $\mU$.
\State Sample noise matrix $\vepsilon\in\mathbb{R}^{n\times r'}$: $\epsilon_{ij}\sim N(0,\sigma_q)$.
\State Perturb the weight matrix: $\mW^k \gets \mW_0^k+\mU'\vepsilon^\top$. \Comment{\Eqref{eq:weight-perturb}}
\State Get the weight posterior: $q(\vectorize(\mW^k)|\sigma_q)$. \Comment{\Eqref{eq:pos-mean-var}}
\EndFor
\State \textbf{Output:}  The overall approximate posterior: $    q(\vtheta | \sigma_q) \gets \prod\nolimits_k q(\vectorize(\mW^k) | \sigma_q)$
\end{algorithmic}
\end{algorithm}

\begin{algorithm}
\caption{Particle Filtering for Inference-Time Scaling \cite{puri2025probabilistic}}
\label{app:al-pf}
\begin{algorithmic}[1]
\Input
\State The number of particles $N$;
\State A reward model $\hat{r}$;
\State A LLM $p_M$ and a prompt $c$.
\EndInput
\State Initialize $N$ particles $\{x_1^{i}\sim p_M(\cdot|c)\}_{i=1}^N$.
\State $t \gets 1$.
\While{not all particles stop}
\State Update rewards $\rvw=\{\hat{r}(x_{1:t}^{(1)}), \hat{r}(x_{1:t}^{(2)}), \ldots,\hat{r}(x_{1:t}^{(N)})\}$.
\State Compute softmax distribution $\vtheta=\text{softmax}(\rvw)$.
\State Sample indices $\{j_t^{(i)}\}_{i=1}^N \sim \mathbb{P}_t(j=i)=\vtheta_i$.
\State Update the set of particles as $\{x_{1:t}^{(j_t^{(I)})}\}_{i=1}^N$.
\State Transition $\{x_{t+1}^{i}\sim p_M(\cdot|c,x_{1:t}^{(i)})\}_{i=1}^N$.
\State $t\gets t+1$.
\EndWhile
\State \textbf{Output:}  The set of particles in the end.
\end{algorithmic}
\end{algorithm}


\section{Proof of Propositions}
\label{app:proof}

\begin{lemma}[\textbf{Definition of Conditional Entropy~\cite{cover1999elements}}]
\label{lemma-conditional-entropy}
Give $(\rvy, \rvx)\sim p(\rvy, \rvx)$, the conditional entropy $\gH(\rvy|\rvx)$ is defined as 
\begin{equation}
    \begin{aligned}
    \gH(\rvy|\rvx)&=\sum_{\vx\in\gX}p(\vx)\gH(\rvy|\vx)\\
    &= \mathbb{E}_{\vx\sim p(\rvx)}[\gH(\rvy|\vx)].
    \end{aligned}
\end{equation}

\end{lemma}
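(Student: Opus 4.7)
The plan is to recognize that this lemma is essentially a restatement of the textbook definition of conditional entropy from Cover \& Thomas, so the work reduces to verifying two elementary facts rather than producing a novel argument. First, I would recall that for each fixed $\vx \in \gX$, the entropy of the conditional distribution $p(\rvy|\vx)$ is the deterministic quantity $\gH(\rvy|\vx) = -\sum_{\vy} p(\vy|\vx) \log p(\vy|\vx)$, and by the standard definition, the conditional entropy $\gH(\rvy|\rvx)$ is obtained by weighting these quantities by the marginal $p(\vx)$. This establishes the first equality $\gH(\rvy|\rvx) = \sum_{\vx \in \gX} p(\vx) \gH(\rvy|\vx)$ directly from the definitional chain, with no probabilistic manipulation required.

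Next, I would observe that the second equality is simply the definition of expectation of the function $\vx \mapsto \gH(\rvy|\vx)$ with respect to the marginal $p(\rvx)$: for any function $f$, one has $\mathbb{E}_{\vx \sim p(\rvx)}[f(\vx)] = \sum_{\vx \in \gX} p(\vx) f(\vx)$ in the discrete setting. Specializing to $f(\vx) = \gH(\rvy|\vx)$ immediately yields the desired form, so chaining the two identities gives the full statement of the lemma.

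There is no substantive obstacle here; the only care needed is notational, namely to make clear that $\gH(\rvy|\vx)$ denotes the entropy of the conditional distribution $p(\rvy|\vx)$ treated as a deterministic functional of the realization $\vx$, as opposed to any ``random entropy'' object. If one wanted to be fully rigorous for continuous $\rvx$, I would replace the sum by a Lebesgue integral against the marginal law of $\rvx$; the argument is otherwise identical. In the broader arc of the paper, this lemma plays a purely supporting role by enabling the token-by-token chain-rule decomposition in Equation~\ref{eq:chain-rule} and the subsequent comparison with the length-normalized estimator in Equation~\ref{eq:unc-long-estimation}, so the goal of the proof is expository clarity rather than derivation of new content.
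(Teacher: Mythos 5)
Your proposal is correct and matches the paper's treatment: the paper states this lemma without proof, citing it as the textbook definition from Cover and Thomas, and your two-step unpacking (first equality is the definition of conditional entropy as the $p(\vx)$-weighted average of the conditional entropies, second equality is just the definition of expectation) is exactly the intended reading. No gap to report.
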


\begin{lemma}[\textbf{Chain rule of Conditional Entropy \cite{cover1999elements}}]
\label{lemma-chain-rule}
Let $\mX$ and $\mY$ be two random variables, then the conditional entropy of the joint distribution $\gH(\mX, \mY)$ can be decomposed as:
\begin{align}
    \gH(\mX, \mY) = \gH(\mX)+\gH(\mY|\mX)
\end{align}
\end{lemma}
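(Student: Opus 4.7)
The plan is to derive the identity directly from the definition of joint entropy by factoring the joint density through the product rule $p(\vx,\vy)=p(\vx)\,p(\vy|\vx)$ and then recognizing the two resulting sums as $\gH(\mX)$ and $\gH(\mY|\mX)$ respectively. Because this is the classical Cover--Thomas chain rule, the main task is simply bookkeeping; no new machinery beyond Lemma~\ref{lemma-conditional-entropy} is required.

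First, I would expand the joint entropy from its definition:
\begin{align*}
\gH(\mX,\mY) \;=\; -\sum_{\vx\in\gX}\sum_{\vy\in\gY} p(\vx,\vy)\,\log p(\vx,\vy).
\end{align*}
Next, I would apply the product rule $p(\vx,\vy)=p(\vx)\,p(\vy|\vx)$ inside the logarithm and use $\log(ab)=\log a+\log b$ to split the expression into two double sums, one involving $\log p(\vx)$ and the other involving $\log p(\vy|\vx)$.

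For the first double sum, I would marginalize $\vy$ out of $p(\vx,\vy)$ to obtain $p(\vx)$, which yields $-\sum_{\vx}p(\vx)\log p(\vx)=\gH(\mX)$. For the second double sum, I would factor $p(\vx,\vy)=p(\vx)\,p(\vy|\vx)$ once more, pull $p(\vx)$ outside the inner sum, identify the inner sum $-\sum_{\vy}p(\vy|\vx)\log p(\vy|\vx)$ as $\gH(\mY|\vx)$, and then invoke Lemma~\ref{lemma-conditional-entropy} to recognize $\sum_{\vx}p(\vx)\gH(\mY|\vx)=\gH(\mY|\mX)$. Combining the two pieces gives the claimed decomposition.

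The hardest part of this proof is essentially notational: keeping the marginal $p(\vx)$, the joint $p(\vx,\vy)$, and the conditional $p(\vy|\vx)$ straight through the algebraic manipulations, and making sure the application of Lemma~\ref{lemma-conditional-entropy} is done with the correct order of summation. There is no genuine mathematical obstacle, since every step is either a definitional expansion, a logarithm identity, or a marginalization.
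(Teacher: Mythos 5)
Your proposal is correct: expanding the joint entropy, factoring $p(\vx,\vy)=p(\vx)\,p(\vy|\vx)$ inside the logarithm, and identifying the two resulting sums as $\gH(\mX)$ and $\gH(\mY|\mX)$ (via Lemma~\ref{lemma-conditional-entropy}) is exactly the classical Cover--Thomas argument. The paper itself does not prove this lemma but simply cites \cite{cover1999elements}, so your write-up just supplies the standard derivation the paper relies on, and it is sound as stated.
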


\Lmmref{lemma-conditional-entropy}~\cite{cover1999elements} reveals the relationship between conditional entropy $\gH(\rvy
|\rvx)$ and the entropy derived from conditional probability distributions.
\Lmmref{lemma-chain-rule} lays the foundation for estimating the uncertainties of sequences. The two lemmas together give us the following proposition.

\begin{proposition}[\textbf{Decomposition of {Query}-Level Uncertainty, \Eqref{eq:chain-rule}}]
Suppose that we have an input sequence $\vx$ and a model policy $p(\rvy|\rx)$. The sequence-level uncertainty $\mathcal{U}(\rvy| \vx)$ can be decomposed token-by-token as:
\begin{align}
    \mathcal{U}(\rvy| \vx) = \sum\nolimits_{t=1}^T \mathcal{U}(\rvy_t| \rvy_{<t}, \vx),
\end{align}
where $\gU(\rvy_t|\vy_{<t}, \vx)$ is token-level uncertainty metric as defined in~\Eqref{eq:unc-token-tu} \textasciitilde ~\Eqref{eq:unc-token-eu}.
\label{prop:decompose}
\end{proposition}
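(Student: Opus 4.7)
The plan is to prove the decomposition separately for each of the three uncertainty measures (TU, AU, EU), reducing each to the chain rule of conditional entropy stated as \Lmmref{lemma-chain-rule}. The key observation is that on the left-hand side the uncertainty is with respect to the joint random variable $\rvy=(\rvy_1,\dots,\rvy_T)$ conditioned on the fixed input $\vx$, while on the right-hand side each term conditions on the random prefix $\rvy_{<t}$, so \Lmmref{lemma-conditional-entropy} must be invoked carefully to ensure the two sides are compared correctly.

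For \textbf{TU}, I would first view $\operatorname{TU}(\rvy \mid \vx) = \gH[\bar{p}(\rvy \mid \vx)]$ as the entropy of the joint distribution over the $T$ output tokens conditioned on the fixed prompt $\vx$. Applying \Lmmref{lemma-chain-rule} once gives $\gH[\bar p(\rvy_{1:T}\mid\vx)] = \gH[\bar p(\rvy_1\mid\vx)] + \gH[\bar p(\rvy_{2:T}\mid\rvy_1,\vx)]$, and iterating $T-1$ times produces $\sum_{t=1}^T \gH[\bar p(\rvy_t\mid\rvy_{<t},\vx)]$. By \Lmmref{lemma-conditional-entropy}, each term equals $\mathbb{E}_{\vy_{<t}\sim\bar p(\rvy_{<t}\mid\vx)}[\gH[\bar p(\rvy_t\mid\vy_{<t},\vx)]]$, which is precisely $\operatorname{TU}(\rvy_t\mid\rvy_{<t},\vx)$ as defined in \Eqref{eq:unc-token-tu} after taking the outer expectation, yielding the claim for TU.

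For \textbf{AU}, I would first apply the chain rule inside the expectation over $\vtheta$. For each fixed $\vtheta\sim q(\cdot\mid\gD)$, the same token-wise chain rule applied to $p(\rvy\mid\vx;\vtheta)$ gives $\gH[p(\rvy\mid\vx;\vtheta)] = \sum_{t=1}^T \gH[p(\rvy_t\mid\rvy_{<t},\vx;\vtheta)]$. Taking expectation over $\vtheta$ and swapping the finite sum with the expectation (by linearity) produces $\sum_{t=1}^T \mathbb{E}_{\vtheta}[\gH[p(\rvy_t\mid\rvy_{<t},\vx;\vtheta)]]$, which is the token-level AU summed over $t$. For \textbf{EU}, I would use the identity $\operatorname{EU} = \operatorname{TU} - \operatorname{AU}$ both at the sequence and token level, so the decomposition follows immediately by telescoping the two results already established.

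The main technical obstacle I anticipate is keeping the conditioning consistent: the sequence-level entropy is evaluated under the joint predictive $\bar p(\rvy\mid\vx)$, while the right-hand side appears to be phrased with the prefix $\rvy_{<t}$ as a random variable sampled from the model's own distribution. Care must be taken to state that the marginal of $\rvy_{<t}$ implicit in the chain rule is indeed $\bar p(\rvy_{<t}\mid\vx)$ (for TU/EU) or $p(\rvy_{<t}\mid\vx;\vtheta)$ (for AU inside the expectation), so that applying \Lmmref{lemma-conditional-entropy} produces the token-level quantities exactly as defined in \Eqref{eq:unc-token-tu}--\Eqref{eq:unc-token-eu}. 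Once that bookkeeping is settled, the argument reduces to two invocations of the two lemmas and a telescoping step for EU.
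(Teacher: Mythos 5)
Your proposal is correct and follows essentially the same route as the paper: TU and AU are decomposed by the chain rule for conditional entropy (Lemma~\ref{lemma-chain-rule}, applied per fixed $\vtheta$ for AU with the sum--expectation swap), and EU follows by subtracting the two; the paper's explicit EU derivation is just the expanded form of your $\operatorname{TU}-\operatorname{AU}$ telescoping, relying (as you implicitly do) on the footnoted assumption that $\bar{p}(\vy|\vx)$ factorizes into per-token marginalized predictives. Your remark about tracking whether the prefix law is $\bar{p}(\cdot|\vx)$ or $p(\cdot|\vx;\vtheta)$ is, if anything, more careful bookkeeping than the paper provides.
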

\begin{proof}
For Aleatoric Uncertainty (AU) and Total Uncertainty (TU) defined in~\Eqref{eq:unc-token-tu} and~\Eqref{eq:unc-token-au}, both are expressed in terms of entropy. Therefore, the decomposition of sequence-level uncertainty can be directly derived using the chain rule stated in the~\Lmmref{lemma-chain-rule}. 

For Epistemic Uncertainty (EU), also called \textit{mutual information} defined in~\Eqref{eq:unc-token-eu}, we proceed with the following derivation:
\begin{align}
     \gH({p}(\rvy|\vx)) =& \gH \Big ( \E_{p(\vtheta|\gD)}[p(\rvy_1 |\vx;\vtheta)] \cdot \Compactcdots \cdot \E_{p(\vtheta|\gD)}[p(\rvy_T |\rvy_{<T}, \vx;\vtheta)] \Big ) \\
     =& \sum_t^T\gH(\E_{p(\vtheta|\gD)}[p(\rvy_t |\rvy_{<t}, \vx;\vtheta)])\\
     =& \sum_{t}^T\gI(\rvy_t; \vtheta |\rvy_{<t}, \vx) + \sum_t^T \E_{p(\vtheta|\gD)}[\gH(p(\rvy_t|\rvy_{<t}, \vx; \vtheta))]\\
     =& \sum_{t}^T\gI(\rvy_t; \vtheta |\rvy_{<t}, \vx) + \E_{p(\vtheta|\gD)}\gH(p(\rvy|\vx; \vtheta))
\end{align}
Finally, based on the definition of \textit{mutual information}, we obtain:
\begin{align}
    \notag \gI(\rvy;\vtheta|\vx) &= \gH({p}(\rvy|\vx)) -  \E_{p(\vtheta|\gD)}\gH(p(\rvy|\vx; \vtheta)) \\
    &= \sum_{t}^T\gI(\rvy_t; \vtheta |\rvy_{<t}, \vx)
\end{align}
\end{proof}


\begin{customProposition}{3.1}[\textbf{Response-Level Uncertainty as an Unbiased Estimator of Query-Level Uncertainty}]
{Given an input query $\vx$, let $\vy \sim p(\rvy|\vx)$ be a generated sample of length $T$. 
Then the response-level uncertainty $\tilde{\mathcal{U}}$ (\defref{def:response-unc}) is an \textbf{unbiased estimator} of the query-level uncertainty $\mathcal{U}$ (\defref{def:query-unc}), i.e.,
\begin{align}
\mathbb{E}_{\vy \sim p(\rvy|\vx)}[\tilde{\mathcal{U}}(\vy|\vx)] = \mathcal{U}(\rvy|\vx).
\end{align}}
\end{customProposition}
\begin{proof}
    Based on \Lmmref{lemma-conditional-entropy}, for the token-level uncertainty $\mathcal{U}(\rvy_t|\vy_{<t}, \vx)$ defined in \Eqref{eq:unc-token-tu}\textasciitilde\Eqref{eq:unc-token-eu}, we have
\begin{align}
    \mathbb{E}_{\vy_{<t}\sim p(\cdot|\vx)}[\gU(\rvy_{t}|\vy_{<t}, \vx)]&=\sum_{\vy_{<t}\in\gY}p(\vy_{<t}|\vx)\gU(\rvy_t|\vy_{<t},\vx)\\
    &= \gU(\rvy_t|\rvy_{<t},\vx).
\end{align}
Therefore, the uncertainty of the sequence defined in \Eqref{eq:unc-long-estimation}:
\begin{align}
    \mathbb{E}_{\vy \sim p(\rvy|\vx)}[\tilde{\mathcal{U}}(\vy|\vx)] &=  
    \mathbb{E}_{p(\rvy|\rvx)}[\sum\nolimits_{t=1}^T \mathcal{U}(\rvy_t|\vy_{<t}, \vx)]\\
    &=\sum_{t=1}^T \mathbb{E}_{p(\rvy|\rvx)}[\gU(\rvy_{t}|\vy_{<t}, \vx)]\\
    &=\sum_{t=1}^T  \mathbb{E}_{\vy_{<t}\sim p(\cdot|\vx)}[\mathcal{U}(\rvy_t|\vy_{<t}, \vx)]\\
    &=\sum_{t=1}^T\gU(\rvy_t|\rvy_{<t},\vx)\\
    &=\mathcal{U}(\rvy| \vx),
\end{align}
where the final step follows from the chain rule of entropy (Proposition~\ref{prop:decompose}).
\end{proof}

\begin{customProposition}{3.2}[\textbf{Token-Level and Response-Level Uncertainty}]
{Given an input query $\vx$, let $\vy \sim p(\rvy|\vx)$ be a generated sample of length $T$. Let $\mathcal{U}(\rvy_t | \vy_{<t}, \vx)$ denote the token-level uncertainty as defined in~\Eqref{eq:unc-token-tu}-\ref{eq:unc-token-eu}, with $\tilde{\mathcal{U}}(\rvy | \vx)$ as the corresponding response-level uncertainty (\defref{def:response-unc}). 
Our token-level uncertainty is equivalent to the response-level uncertainty when $T = 1$: 
\begin{align}
    \tilde{\mathcal{U}}(\vy | \vx) = \mathcal{U}(\rvy_1 |\vx).
\end{align}}
\end{customProposition}

\begin{proof} 
{
When the sequence length $T=1$, based on the definition of uncertainty of sequence in \Eqref{eq:unc-long-estimation}, we have
\begin{align}
   \notag \tilde{\gU}(\vy|\rx) &= \sum_{t=1}^T\gU(\rvy_t|\vy_{<t},\vx)
   = \gU(\rvy_1|\vx).
\end{align}
This proposition implies that the sequence uncertainty collapses to token-level uncertainty when the output sequence length is 1, reflecting the structural consistency of the estimator.}
\end{proof}

\begin{proposition}[\textbf{Approximate Distribution of the Weight $\mW$ Perturbed by Low-Rank Noise, \Eqref{eq:pos-mean-var}}]
    Given the weight matrix $\mW_0 \in\mathbb{R}^{m\times n}$, 
    the low-rank noise matrix $\vepsilon\in\mathbb{R}^{n\times r^{\prime}}$ whose rank $r^\prime \ll r$ is significantly smaller than the rank $r$ of $\mW_0$, and whose entries are sampled i.i.d. from a Gaussian distribution of standard deviation of $\sigma_q$: $\epsilon_{ij} \sim \mathcal{N}(0, \sigma_q^2), \forall i\in[n], j\in[r^\prime]$, we have the perturbed weighted matrix $\mW$ as defined in \Eqref{eq:weight-perturb}
    . The variational distribution $q(\vectorize(\mW)|\sigma_q)$ defined on the weight matrix $\mW$ is
    \begin{equation}
    \begin{aligned}
        q(\vectorize(\mW)|\sigma_q) &= \gN(\vectorize(\mW)|\vmu_q, \mSigma_q), \\
        \text{where }\quad 
        \vmu_q &= \vectorize(\mW_0), \\
        \mSigma_q &= \sigma_q^2  \mI_n \otimes 
        \begin{bmatrix}
            \mI_{r'} & \\
            & \mathbf{0}_{m-{r'}}
        \end{bmatrix}.
    \end{aligned}
    \end{equation}
\end{proposition}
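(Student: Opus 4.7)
The plan is to establish that $\vectorize(\mW)$ is Gaussian by writing $\mW-\mW_0$ as an affine transformation of the Gaussian noise $\vepsilon$, and then to compute its first two moments directly using Kronecker-product vectorization identities. Since the noise enters linearly through $\mW = \mW_0 + \mU'\vepsilon^\top$, and $\vepsilon$ has jointly Gaussian entries, $\vectorize(\mW)$ is automatically jointly Gaussian. It therefore suffices to verify the mean and covariance.

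For the mean, I would observe that $\E[\vepsilon]=\mathbf{0}$ by the i.i.d.~Gaussian assumption, so $\E[\vectorize(\mW)] = \vectorize(\mW_0) = \vmu_q$, which is immediate. For the covariance, the key computation is to rewrite the noise term in vectorized form. Using the identity $\vectorize(\mA \mX \mB)=(\mB^\top\otimes\mA)\vectorize(\mX)$ applied to $\mU'\vepsilon^\top = \mU'\vepsilon^\top \mI_n$, I would obtain
\begin{equation}
\vectorize(\mU'\vepsilon^\top) \;=\; (\mI_n\otimes \mU')\,\vectorize(\vepsilon^\top).
\end{equation}
Since $\vectorize(\vepsilon^\top)$ is a permutation of $\vectorize(\vepsilon)$ and the entries of $\vepsilon$ are i.i.d.~$\gN(0,\sigma_q^2)$, we have $\Cov[\vectorize(\vepsilon^\top)] = \sigma_q^2\,\mI_{nr'}$. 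Applying the standard rule $\Cov[\mA\rvx]=\mA\Cov[\rvx]\mA^\top$ and the Kronecker mixed-product identity $(\mI_n\otimes\mU')(\mI_n\otimes\mU'^\top) = \mI_n \otimes (\mU'\mU'^\top)$ gives
\begin{equation}
\Cov[\vectorize(\mW)] \;=\; \sigma_q^2\,\bigl(\mI_n \otimes \mU'\mU'^\top\bigr).
\end{equation}

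The final step, and the only subtle one, is to identify $\mU'\mU'^\top$ with the matrix $\bigl[\begin{smallmatrix} \mI_{r'} & \\ & \mathbf{0}_{m-r'}\end{smallmatrix}\bigr]$ appearing in the proposition. Because $\mU'$ consists of the first $r'$ columns of the orthogonal matrix $\mU$ from the compact SVD, orthonormality gives $\mU'^\top \mU' = \mI_{r'}$, so $\mU'\mU'^\top$ is the orthogonal projector onto the $r'$-dimensional subspace $\mathrm{span}(\mU')$. Expressed in the basis induced by $\mU$, this projector has the matrix representation $\mU^\top(\mU'\mU'^\top)\mU = \bigl[\begin{smallmatrix} \mI_{r'} & \\ & \mathbf{0}_{m-r'}\end{smallmatrix}\bigr]$; equivalently, $\mU'\mU'^\top$ has $r'$ eigenvalues equal to one (along $\mathrm{span}(\mU')$) and $m-r'$ zero eigenvalues. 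Substituting this form recovers $\mSigma_q$ exactly.

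I expect the main obstacle to be purely notational rather than conceptual: one must be careful to distinguish $\vectorize(\vepsilon)$ from $\vectorize(\vepsilon^\top)$ when applying the Kronecker identity, and must interpret the block-diagonal form $\bigl[\begin{smallmatrix} \mI_{r'} & \\ & \mathbf{0}_{m-r'}\end{smallmatrix}\bigr]$ as the canonical representative of the rank-$r'$ projector $\mU'\mU'^\top$ in the $\mU$-basis (so that the covariance is a singular Gaussian supported on the $nr'$-dimensional subspace $\mathbb{R}^n \otimes \mathrm{span}(\mU')$). Once this identification is made, the proposition follows without further computation.
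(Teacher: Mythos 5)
Your proof is correct and follows essentially the same route as the paper's: the mean is immediate from $\E[\vepsilon]=\mathbf{0}$, and the covariance is obtained by vectorizing the perturbation and exploiting the Kronecker structure to arrive at $\sigma_q^2\,\mI_n\otimes\mU'\mU'^\top$ (the paper performs the identical computation column-by-column, writing $\vectorize(\mU'\vepsilon^\top)=\sum_{i=1}^{r'}\vepsilon_i\otimes\vu_i$ and summing the variances of the independent terms, whereas you use the vec identity together with $\Cov[\mA\rvx]=\mA\Cov[\rvx]\mA^\top$). If anything, you are more careful at the final step: the paper asserts $\mU'\mU'^\top=\bigl[\begin{smallmatrix}\mI_{r'}&\\&\mathbf{0}_{m-r'}\end{smallmatrix}\bigr]$ as a literal identity, which holds only after the orthogonal change of basis by $\mU$ that you make explicit through the projector interpretation of $\mU'\mU'^\top$.
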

\begin{proof}
We begin with compact SVD decomposition of the weight matrix $\mW_0$ as described in \Eqref{eq:svd}:
\begin{align}
    \mW_0=\mU\text{diag}(\vd)\mV^\top,
\end{align}
where $\vd \succ \vzero \in \mathbb{R}^{r \times 1}$ is the vector of singular values, and $\mU \in \mathbb{R}^{m \times r}$, $\mV \in \mathbb{R}^{n \times r}$ are orthogonal matrices. We denote the first $r'$ columns of $\mU$ as $\mU'\in\mathbb{R}^{m\times r'}$ to analyze the updated matrix $\mU'\vepsilon^\top$ in \Eqref{eq:weight-perturb}. 

Since each entry in $\vepsilon$ has zero mean, it is evident that the updated matrix also has zero mean. Consequently, we have $\vmu_q=\text{vec}(\mW_0)+\mathbf{0}=\text{vec}(\mW_0)$. 

Next, we focus on the proof of the variance $\mSigma_q$. Gien $\mU'=(\vu_1, \vu_2, \cdots, \vu_{r'})\in\mathbb{R}^{m\times r'}$, and $\vepsilon=(\vepsilon_1, \vepsilon_2, \cdots, \vepsilon_{r'})\in\mathbb{R}^{n\times r'}$ as defined above, we have the following properties:
\begin{align}
        \mU'\mU'^\top = \sum_{i=1}^{r'}\vu_i\vu_i^\top=\begin{bmatrix}
            \mI_{r'} & \\
            & \mathbf{0}_{m-r'}
        \end{bmatrix},
\end{align}
\begin{align}
    \text{vec}(\mU'\vepsilon^\top) = \text{vec}(\sum_{i=1}^{r'}\vu_i\vepsilon_i^\top)=\sum_{i=1}^{r'}(\vepsilon_i \otimes \vu_i).
\end{align}
We can now derive the covariance matrix as:
\begin{align}
    \mSigma_q &= \text{Var}[\text{vec}(\mW)] = \text{Var}[\text{vec}(\mW_0+\mU'\vepsilon^\top)] = \text{Var}[\text{vec}(\mU'\vepsilon^\top)]\\
&= \text{Var}[\sum_{i=1}^{r'}\vepsilon_i\otimes \vu_i]=\sum_{i=1}^{r'}\text{Var}[\vepsilon_i\otimes \vu_i]\\
&= \sum_{i=1}^{r'}\left\{\mathbb{E}_{\vepsilon_i}[(\vepsilon_i\otimes\vu_i)(\vepsilon_i\otimes\vu_i)^\top]-\mathbb{E}_{\vepsilon_i}[(\vepsilon_i\otimes\vu_i)]\mathbb{E}_{\vepsilon_i}[(\vepsilon_i\otimes\vu_i)^\top]\right\}\\
&= \sum_{i=1}^{r'}\left\{\mathbb{E}_{\vepsilon_i}[\vepsilon_i\vepsilon_i^\top]\otimes(\vu_i\vu_i^\top)-
(\mathbb{E}_{\vepsilon_i}[\vepsilon_i]\mathbb{E}_{\vepsilon_i}[\vepsilon_i]^\top)\otimes(\vu_i\vu_i^\top)
\right\}\\
&= \sum_{i=1}^{r'} \sigma_q^2\mI_n \otimes (\vu_i\vu_i^\top)
=  \sigma_q^2\mI_n \otimes \sum_{i=1}^{r'}\vu_i\vu_i^\top 
= \sigma_q^2\mI_n \otimes \begin{bmatrix}
            \mI_{r'} & \\
            & \mathbf{0}_{m-r'}
        \end{bmatrix}.
\end{align}

\end{proof}

\section{Implementation Details}
\label{app:implementation}

\subsection{Implementation of \ours's Token-Level Uncertainties}
\label{app:implementation-unc}
Unless otherwise specified, we set the rank of low-rank noise to $r'$~=~8, the perturbation strength $\sigma_q$~=~0.1, and the number of samples per uncertainty estimation to $M$~=~2. {For the test-time scaling experiments in \Secref{sec:expriments-unc-guide}, we apply length normalization to \ours to mitigate the bias introduced by varying sequence lengths. In contrast, the effect of length normalization may differ in hallucination detection tasks. To investigate this, we conduct additional ablation studies in \appref{app:ablation-ln}, examining the impact of length normalization in that setting.}
To ensure practical applicability in real-world scenarios, we implement our method as a seamless integration with vLLM~\cite{kwon2023efficient}.

\subsection{Datasets}
\label{app:datasets}
\Tabref{tab:dataset_stats} shows the statistics of datasets in our experiments. These datasets collectively span a wide range of difficulty levels, from moderate to highly challenging, covering both elementary-level numerical reasoning and advanced symbolic mathematical tasks. In addition, the problem domains are diverse, including: algebra, geometry, and number theory. Such a design ensures that our experiments are comprehensive and representative, facilitating a thorough assessment of the model's capability across varied reasoning scenarios.
\begin{table}[ht]
\begin{center}
\caption{Statistics of the datasets used in our experiments.}
\label{tab:dataset_stats}
\resizebox{1\linewidth}{!}{
\begin{tabular}{lccccc}
\toprule
\textbf{Dataset} & \textbf{Samples Used} & \textbf{Split} & \textbf{Task Type}  & \textbf{Language} &\textbf{Level}\\
\midrule
GSM8K & 1,300 & Training split & Mathematical Reasoning  & English &Moderate\\
MATH500 & 500 & Full set & Mathematical Reasoning & English & Difficult\\
DeepScaleR & 5,000 & First 5,000 samples & Mathematical Reasoning & English &Highly Challenging\\
\bottomrule
\end{tabular}
}
\end{center}

\end{table}

\subsection{Prompt Templates}
\label{app:prompt}
In this work, we use the following prompts published by Meta 
\footnote{
\href{https://huggingface.co/datasets/meta-llama/Llama-3.2-1B-Instruct-evals}{https://huggingface.co/datasets/meta-llama/Llama-3.2-1B-Instruct-evals}
}. 
\begin{tcolorbox}[colback=gray!5, colframe=black, title=Prompt Example, sharp corners, boxrule=0.5pt]
Solve the following math problem efficiently and clearly:
\\

-For simple problems (2 steps or fewer):

Provide a concise solution with minimal explanation.
\\

-For complex problems (3 steps or more):

Use this step-by-step format:
\\

\quad \quad \#\# Step 1: [Concise description]

\quad \quad [Brief explanation and calculations]
\\

\quad \quad \#\# Step 2: [Concise description]

\quad \quad [Brief explanation and calculations]
\\
...
\\

Regardless of the approach, always conclude with:

Therefore, the final answer is: $\boxed{answer}$. I hope it is correct.

Where [answer] is just the final number or expression that solves the problem.
\end{tcolorbox}

\subsection{Baselines}
\label{app:baselines}

We compare our uncertainty estimation approach against several baseline methods: 
\begin{itemize}[nosep]
    \item \textbf{Log-Likelihood (LL)}~\cite{murray2018correcting}: Mean of token-wise log-probabilities of the output sequence, representing the model’s overall confidence in its generation. 
    \item \textbf{Predictive Entropy (PE)}~\cite{malininuncertainty}: Mean entropy of the predicted distribution of each token.
    \item \textbf{P(True)}~\cite{kadavath2022language}: Directly queries the model about the correctness of its own output and uses the predicted probability of the token ``True'', normalized by the sum of probabilities of token ``True'' and ``False'', as a confidence score. 
    \item {\textbf{Self-Certainty}~\cite{kang2025scalable}: Quantifies confidence using the KL divergence between the predicted token distribution and a uniform distribution over the vocabulary at each decoding step.}
    \item {\textbf{DeepConf}~\cite{fu2025deep}: Computes confidence scores by aggregating the log-probabilities of the top-$K$ candidate tokens at each decoding step.}
    \item {\textbf{The Degree Matrix}~\cite{lin2023generating}: Utilizes the degree matrix of the graph Laplacian of the similarities matrix of responses.}
    \item \textbf{LLM-Check}~\cite{sriramanan2024llm}: We faithfully reproduced the official implementation for comparison.
    \item \textbf{INSIDE}~\cite{chen2024inside}: INSIDE is a method to estimate query-level uncertainty. We tailored INSIDE to our setting by asking the LLM to verify the same response multiple times and then calculating the semantic entropy across these verification attempts.
    \item \textbf{Semantic Entropy (SE)}~\cite{kuhn2023semantic}: We adapted SE to our setting by prompting the LLM to verify the same response multiple times and computing the semantic entropy of these verification attempts. While this provides a signal of response quality, we note that SE requires an external NLI or embedding model, giving it an inherent advantage compared to our method.
    \item \textbf{SAR}~\cite{duan2024shifting}: We adapted SAR to our setting by computing sentence-level SAR scores over multiple verification attempts, following a similar procedure to SE. While this method provides a meaningful proxy for uncertainty, \textbf{it requires an external semantic similarity model}, which raises fairness concerns compared to our approach, which operates solely with the base LLM.

\end{itemize}

\subsection{Evaluation Parsing and Metrics}
\label{app:parsing-metrics}
\textbf{Parsing. }\quad To automate the evaluation of outputs generated by large language models, we design specific prompts (see~\appref{app:prompt}) that constrain the model to follow a fixed structure and require it to place the final answer within a \texttt{\textbackslash box\{\}}. Considering that in mathematical reasoning tasks, the same answer can be expressed in various forms, we standardize all answers into a canonical form before comparison~\cite{beeching2024scalingtesttimecompute}. During the evaluation, we assess the correctness from two perspectives: numerical equality and symbolic equality, to label each generation as ``True'' or ``False''.

\textbf{Metrics. }\quad 
To comprehensively assess model performance in binary classification tasks, we adopt the following metrics: Area Under the Receiver Operating Characteristic Curve (AUROC), Area Under the Precision-Recall Curve (AUPRC), and Top 50\% Accuracy~\cite{farquhar2024detecting, ye2025uncertainty, hanley1982meaning, boyd2013area}.
\begin{itemize}

\item \textbf{AUROC} measures the trade-off between true positive rate (TPR) and false positive rate (FPR) at various threshold settings. Formally, for a set of predictions with associated confidence scores, AUROC is computed as:
\begin{equation}
\text{AUROC} = \int_{0}^{1} \text{TPR}(\text{FPR}^{-1}(x)) \, dx,
\end{equation}
where TPR and FPR are defined as:
\[
\text{TPR} = \frac{\text{TP}}{\text{TP} + \text{FN}}, \quad 
\text{FPR} = \frac{\text{FP}}{\text{FP} + \text{TN}}.
\]

\item \textbf{AUPRC} evaluates the trade-off between precision and recall, which is particularly useful in imbalanced datasets. It is calculated as:
\begin{equation}
\text{AUPRC} = \int_{0}^{1} \text{Precision}(\text{Recall}^{-1}(x)) \, dx,
\end{equation}
where precision and recall are defined as:
\[
\text{Precision} = \frac{\text{TP}}{\text{TP} + \text{FP}}, \quad 
\text{Recall} = \frac{\text{TP}}{\text{TP} + \text{FN}}.
\]

\item \textbf{Top 50\% Accuracy} evaluates the correctness of the top half predictions ranked by confidence. Let $N$ be the total number of predictions and $S$ be the set of indices corresponding to the top $\lceil N/2 \rceil$ predictions with highest confidence. The metric is defined as:
\begin{equation}
\text{Top 50\% Accuracy} = \frac{1}{|S|} \sum_{i \in S} \delta(\hat{y}_i = y_i),
\end{equation}
where $\hat{y}_i$ is the predicted label and $y_i$ is the ground-truth label.
\end{itemize}

\section{Additional Experimental Results}
\label{app:experiments}

\subsection{Preliminary Study: Distribution of Uncertainties} 
\label{app:preliminary-on-unc-dist}
We conduct a preliminary study to examine the relationship between responses' token-level uncertainties and their correctness. We generate responses on the GSM8K dataset using a greedy decoding strategy with \texttt{Llama-3.2-1B-Instruct}, and label each response as correct or incorrect based on an exact match with the ground-truth answer. Considering the class imbalance in the model responses, we construct a balanced subset for visualization. Specifically, we retain all incorrect responses and randomly sample an equal number of correct responses. 
\textbf{We compute the {\ours (EU)} and {\ours (AU)} with our proposed token-level uncertainties in \Eqref{eq:unc-long-estimation},
and plot the results in the {Normalized} EU-AU space (\Figref{fig:unc-distribution}).}
We observe that both {\ours (EU)} and {\ours (AU)} show a better-than-chance separation between correct and incorrect outputs. Although some overlap exists, their distribution peaks differ 
significantly, indicating that our uncertainty estimates meaningfully correlate with generation quality. 

\begin{figure}[htbp]
    \centering
    \includegraphics[width=\textwidth]{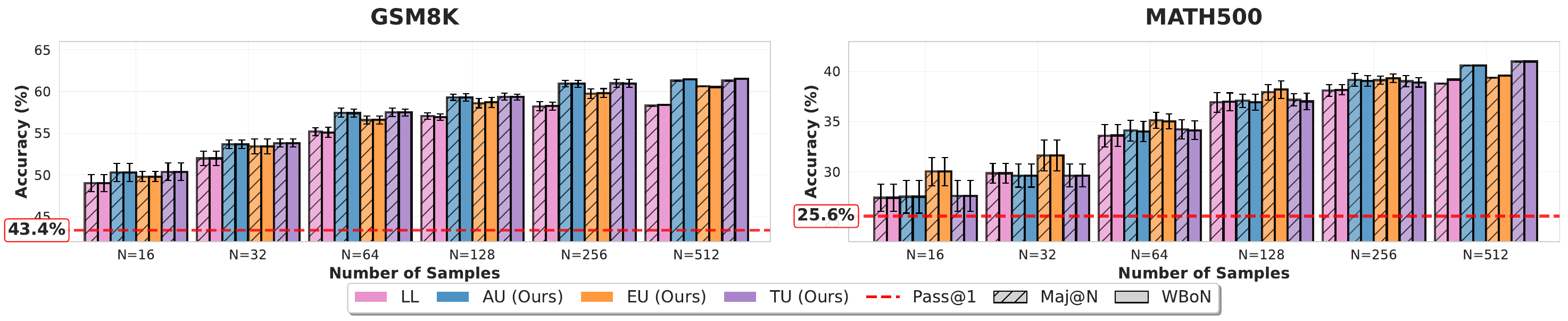}
\caption{
\textbf{Performance on GSM8K (Left) and MATH500 (Right) when scaling up sample size $N$ at test time of \texttt{Llama-3.2-1B-Instruct}.} 
Our \ours (AU, EU, and TU) consistently outperforms the LL baseline, particularly when $N$ is small. 
Please refer to \Tabref{tab:scaling-detailed} for detailed numerical results.}
\label{fig:offline-scaling}
\end{figure}

\subsection{Uncertainty Estimation for Qwen Model}
\label{app:unc-for-qwen}
As described in \Secref{sec:unc-detect}, we further evaluate the generalization of \ours by conducting experiments with the \texttt{Qwen} models. \textbf{\Tabref{tab:uncertainty-single-greedy-qwen} reports results for \texttt{Qwen-2.5-3B-Instruct} and \texttt{Qwen-2.5-7B-Instruct},} using the same experimental setup as in \Tabref{tab:uncertainty-single-greedy}. Overall, these results provide strong additional evidence that \ours’s token-level uncertainty estimates maintain a robust correlation with model accuracy across diverse LLM families and parameter scales.

\begin{table*}[t]
\caption{
    \textbf{{Performance of Uncertainty Estimation Methods} for Incorrect Reasoning Path Detection of Qwen famliy models.} AUROC, AUPRC, and ACC$^*$ are all reported as percentage~(\%).
    {We include the accuracy of CoT (i.e., greedy decoding with Chain-of-Thought prompting) in the first row for reference.}
    \textbf{Boldface} and \underline{underlining} denote the best and the second-best performance, respectively. 
}
\vspace{-1em}
\label{tab:uncertainty-single-greedy-qwen}
\begin{center}
\resizebox{1\linewidth}{!}{
\setlength{\tabcolsep}{6pt}
\begin{tabular}{l ccc ccc ccc}
\toprule

\multirow{2}{*}{\textbf{\makecell{Method}}} 
& \multicolumn{3}{c}{\textbf{MATH500}} 
& \multicolumn{3}{c}{\textbf{GSM8K}}
& \multicolumn{3}{c}{\textbf{DeepScaleR}} \\
\cmidrule(lr){2-4} \cmidrule(lr){5-7} \cmidrule(lr){8-10}

& {AUROC} & {AUPRC} & {ACC$^*$} 
& {AUROC} & {AUPRC} & {ACC$^*$} 
& {AUROC} & {AUPRC} & {ACC$^*$} \\
\midrule
\multicolumn{10}{c}{\texttt{Qwen-2.5-3B-Instruct}} \\
\midrule


CoT (Lower-Bound)
& -
& - 
& 64.73\scriptsize{$\pm$0.00}
& -
& - 
& 87.72\scriptsize{$\pm$0.00}
& -
& - 
& 35.87\scriptsize{$\pm$0.00}
\\

PE

& 68.94$\pm$\scriptsize{0.27} 
& 77.30$\pm$\scriptsize{0.62} 
& 76.80$\pm$\scriptsize{0.40}
& 76.44$\pm$\scriptsize{0.10} 
& 95.06$\pm$\scriptsize{0.01} 
& 94.46$\pm$\scriptsize{0.00}
& 66.57$\pm$\scriptsize{0.07} 
& 46.44$\pm$\scriptsize{0.02} 
& 48.09$\pm$\scriptsize{0.02}
\\
LL
& 68.42$\pm$\scriptsize{0.29} 
& 76.82$\pm$\scriptsize{0.79} 
& 76.13$\pm$\scriptsize{0.23}
& 75.54$\pm$\scriptsize{0.12} 
& 94.93$\pm$\scriptsize{0.01} 
& 94.92$\pm$\scriptsize{0.00}
& 65.66$\pm$\scriptsize{0.06} 
& 45.43$\pm$\scriptsize{0.02} 
& 47.37$\pm$\scriptsize{0.02}
\\

Self-Certainty
& 74.95$\pm$\scriptsize{0.40} 
& 84.76$\pm$\scriptsize{0.40} 
& 81.33$\pm$\scriptsize{0.61}
& 77.22$\pm$\scriptsize{0.00} 
& \underline{95.66$\pm$\scriptsize{0.02}}
& 94.77$\pm$\scriptsize{0.00}
& 74.22$\pm$\scriptsize{0.08} 
& 62.55$\pm$\scriptsize{0.02} 
& 52.00$\pm$\scriptsize{0.00}
\\
DeepConf
& 74.39$\pm$\scriptsize{0.33} 
& 83.81$\pm$\scriptsize{0.39} 
& 80.53$\pm$\scriptsize{0.61}
& \underline{78.06$\pm$\scriptsize{0.01}}
& 95.63$\pm$\scriptsize{0.03} 
& 95.69$\pm$\scriptsize{0.00}
& 73.85$\pm$\scriptsize{0.07} 
& 60.75$\pm$\scriptsize{0.02} 
& 51.93$\pm$\scriptsize{0.02}
\\

\ours~(TU, Ours)
& \underline{81.48$\pm$\scriptsize{0.09}}
& \underline{87.42$\pm$\scriptsize{0.09}}
& \underline{86.67$\pm$\scriptsize{0.23}}
& 77.66$\pm$\scriptsize{0.40} 
& 95.46$\pm$\scriptsize{0.12} 
& \underline{96.05$\pm$\scriptsize{0.18}}
& \underline{76.26$\pm$\scriptsize{0.02}}
& \underline{65.47$\pm$\scriptsize{0.07}}
& \underline{53.52$\pm$\scriptsize{0.14}}
\\

\ours~(AU, Ours)
& \textbf{82.55$\pm$\scriptsize{0.05}} 
& \textbf{88.30$\pm$\scriptsize{0.15}} 
& \textbf{87.33$\pm$\scriptsize{0.23}}
& \textbf{78.81$\pm$\scriptsize{0.31}} 
& \textbf{95.76$\pm$\scriptsize{0.16}} 
& \textbf{96.15$\pm$\scriptsize{0.31}}
& \textbf{77.52$\pm$\scriptsize{0.03}} 
& \textbf{67.30$\pm$\scriptsize{0.08}} 
& \textbf{54.61$\pm$\scriptsize{0.27}}
\\

\ours~(EU, Ours)
& 78.96$\pm$\scriptsize{0.13} 
& 85.37$\pm$\scriptsize{0.24} 
& 84.27$\pm$\scriptsize{0.46}
& 70.75$\pm$\scriptsize{0.81} 
& 93.60$\pm$\scriptsize{0.21} 
& 94.26$\pm$\scriptsize{0.44}
& 73.33$\pm$\scriptsize{0.02} 
& 61.08$\pm$\scriptsize{0.13} 
& 51.35$\pm$\scriptsize{0.10}
\\

\midrule
\multicolumn{10}{c}{\texttt{Qwen-2.5-7B-Instruct}} \\
\midrule

{CoT (Lower-Bound)}
& -
& - 
& {75.07\scriptsize{$\pm$0.00}}
& -
& - 
& {92.15\scriptsize{$\pm$0.00}}
& -
& - 
& {44.93\scriptsize{$\pm$0.00}}
\\

PE
& 62.51$\pm$\scriptsize{0.37} 
& 80.29$\pm$\scriptsize{0.45} 
& 82.00$\pm$\scriptsize{0.40}
& 77.40$\pm$\scriptsize{0.19} 
& 97.35$\pm$\scriptsize{0.01} 
& 97.23$\pm$\scriptsize{0.00}
& 60.55$\pm$\scriptsize{0.09} 
& 50.61$\pm$\scriptsize{0.10} 
& 52.64$\pm$\scriptsize{0.14}
\\

LL
& 62.19$\pm$\scriptsize{0.39} 
& 80.19$\pm$\scriptsize{0.46} 
& 82.13$\pm$\scriptsize{0.61}
& 75.37$\pm$\scriptsize{0.22} 
& 97.02$\pm$\scriptsize{0.00} 
& 97.08$\pm$\scriptsize{0.00}
& 60.23$\pm$\scriptsize{0.10} 
& 50.27$\pm$\scriptsize{0.11} 
& 52.43$\pm$\scriptsize{0.14}
\\

Self-Certainty
& 69.54$\pm$\scriptsize{0.06} 
& 86.24$\pm$\scriptsize{0.10} 
& 85.60$\pm$\scriptsize{0.40}
& \underline{80.92$\pm$\scriptsize{0.15}}
& \underline{97.81$\pm$\scriptsize{0.01}}
& 97.69$\pm$\scriptsize{0.00}
& 66.31$\pm$\scriptsize{0.07} 
& 58.18$\pm$\scriptsize{0.10} 
& 56.60$\pm$\scriptsize{0.08}
\\

DeepConf
& 66.86$\pm$\scriptsize{0.06} 
& 83.85$\pm$\scriptsize{0.03} 
& 84.53$\pm$\scriptsize{0.23}
& \textbf{81.94$\pm$\scriptsize{0.15}} 
& \textbf{97.88$\pm$\scriptsize{0.00}} 
& \textbf{98.31$\pm$\scriptsize{0.00}}
& 64.91$\pm$\scriptsize{0.05} 
& 56.71$\pm$\scriptsize{0.08} 
& 55.83$\pm$\scriptsize{0.08}
\\

\ours~(TU, Ours)
& \underline{84.61$\pm$\scriptsize{0.62}}
& \underline{93.43$\pm$\scriptsize{0.25}}
& \textbf{94.00$\pm$\scriptsize{0.40}}
& 80.15$\pm$\scriptsize{0.12} 
& 97.36$\pm$\scriptsize{0.11} 
& 97.69$\pm$\scriptsize{0.00}
& \underline{76.05$\pm$\scriptsize{0.12}}
& \underline{71.43$\pm$\scriptsize{0.31}}
& \underline{63.89$\pm$\scriptsize{0.33}}
\\

\ours~(AU, Ours)
& \textbf{84.74$\pm$\scriptsize{0.52}} 
& \textbf{93.65$\pm$\scriptsize{0.17}} 
& \textbf{94.00$\pm$\scriptsize{0.40}}
& 80.70$\pm$\scriptsize{0.10} 
& 97.44$\pm$\scriptsize{0.14} 
& \underline{97.79$\pm$\scriptsize{0.09}}
& \textbf{76.39$\pm$\scriptsize{0.10}} 
& \textbf{71.73$\pm$\scriptsize{0.31}} 
& \textbf{64.24$\pm$\scriptsize{0.30}}
\\

\ours~(EU, Ours)
& 81.87$\pm$\scriptsize{1.02} 
& 91.68$\pm$\scriptsize{0.35} 
& \underline{92.00$\pm$\scriptsize{1.06}}
& 74.95$\pm$\scriptsize{0.40} 
& 96.61$\pm$\scriptsize{0.08} 
& 97.23$\pm$\scriptsize{0.15}
& 73.55$\pm$\scriptsize{0.21} 
& 68.91$\pm$\scriptsize{0.48} 
& 61.47$\pm$\scriptsize{0.13}
\\

\bottomrule
\end{tabular}
}
\end{center}
\end{table*}

\subsection{Test-Time Scaling via Uncertainty Estimation}
\label{app:scaling-offline}

{\textbf{We provide an additional visualization of the test-time scaling results in \Figref{fig:offline-scaling} .} While the complete numerical results are reported in \Tabref{tab:scaling-models}, this new figure offers an intuitive view of how accuracy improves with increasing numbers of test-time samples ($N \in {16, 32, 64, 128, 256, 512}$). All experiments use \texttt{Llama-3.2-1B-Instruct} as the base model. For reference, the Pass@1 baseline accuracy (GSM8K: 44.43\%; MATH500: 25.60\%) is also shown as red dashed lines, highlighting the gains achieved through test-time scaling. Moreover, \textbf{we provide an extended version of the results in \Tabref{tab:scaling-detailed}}, which builds on \Tabref{tab:scaling-models} to include additional test-time sample configurations.}

\begin{figure}[t]
    \centering
        \centering
        \includegraphics[width=0.6\linewidth]{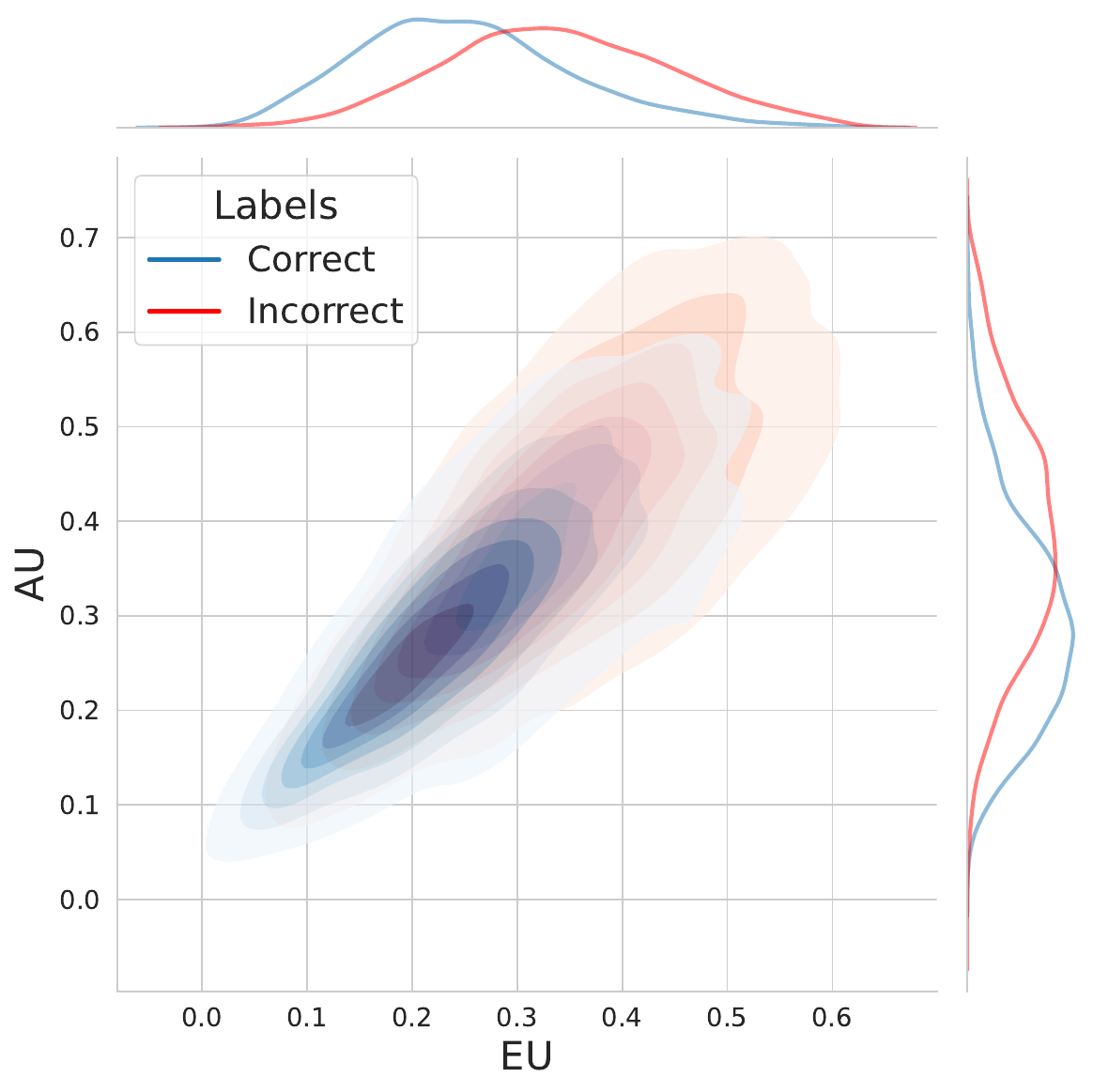}
        \label{fig:gsm8k_corr}
    \caption{Distribution of responses from GSM8K~\cite{cobbe2021gsm8k} plotted in the {Length Normalized} EU-AU uncertainty space, as quantified by our token-level uncertainty metrics~(\Eqref{eq:unc-long-estimation}). 
    }
    \label{fig:unc-distribution}
\end{figure}

\begin{table*}[t]
\caption{
\textbf{Test-Time Scaling for GSM8K and MATH500.} Performance comparison of different methods with varying numbers of test-time samples (N = 16 to 512) using Llama-3.2-1B-Instruct as the base model. Methods evaluated include log-likelihood (LL) and three variants of TokUR (TU, AU and EU) with both Maj@N and WBoN strategies. \textbf{Boldface} and \underline{underlining} denote the best and the second-best performance, respectively. 
}
\centering
\resizebox{\textwidth}{!}{
\setlength{\tabcolsep}{8pt}
\begin{tabular}{c ll cccccc} 
\toprule[0.12em]
\multirow{2}{*}[-0.25em]{\textbf{Dataset}} & \multirow{2}{*}[-0.25em]{\textbf{Score}} & \multirow{2}{*}[-0.25em]{\textbf{Method}} & \multicolumn{6}{c}{\textbf{Number of Samples N}} \\
\cmidrule(lr){4-9}
& & & N=16 & N=32 & N=64 & N=128 & N=256 & N=512 \\
\midrule
\multicolumn{9}{c}{\texttt{Llama-3.2-1B-Instruct}} \\
\midrule
\multirow{13}{*}[-0.25em]{\begin{tabular}{c}\textbf{GSM8K}\\(Pass@1: 44.43)\end{tabular}}
& \multirow{2}{*}{LL} 
& Maj@N  
& 47.10{\scriptsize $\pm$0.85} 
& 50.45{\scriptsize $\pm$0.64} 
& 54.11{\scriptsize $\pm$0.52} 
& 56.77{\scriptsize $\pm$0.40} 
& 58.89{\scriptsize $\pm$0.36} 
& 59.72{\scriptsize $\pm$0.00} 
\\
& 
& WBoN 
& 47.10{\scriptsize $\pm$0.85} 
& 50.45{\scriptsize $\pm$0.64} 
& 54.15{\scriptsize $\pm$0.55} 
& 56.72{\scriptsize $\pm$0.42} 
& 58.92{\scriptsize $\pm$0.37} 
& 59.81{\scriptsize $\pm$0.00} 
\\

\cmidrule(lr){2-9}
& \multirow{2}{*}{Self-Certainty} 
& Maj@N  
& 45.02{\scriptsize $\pm$0.92} 
& 48.97{\scriptsize $\pm$0.81} 
& 52.61{\scriptsize $\pm$0.72} 
& 55.22{\scriptsize $\pm$0.67} 
& 57.18{\scriptsize $\pm$0.53} 
& 58.03{\scriptsize $\pm$0.00} 
\\
& 
& WBoN 
& 45.02{\scriptsize $\pm$0.92} 
& 48.97{\scriptsize $\pm$0.81} 
& 52.65{\scriptsize $\pm$0.70} 
& 55.30{\scriptsize $\pm$0.64} 
& 57.22{\scriptsize $\pm$0.54} 
& 58.10{\scriptsize $\pm$0.00} 
\\

\cmidrule(lr){2-9}
& \multirow{2}{*}{DeepConf} 
& Maj@N  
& 46.72{\scriptsize $\pm$0.89} 
& 50.12{\scriptsize $\pm$0.71} 
& 53.50{\scriptsize $\pm$0.66} 
& 56.10{\scriptsize $\pm$0.52} 
& 58.05{\scriptsize $\pm$0.44} 
& 58.97{\scriptsize $\pm$0.00} 
\\
& 
& WBoN 
& 46.72{\scriptsize $\pm$0.89} 
& 50.12{\scriptsize $\pm$0.71} 
& 53.47{\scriptsize $\pm$0.65} 
& 56.08{\scriptsize $\pm$0.49} 
& 58.10{\scriptsize $\pm$0.45} 
& 59.05{\scriptsize $\pm$0.00} 
\\
\cmidrule(lr){2-9}
& \multirow{2}{*}{\ours (TU, Ours)} 
& Maj@N  
& \underline{50.29{\scriptsize $\pm$1.03}} 
& \underline{53.72{\scriptsize $\pm$0.77}} 
& 57.18{\scriptsize $\pm$0.45} 
& 59.10{\scriptsize $\pm$0.60} 
& 60.68{\scriptsize $\pm$0.49} 
& 61.23{\scriptsize $\pm$0.00} 
\\
& 
& WBoN 
& \underline{50.29{\scriptsize $\pm$1.03}}
& \underline{53.72{\scriptsize $\pm$0.77}} 
& \textbf{57.22{\scriptsize $\pm$0.45}} 
& \textbf{59.21{\scriptsize $\pm$0.66}} 
& \underline{60.71{\scriptsize $\pm$0.49}} 
& \underline{61.31{\scriptsize $\pm$0.00}} 
\\

\cmidrule(lr){2-9}
& \multirow{2}{*}{\ours (AU, Ours)} 
& Maj@N  
& 50.20{\scriptsize $\pm$0.98} 
& \textbf{53.77{\scriptsize $\pm$0.90}}
& \underline{57.21{\scriptsize $\pm$0.46}} 
& 58.99{\scriptsize $\pm$0.61} 
& 60.70{\scriptsize $\pm$0.41} 
& \textbf{61.38{\scriptsize $\pm$0.00}} 
\\
& 
& WBoN 
& 50.20{\scriptsize $\pm$0.98} 
& \textbf{53.77{\scriptsize $\pm$0.90}}
& 57.19{\scriptsize $\pm$0.44} 
& \underline{59.13{\scriptsize $\pm$0.67}} 
& \textbf{60.78{\scriptsize $\pm$0.42}}
& \underline{61.31{\scriptsize $\pm$0.00}}
\\

\cmidrule(lr){2-9}
& \multirow{2}{*}{\ours (EU, Ours)} 
& Maj@N  
& \textbf{50.38{\scriptsize $\pm$0.92}}
& 52.98{\scriptsize $\pm$0.67} 
& 56.92{\scriptsize $\pm$0.60} 
& 58.77{\scriptsize $\pm$0.38} 
& 59.88{\scriptsize $\pm$0.52} 
& 60.69{\scriptsize $\pm$0.00} 
\\
& 
& WBoN 
& \textbf{50.38{\scriptsize $\pm$0.92}} 
& 52.98{\scriptsize $\pm$0.67} 
& 56.89{\scriptsize $\pm$0.54} 
& 58.70{\scriptsize $\pm$0.40} 
& 59.91{\scriptsize $\pm$0.58} 
& 60.85{\scriptsize $\pm$0.00} 
\\

\midrule

\multirow{13}{*}[-0.25em]{\begin{tabular}{c}\textbf{MATH500}\\(Pass@1: 25.60)\end{tabular}}
& \multirow{2}{*}{LL} 

& Maj@N  
& 26.42{\scriptsize $\pm$0.84} 
& \underline{29.28{\scriptsize $\pm$0.89}}
& 33.28{\scriptsize $\pm$0.97} 
& 37.04{\scriptsize $\pm$0.74} 
& 38.56{\scriptsize $\pm$0.75} 
& 39.00{\scriptsize $\pm$0.00} 
\\
& 
& WBoN 
& 26.42{\scriptsize $\pm$0.84} 
& \underline{29.28{\scriptsize $\pm$0.89}} 
& 33.30{\scriptsize $\pm$1.10} 
& 37.02{\scriptsize $\pm$0.84} 
& 38.58{\scriptsize $\pm$0.73} 
& 39.00{\scriptsize $\pm$0.00} 
\\

\cmidrule(lr){2-9}

& \multirow{2}{*}{Self-Certainty} 
& Maj@N  
& 20.14{\scriptsize $\pm$1.14} 
& 23.24{\scriptsize $\pm$1.44} 
& 29.12{\scriptsize $\pm$1.11} 
& 33.80{\scriptsize $\pm$0.89} 
& 36.68{\scriptsize $\pm$0.83} 
& 38.60{\scriptsize $\pm$0.00} 
\\
& 
& WBoN 
& 20.14{\scriptsize $\pm$1.14} 
& 23.24{\scriptsize $\pm$1.44} 
& 29.16{\scriptsize $\pm$0.99} 
& 33.82{\scriptsize $\pm$0.82} 
& 36.80{\scriptsize $\pm$0.80} 
& 38.60{\scriptsize $\pm$0.00} 
\\

\cmidrule(lr){2-9}

& \multirow{2}{*}{DeepConf} 
& Maj@N  
& 25.68{\scriptsize $\pm$1.38} 
& 28.36{\scriptsize $\pm$0.91} 
& 33.30{\scriptsize $\pm$1.10} 
& 37.34{\scriptsize $\pm$1.31} 
& 38.52{\scriptsize $\pm$0.43} 
& \textbf{40.40{\scriptsize $\pm$0.00}}
\\
& 
& WBoN 
& 25.68{\scriptsize $\pm$1.38} 
& 28.08{\scriptsize $\pm$1.08} 
& 32.44{\scriptsize $\pm$1.20} 
& 36.00{\scriptsize $\pm$1.22} 
& 37.08{\scriptsize $\pm$0.78} 
& 38.60{\scriptsize $\pm$0.00} 
\\

\cmidrule(lr){2-9}

& \multirow{2}{*}{\ours (TU, Ours)} 

& Maj@N  
& \underline{27.06{\scriptsize $\pm$0.94}}
& 29.18{\scriptsize $\pm$1.06} 
& \underline{33.76{\scriptsize $\pm$0.84}}
& 37.62{\scriptsize $\pm$0.70} 
& 39.18{\scriptsize $\pm$0.70} 
& 39.00{\scriptsize $\pm$0.00} 
\\
& 
& WBoN 
& \underline{27.06{\scriptsize $\pm$0.94}}
& 29.18{\scriptsize $\pm$1.06} 
& 33.60{\scriptsize $\pm$0.82} 
& 37.60{\scriptsize $\pm$0.79} 
& 39.20{\scriptsize $\pm$0.65} 
& 39.40{\scriptsize $\pm$0.00} 
\\
\cmidrule(lr){2-9}

& \multirow{2}{*}{\ours (AU, Ours)} 
& Maj@N  
& \underline{27.06{\scriptsize $\pm$0.91}}
& 29.08{\scriptsize $\pm$1.14} 
& 33.64{\scriptsize $\pm$0.76} 
& 37.52{\scriptsize $\pm$0.82}
& 39.12{\scriptsize $\pm$0.72} 
& 39.20{\scriptsize $\pm$0.00} 
\\
& 
& WBoN 
& \underline{27.06{\scriptsize $\pm$0.91}}
& 29.08{\scriptsize $\pm$1.14} 
& 33.48{\scriptsize $\pm$0.73} 
& 37.64{\scriptsize $\pm$0.73} 
& 39.10{\scriptsize $\pm$0.69} 
& 39.60{\scriptsize $\pm$0.00} 
\\

\cmidrule(lr){2-9}

& \multirow{2}{*}{\ours (EU, Ours)} 
& Maj@N  
& \textbf{28.28{\scriptsize $\pm$1.32}}
& \textbf{31.36{\scriptsize $\pm$1.05}}
& \textbf{35.44{\scriptsize $\pm$0.79}}
& \textbf{38.00{\scriptsize $\pm$0.77}}
& \textbf{39.44{\scriptsize $\pm$0.88}}
& 39.60{\scriptsize $\pm$0.00} 
\\
& 
& WBoN 
& \textbf{28.28{\scriptsize $\pm$1.32}} 
& \textbf{31.36{\scriptsize $\pm$1.05}}
& \textbf{35.44{\scriptsize $\pm$0.78}}
& \underline{37.86{\scriptsize $\pm$0.84}} 
& \underline{39.38{\scriptsize $\pm$0.87}} 
& \underline{40.00{\scriptsize $\pm$0.00}}
\\

\midrule
\multicolumn{9}{c}{\texttt{Llama-3.1-8B-Instruct}} \\
\midrule
\multirow{13}{*}[-0.25em]{\begin{tabular}{c}\textbf{GSM8K}\\(Pass@1: 85.69)\end{tabular}}
& \multirow{2}{*}{LL} 
& Maj@N  
& 86.74{\scriptsize $\pm$0.62} 
& 89.16{\scriptsize $\pm$0.53} 
& 90.48{\scriptsize $\pm$0.48} 
& 90.99{\scriptsize $\pm$0.35} 
& 91.01{\scriptsize $\pm$0.28} 
& 91.00{\scriptsize $\pm$0.00} 
\\
& 
& WBoN 
& 86.74{\scriptsize $\pm$0.62} 
& 89.16{\scriptsize $\pm$0.53} 
& 90.48{\scriptsize $\pm$0.49} 
& 90.99{\scriptsize $\pm$0.36} 
& 91.00{\scriptsize $\pm$0.29} 
& 91.00{\scriptsize $\pm$0.00} 
\\

\cmidrule(lr){2-9}
& \multirow{2}{*}{Self-Certainty} 
& Maj@N  
& 80.02{\scriptsize $\pm$0.70} 
& 84.13{\scriptsize $\pm$0.66} 
& 87.25{\scriptsize $\pm$0.49} 
& 89.22{\scriptsize $\pm$0.40} 
& 90.05{\scriptsize $\pm$0.40} 
& 90.77{\scriptsize $\pm$0.00} 
\\
& 
& WBoN 
& 80.02{\scriptsize $\pm$0.70} 
& 84.13{\scriptsize $\pm$0.66} 
& 87.25{\scriptsize $\pm$0.50} 
& 89.21{\scriptsize $\pm$0.39} 
& 90.05{\scriptsize $\pm$0.41} 
& 90.77{\scriptsize $\pm$0.00} 
\\

\cmidrule(lr){2-9}
& \multirow{2}{*}{DeepConf} 
& Maj@N  
& 86.24{\scriptsize $\pm$0.66} 
& 88.74{\scriptsize $\pm$0.64} 
& 90.34{\scriptsize $\pm$0.46} 
& 90.88{\scriptsize $\pm$0.46} 
& 90.92{\scriptsize $\pm$0.28} 
& 91.01{\scriptsize $\pm$0.00} 
\\
& 
& WBoN 
& 86.24{\scriptsize $\pm$0.66} 
& 88.74{\scriptsize $\pm$0.64} 
& 90.32{\scriptsize $\pm$0.46} 
& 90.90{\scriptsize $\pm$0.45} 
& 90.94{\scriptsize $\pm$0.28} 
& 91.02{\scriptsize $\pm$0.00}
\\

\cmidrule(lr){2-9}
& \multirow{2}{*}{\ours (TU, Ours)} 
& Maj@N  
& \underline{87.68{\scriptsize $\pm$0.57}}
& \underline{89.72{\scriptsize $\pm$0.55}} 
& \underline{90.67{\scriptsize $\pm$0.45}}
& \underline{91.06{\scriptsize $\pm$0.38}} 
& 90.96{\scriptsize $\pm$0.36} 
& \underline{91.02{\scriptsize $\pm$0.00}}
\\
& 
& WBoN 
& \underline{87.68{\scriptsize $\pm$0.57}} 
& \underline{89.72{\scriptsize $\pm$0.55}} 
& 90.65{\scriptsize $\pm$0.46} 
& \underline{91.06{\scriptsize $\pm$0.37}}
& 90.98{\scriptsize $\pm$0.37} 
& \underline{91.02{\scriptsize $\pm$0.00}} 
\\

\cmidrule(lr){2-9}
& \multirow{2}{*}{\ours (AU, Ours)} 
& Maj@N  
& 87.42{\scriptsize $\pm$0.66} 
& 89.59{\scriptsize $\pm$0.55} 
& 90.60{\scriptsize $\pm$0.44} 
& 91.01{\scriptsize $\pm$0.31} 
& 90.99{\scriptsize $\pm$0.32} 
& 90.93{\scriptsize $\pm$0.00} 
\\
& 
& WBoN 
& 87.42{\scriptsize $\pm$0.66} 
& 89.59{\scriptsize $\pm$0.55} 
& 90.57{\scriptsize $\pm$0.43} 
& 91.04{\scriptsize $\pm$0.35} 
& 90.98{\scriptsize $\pm$0.30} 
& \underline{91.02{\scriptsize $\pm$0.00}}
\\

\cmidrule(lr){2-9}
& \multirow{2}{*}{\ours (EU, Ours)} 
& Maj@N  
& \textbf{88.06{\scriptsize $\pm$0.57}}
& \textbf{89.88{\scriptsize $\pm$0.39}}
& \textbf{90.69{\scriptsize $\pm$0.47}}
& \textbf{91.19{\scriptsize $\pm$0.40}}
& \underline{91.07{\scriptsize $\pm$0.33}} 
& \underline{91.02{\scriptsize $\pm$0.00}} 
\\
& 
& WBoN 
& \textbf{88.06{\scriptsize $\pm$0.57}}
& \textbf{89.88{\scriptsize $\pm$0.39}}
& \underline{90.67{\scriptsize $\pm$0.48}}
& \textbf{91.19{\scriptsize $\pm$0.39}}
& \textbf{91.09{\scriptsize $\pm$0.36}} 
& \textbf{91.05{\scriptsize $\pm$0.00}}
\\

\midrule

\multirow{13}{*}[-0.25em]{\begin{tabular}{c}\textbf{MATH500}\\(Pass@1: 48.60)\end{tabular}}
& \multirow{2}{*}{LL} 
& Maj@N 
& 50.92{\scriptsize $\pm$1.77} 
& 55.24{\scriptsize $\pm$0.51} 
& 59.36{\scriptsize $\pm$0.74} 
& 62.86{\scriptsize $\pm$0.70} 
& 64.10{\scriptsize $\pm$0.61} 
& 65.00{\scriptsize $\pm$0.00} 
\\
& 
& WBoN  
& 50.92{\scriptsize $\pm$1.77} 
& 55.24{\scriptsize $\pm$0.51} 
& 59.46{\scriptsize $\pm$0.78}
& 62.80{\scriptsize $\pm$0.78} 
& 64.02{\scriptsize $\pm$0.71} 
& 65.00{\scriptsize $\pm$0.00} 
\\

\cmidrule(lr){2-9}

& \multirow{2}{*}{Self-Certainty} 
& Maj@N 
& 44.00{\scriptsize $\pm$1.82} 
& 48.48{\scriptsize $\pm$1.06} 
& 55.56{\scriptsize $\pm$1.08} 
& 60.04{\scriptsize $\pm$0.56} 
& 62.66{\scriptsize $\pm$0.75} 
& 65.40{\scriptsize $\pm$0.00} 
\\
& 
& WBoN  
& 44.00{\scriptsize $\pm$1.82} 
& 48.48{\scriptsize $\pm$1.06} 
& 55.58{\scriptsize $\pm$1.06}
& 59.90{\scriptsize $\pm$0.51} 
& 62.52{\scriptsize $\pm$0.53} 
& 64.80{\scriptsize $\pm$0.00} 
\\

\cmidrule(lr){2-9}

& \multirow{2}{*}{DeepConf} 
& Maj@N 
& 49.88{\scriptsize $\pm$1.29} 
& 55.04{\scriptsize $\pm$1.44} 
& 59.74{\scriptsize $\pm$1.17} 
& 62.40{\scriptsize $\pm$0.63} 
& 64.30{\scriptsize $\pm$0.63} 
& 65.20{\scriptsize $\pm$0.00} 
\\
& 
& WBoN  
& 49.88{\scriptsize $\pm$1.29} 
& 54.42{\scriptsize $\pm$1.46} 
& 58.22{\scriptsize $\pm$1.00}
& 60.90{\scriptsize $\pm$1.02} 
& 63.14{\scriptsize $\pm$0.55} 
& 64.80{\scriptsize $\pm$0.00} 
\\

\cmidrule(lr){2-9}

& \multirow{2}{*}{\ours (TU, Ours)} 
& Maj@N 
& \underline{51.26{\scriptsize $\pm$1.36}} 
& \underline{55.54{\scriptsize $\pm$0.70}}
& 59.44{\scriptsize $\pm$1.31} 
& 62.28{\scriptsize $\pm$0.95} 
& 63.86{\scriptsize $\pm$0.44} 
& 65.20{\scriptsize $\pm$0.00} 
\\
& 
& WBoN  
& \underline{51.26{\scriptsize $\pm$1.36}} 
& \underline{55.54{\scriptsize $\pm$0.70}} 
& 59.44{\scriptsize $\pm$1.30}
& 62.32{\scriptsize $\pm$1.08} 
& 63.84{\scriptsize $\pm$0.51} 
& 65.20{\scriptsize $\pm$0.00} 
\\

\cmidrule(lr){2-9}

& \multirow{2}{*}{\ours (AU, Ours)} 
& Maj@N 
& 51.16{\scriptsize $\pm$1.45} 
& 55.52{\scriptsize $\pm$0.66} 
& 59.42{\scriptsize $\pm$1.16} 
& 62.32{\scriptsize $\pm$1.07} 
& 64.00{\scriptsize $\pm$0.44} 
& \underline{65.60{\scriptsize $\pm$0.00}}
\\
& 
& WBoN  
& 51.16{\scriptsize $\pm$1.45} 
& 55.52{\scriptsize $\pm$0.66} 
& 59.44{\scriptsize $\pm$1.19}
& 62.34{\scriptsize $\pm$1.16} 
& 63.92{\scriptsize $\pm$0.47} 
& \underline{65.60{\scriptsize $\pm$0.00}} 
\\

\cmidrule(lr){2-9}

& \multirow{2}{*}{\ours (EU, Ours)} 
& Maj@N 
& \textbf{52.40{\scriptsize $\pm$1.39}} 
& \textbf{57.02{\scriptsize $\pm$0.61}} 
& \underline{60.90{\scriptsize $\pm$0.93}}
& \textbf{64.24{\scriptsize $\pm$0.83}} 
& \underline{65.32{\scriptsize $\pm$0.80}} 
& \textbf{67.00{\scriptsize $\pm$0.00}} 
\\
& 
& WBoN  
& \textbf{52.40{\scriptsize $\pm$1.39}} 
& \textbf{57.02{\scriptsize $\pm$0.61}}
& \textbf{61.04{\scriptsize $\pm$0.88}}
& \underline{64.20{\scriptsize $\pm$0.76}} 
& \textbf{65.48{\scriptsize $\pm$0.75}} 
& \textbf{67.00{\scriptsize $\pm$0.00}} 
\\

\bottomrule[0.12em]
\end{tabular}
}
\label{tab:scaling-detailed}
\end{table*}

\subsection{\ours for Test-Time Scaling (Online)}
\label{app:scaling-online}
One popular approach to improving model performance uses a Process Reward Model~(PRM) to score each intermediate step during multi-step generation \cite{guan2025rstar, puri2025probabilistic, beeching2024scalingtesttimecompute, uesato2022solving, lightman2023let}, thereby guiding the model’s reasoning path. In this section, we explore an alternative: guiding the generation process using uncertainty as an intrinsic reward, without relying on an explicit reward model.

\textbf{Experimental Setting.}\quad Particle Filtering~(PF)~\cite{puri2025probabilistic} is an inference-time scaling method for LLM reasoning~(details in \appref{app:algorithm}). Building upon this algorithm, we use uncertainty as the score for each particle at each step to guide the model’s generation process. We set the number of particles to $N$~=~16
and the decoding temperature to $\tau$~=~0.8. 
We repeat the experiments with three different random seeds to obtain the mean and standard deviation across runs.

\textbf{Results.}\quad 
\Tabref{tab:implicit} shows the results. Compared to LL, our \ours, especially \ours~(EU), yields a slight performance gain. Given that guiding generation through stepwise scoring is inherently challenging, we consider the lack of a significant performance gain from uncertainty estimation to be acceptable. Nevertheless, we believe this experiment offers valuable insights that may inform the future design of process reward models.

\begin{table}[H]
\caption{
    \textbf{\ours as Implicit Reward for Test-Time Scaling (Online), on MATH500.} 
}
\label{tab:uncertainty-reward}
\begin{center}
\resizebox{0.6\linewidth}{!}{%
\setlength{\tabcolsep}{20pt}
\begin{tabular}{lcc}
\toprule

\textbf{Intrinsic Reward}
&  BoN
    & WBoN 
    \\
\midrule

LL
& 26.27\scriptsize{$\pm$0.25}
& 26.27\scriptsize{$\pm$0.41}
\\

\midrule
\ours~(TU, Ours)                
& 27.93\scriptsize{$\pm$0.25} 
& 28.13\scriptsize{$\pm$0.38} 
\\

\ours~(AU, Ours)      
& 25.20\scriptsize{$\pm$0.99}
& 25.13\scriptsize{$\pm$0.74}
\\

\ours~(EU, Ours)
& \textbf{28.93\scriptsize{$\pm$0.08}}
& \textbf{29.20\scriptsize{$\pm$0.98}}
\\
\bottomrule
\end{tabular}
}
\end{center}
\label{tab:implicit}
\end{table}

\begin{table*}[h]
\caption{
    {\textbf{Uncertainties for Incorrect Reasoning Path Detection.} AUROC, AUPRC, and ACC$^*$ are all reported as percentage~(\%), where ACC$^*$ (\%) denotes the accuracy of the Top 50\% generations identified by different uncertainty measures. \hl{Rows with shading} indicate methods \emph{without} \textbf{L}ength \textbf{N}ormalization (\textbf{LN}) for uncertainty estimation. }
}
\vspace{-1em}
\label{tab:uncertainty-single-greedy-full}
\begin{center}
\resizebox{1\linewidth}{!}{%
\setlength{\tabcolsep}{2pt}
\begin{tabular}{lcccc ccccc}
\toprule

\multirow{2}{*}{\textbf{\makecell{Method}}} & \multicolumn{3}{c}{\textbf{MATH500}} 
& \multicolumn{3}{c}{\textbf{GSM8K}}
& \multicolumn{3}{c}{\textbf{DeepScaleR}} \\
\cmidrule(lr){2-4} \cmidrule(lr){5-7} \cmidrule(lr){8-10}

& {AUROC} & {AUPRC} & {ACC$^*$} 
& {AUROC} & {AUPRC} & {ACC$^*$} 
& {AUROC} & {AUPRC} & {ACC$^*$} \\
\midrule
\multicolumn{10}{l}{\texttt{Llama-3.2-1B-Instruct}} \\
\midrule

\quad SE
& 47.29\scriptsize{$\pm$3.81}
& 25.71\scriptsize{$\pm$2.33}
& 24.13\scriptsize{$\pm$4.42}
& 50.64\scriptsize{$\pm$4.44}
& 45.09\scriptsize{$\pm$0.72}
& 42.62\scriptsize{$\pm$0.16}

& 46.30\scriptsize{$\pm$0.21}
& 12.94\scriptsize{$\pm$0.23}
& 12.58\scriptsize{$\pm$0.49}
\\

\quad SAR
& 44.57\scriptsize{$\pm$2.04}
& 24.03\scriptsize{$\pm$2.53}
& 21.07\scriptsize{$\pm$1.62}

& 50.28\scriptsize{$\pm$0.97}
& 43.24\scriptsize{$\pm$0.89}
& 43.95\scriptsize{$\pm$0.77}

& 43.14\scriptsize{$\pm$1.42}
& 12.34\scriptsize{$\pm$0.35}
& 11.14\scriptsize{$\pm$0.47}
\\
\quad $U_{Ecc}$
& 48.75\scriptsize{$\pm$1.05} 
& 25.79\scriptsize{$\pm$1.83} 
& 25.20\scriptsize{$\pm$0.33} 
& 49.05\scriptsize{$\pm$0.46} 
& 60.02\scriptsize{$\pm$0.44} 
& 59.62\scriptsize{$\pm$0.22} 
& 48.68\scriptsize{$\pm$0.24} 
& 13.77\scriptsize{$\pm$0.29} 
& 14.23\scriptsize{$\pm$0.45}
\\

\quad $U_{Deg}$ 
& 60.57\scriptsize{$\pm$2.31} 
& 36.32\scriptsize{$\pm$2.59} 
& 30.93\scriptsize{$\pm$0.94} 
& 66.60\scriptsize{$\pm$0.36} 
& 75.72\scriptsize{$\pm$0.36} 
& 71.99\scriptsize{$\pm$0.39} 
& 56.88\scriptsize{$\pm$0.54} 
& 18.04\scriptsize{$\pm$0.63} 
& 16.50\scriptsize{$\pm$0.39} 
\\
\quad P(True)
& 54.38\scriptsize{$\pm$1.20} 
& 26.39\scriptsize{$\pm$1.26} 
& 27.60\scriptsize{$\pm$1.18}
& 56.64\scriptsize{$\pm$0.04}
& 48.22\scriptsize{$\pm$0.03}
& 48.92\scriptsize{$\pm$0.00}
& 59.58\scriptsize{$\pm$0.43}
& 17.48\scriptsize{$\pm$0.25}
& 17.52\scriptsize{$\pm$0.50}
\\
\quad LLM-Check
& 56.41\scriptsize{$\pm$0.96}
& 27.01\scriptsize{$\pm$1.22}
& 31.33\scriptsize{$\pm$1.29}

& 71.01\scriptsize{$\pm$0.02}
& 61.29\scriptsize{$\pm$0.08}
& 59.54\scriptsize{$\pm$0.00}

& 55.76\scriptsize{$\pm$0.48}
& 14.55\scriptsize{$\pm$0.26}
& 17.30\scriptsize{$\pm$0.51}
\\
\quad INSIDE
& 55.71\scriptsize{$\pm$4.69}
& 28.82\scriptsize{$\pm$4.05}
& 29.20\scriptsize{$\pm$4.33}

& 53.66\scriptsize{$\pm$0.92}
& 46.03\scriptsize{$\pm$0.23}
& 45.79\scriptsize{$\pm$1.25}

& 54.73\scriptsize{$\pm$0.82}
& 15.50\scriptsize{$\pm$0.48}
& 16.30\scriptsize{$\pm$0.35}
\\
\quad PE
& 57.08\scriptsize{$\pm$0.89}
& 26.88\scriptsize{$\pm$1.05}
& 31.33\scriptsize{$\pm$0.82}
& 71.21\scriptsize{$\pm$0.03}
& 61.61\scriptsize{$\pm$0.08}
& 59.85\scriptsize{$\pm$0.00}

& 56.09\scriptsize{$\pm$0.46}
& 14.74\scriptsize{$\pm$0.23}
& 17.33\scriptsize{$\pm$0.92}
\\




\quad LL
& 55.41\scriptsize{$\pm$0.54}
& 25.88\scriptsize{$\pm$0.87}
& 29.87\scriptsize{$\pm$0.82}
& 69.01\scriptsize{$\pm$0.03}
& 58.51\scriptsize{$\pm$0.09}
& 57.38\scriptsize{$\pm$0.00}
& 53.84\scriptsize{$\pm$0.47}
& 13.93\scriptsize{$\pm$0.23}
& 16.83\scriptsize{$\pm$0.48}
\\
\rowcolor{lightergray}
\quad\quad - \texttt{LN}
& 79.38\scriptsize{$\pm$0.27} 
& 54.64\scriptsize{$\pm$0.75} 
& 43.73\scriptsize{$\pm$0.61}

& 73.67\scriptsize{$\pm$0.00} 
& 68.88\scriptsize{$\pm$0.00} 
& 60.92\scriptsize{$\pm$0.00}

& 82.62\scriptsize{$\pm$0.01} 
& 45.76\scriptsize{$\pm$0.02} 
& 25.43\scriptsize{$\pm$0.02}
\\

\quad Self-Certainty
& 71.17\scriptsize{$\pm$0.30} 
& 48.37\scriptsize{$\pm$0.50} 
& 38.13\scriptsize{$\pm$0.61}

& 73.41\scriptsize{$\pm$0.00} 
& 68.38\scriptsize{$\pm$0.00} 
& 61.38\scriptsize{$\pm$0.00}

& 71.93\scriptsize{$\pm$0.04} 
& 33.81\scriptsize{$\pm$0.08} 
& 21.76\scriptsize{$\pm$0.04}
\\
\rowcolor{lightergray}
\quad\quad - \texttt{LN}
& 23.76\scriptsize{$\pm$0.34} 
& 17.04\scriptsize{$\pm$0.10} 
& 11.20\scriptsize{$\pm$0.00} 

& 34.42\scriptsize{$\pm$0.00} 
& 34.20\scriptsize{$\pm$0.00} 
& 31.54\scriptsize{$\pm$0.00}

& 21.33\scriptsize{$\pm$0.02} 
&  8.57\scriptsize{$\pm$0.01} 
&  4.04\scriptsize{$\pm$0.00}
\\

\quad DeepConf
& 71.77\scriptsize{$\pm$0.12} 
& 46.00\scriptsize{$\pm$0.42} 
& 39.87\scriptsize{$\pm$0.46}

& 75.70\scriptsize{$\pm$0.00} 
& 69.72\scriptsize{$\pm$0.00} 
& 62.77\scriptsize{$\pm$0.00}

& 71.65\scriptsize{$\pm$0.04} 
& 29.99\scriptsize{$\pm$0.05} 
& 22.00\scriptsize{$\pm$0.04}
\\
\rowcolor{lightergray}
\quad\quad - \texttt{LN}
& 25.79\scriptsize{$\pm$0.34} 
& 17.41\scriptsize{$\pm$0.10} 
& 11.47\scriptsize{$\pm$0.23} 

& 38.84\scriptsize{$\pm$0.00} 
& 36.22\scriptsize{$\pm$0.00} 
& 35.23\scriptsize{$\pm$0.00}

& 23.87\scriptsize{$\pm$0.01} 
&  8.80\scriptsize{$\pm$0.01} 
&  4.91\scriptsize{$\pm$0.02}
\\

\quad \ours~(TU, Ours)
& 57.14\scriptsize{$\pm$0.81}
& 26.92\scriptsize{$\pm$0.98}
& 31.87\scriptsize{$\pm$1.00}
& 70.92\scriptsize{$\pm$0.04}
& 61.32\scriptsize{$\pm$0.13}
& 58.92\scriptsize{$\pm$0.15}
& 56.20\scriptsize{$\pm$0.49}
& 14.79\scriptsize{$\pm$0.20}
& 17.52\scriptsize{$\pm$0.53}
\\
\rowcolor{lightergray}
\quad\quad - \texttt{LN}
& 80.64\scriptsize{$\pm$0.29} 
& 56.79\scriptsize{$\pm$0.74} 
& 44.67\scriptsize{$\pm$0.46}

& 75.07\scriptsize{$\pm$0.05} 
& 70.29\scriptsize{$\pm$0.07} 
& 62.31\scriptsize{$\pm$0.00}

& 83.55\scriptsize{$\pm$0.02} 
& 47.56\scriptsize{$\pm$0.04} 
& 25.71\scriptsize{$\pm$0.02}
\\
\quad \ours~(AU, Ours)
& 56.95\scriptsize{$\pm$0.82}
& 26.81\scriptsize{$\pm$0.99}
& 31.60\scriptsize{$\pm$0.98}
& 70.90\scriptsize{$\pm$0.05}
& 61.26\scriptsize{$\pm$0.13}
& 58.87\scriptsize{$\pm$0.32}
& 56.02\scriptsize{$\pm$0.49}
& 14.73\scriptsize{$\pm$0.19}
& 17.47\scriptsize{$\pm$0.47}
\\
\rowcolor{lightergray}
\quad\quad - \texttt{LN}
& 80.61\scriptsize{$\pm$0.27} 
& 56.73\scriptsize{$\pm$0.75} 
& 44.67\scriptsize{$\pm$0.46}

& 75.03\scriptsize{$\pm$0.06} 
& 70.22\scriptsize{$\pm$0.05} 
& 62.21\scriptsize{$\pm$0.18}

& 83.52\scriptsize{$\pm$0.02} 
& 47.48\scriptsize{$\pm$0.05} 
& 25.71\scriptsize{$\pm$0.02}
\\
\quad \ours~(EU, Ours)
& 61.64\scriptsize{$\pm$0.97}
& 31.07\scriptsize{$\pm$1.31}
& 33.20\scriptsize{$\pm$1.42}

& 65.98\scriptsize{$\pm$0.75}
& 60.02\scriptsize{$\pm$0.82}
& 56.05\scriptsize{$\pm$0.73}
& 62.10\scriptsize{$\pm$0.09}
& 17.73\scriptsize{$\pm$0.35}
& 19.10\scriptsize{$\pm$0.29}
\\
\rowcolor{lightergray}
\quad\quad - \texttt{LN}
& 79.74\scriptsize{$\pm$0.21} 
& 56.64\scriptsize{$\pm$0.41} 
& 44.13\scriptsize{$\pm$0.83}

& 71.79\scriptsize{$\pm$0.80} 
& 66.40\scriptsize{$\pm$1.02} 
& 59.74\scriptsize{$\pm$1.00}

& 82.87\scriptsize{$\pm$0.32} 
& 46.76\scriptsize{$\pm$0.38} 
& 25.52\scriptsize{$\pm$0.11}
\\
\midrule
\quad Negative Length
& 76.27\scriptsize{$\pm$0.34} 
& 49.55\scriptsize{$\pm$1.00} 
& 41.87\scriptsize{$\pm$0.46}

& 65.69\scriptsize{$\pm$0.00} 
& 56.72\scriptsize{$\pm$0.00} 
& 56.87\scriptsize{$\pm$0.18}

& 78.74\scriptsize{$\pm$0.02} 
& 35.97\scriptsize{$\pm$0.07} 
& 24.48\scriptsize{$\pm$0.04}

\\
\midrule
\multicolumn{10}{l}{\texttt{Llama-3.1-8B-Instruct}} \\
\midrule

\quad SE
& 62.93\scriptsize{{$\pm$0.90}} 
& 55.21\scriptsize{{$\pm$1.04}}
& 55.73\scriptsize{{$\pm$0.83}} 

& 55.61\scriptsize{$\pm$3.36}
& 87.16\scriptsize{$\pm$1.14}
& 86.77\scriptsize{$\pm$1.01}

& 67.68\scriptsize{{$\pm$0.94}}
& 35.18\scriptsize{{$\pm$1.00}}
& 35.55\scriptsize{{$\pm$0.37}}
\\

\quad SAR
& 69.42\scriptsize{{$\pm$2.19}}
& 63.74\scriptsize{{$\pm$3.03}}
& 59.20\scriptsize{{$\pm$1.06}}
& 60.16\scriptsize{$\pm$2.22}
& 89.24\scriptsize{$\pm$0.74}
& 87.99\scriptsize{$\pm$0.81}

& 73.01\scriptsize{{$\pm$0.28}}
& 42.89\scriptsize{{$\pm$0.65}}
& 37.51\scriptsize{{$\pm$0.12}}
\\
\quad $U_{Ecc}$
& 50.23\scriptsize{$\pm$2.23}
& 49.48\scriptsize{$\pm$2.44}
& 49.60\scriptsize{$\pm$2.04}

& 47.47\scriptsize{$\pm$2.15}
& 84.69\scriptsize{$\pm$0.89}
& 84.87\scriptsize{$\pm$1.17}

& 50.16\scriptsize{$\pm$0.66}
& 25.08\scriptsize{$\pm$0.18}
& 25.48\scriptsize{$\pm$0.53}
\\

\quad $U_{Deg}$ 
& 58.62\scriptsize{$\pm$0.36}
& 57.69\scriptsize{$\pm$0.90}
& 53.47\scriptsize{$\pm$1.64}

& 67.22\scriptsize{$\pm$1.06}
& 92.24\scriptsize{$\pm$0.53}
& 92.62\scriptsize{$\pm$0.88}

& 59.14\scriptsize{$\pm$0.37}
& 32.64\scriptsize{$\pm$0.43}
& 29.75\scriptsize{$\pm$0.36}
\\
\quad P(True)
& 33.41\scriptsize{$\pm$0.25}
& 36.05\scriptsize{$\pm$0.55}
& 35.33\scriptsize{$\pm$0.19}
& 41.94\scriptsize{$\pm$0.01}
& 82.19\scriptsize{$\pm$0.00}
& 82.77\scriptsize{$\pm$0.00}
& 33.64\scriptsize{$\pm$0.20}
& 18.06\scriptsize{$\pm$0.06}
& 16.23\scriptsize{$\pm$0.02}
\\
\quad LLM-Check
& 57.41\scriptsize{{$\pm$0.44}} 
& 49.69\scriptsize{{$\pm$1.07}} 
& 52.80\scriptsize{{$\pm$1.38}} 

& 73.98\scriptsize{$\pm$0.01}
& 93.37\scriptsize{$\pm$0.01}
& 93.23\scriptsize{$\pm$0.00}

& 55.42\scriptsize{{$\pm$0.27}} 
& 26.46\scriptsize{{$\pm$0.19}} 
& 28.37\scriptsize{{$\pm$0.40}}
\\
\quad INSIDE
& 62.94\scriptsize{{$\pm$1.72}}
& 55.06\scriptsize{{$\pm$3.19}}
& 57.33\scriptsize{{$\pm$1.01}}
& 58.86\scriptsize{$\pm$2.11}
& 87.44\scriptsize{$\pm$0.94}
& 88.21\scriptsize{$\pm$0.90}

& 67.05\scriptsize{{$\pm$0.49}}
& 33.83\scriptsize{{$\pm$0.42}}
& 34.13\scriptsize{{$\pm$0.10}}
\\
\quad PE
& 57.98\scriptsize{$\pm$0.49}
& 49.72\scriptsize{$\pm$0.84}
& 53.07\scriptsize{$\pm$0.94}
& 74.03\scriptsize{$\pm$0.01}
& 93.37\scriptsize{$\pm$0.00}
& 93.23\scriptsize{$\pm$0.00}
& 55.90\scriptsize{$\pm$0.23}
& 26.80\scriptsize{$\pm$0.16}
& 28.65\scriptsize{$\pm$0.22}
\\



\quad LL
& 55.36\scriptsize{$\pm$0.49}
& 47.24\scriptsize{$\pm$0.90}
& 51.07\scriptsize{$\pm$0.94}
& 72.21\scriptsize{$\pm$0.02}
& 92.64\scriptsize{$\pm$0.00}
& 92.46\scriptsize{$\pm$0.00}
& 52.82\scriptsize{$\pm$0.32}
& 24.48\scriptsize{$\pm$0.13}
& 26.85\scriptsize{$\pm$0.19}
\\
\rowcolor{lightergray}
\quad\quad - \texttt{LN}
& 81.36\scriptsize{$\pm$0.50} 
& 78.80\scriptsize{$\pm$0.36} 
& 72.27\scriptsize{$\pm$0.92}

& 80.03\scriptsize{$\pm$0.01} 
& 95.30\scriptsize{$\pm$0.00} 
& 95.08\scriptsize{$\pm$0.00}

& 84.58\scriptsize{$\pm$0.07} 
& 63.92\scriptsize{$\pm$0.03} 
& 43.69\scriptsize{$\pm$0.14}
\\
\quad Self-Certainty
& 76.41\scriptsize{$\pm$0.61} 
& 76.22\scriptsize{$\pm$0.87} 
& 69.07\scriptsize{$\pm$0.83}

& 80.60\scriptsize{$\pm$0.11} 
& 95.65\scriptsize{$\pm$0.03} 
& 96.26\scriptsize{$\pm$0.09}

& 76.72\scriptsize{$\pm$0.09} 
& 56.15\scriptsize{$\pm$0.30} 
& 39.03\scriptsize{$\pm$0.23}
\\
\rowcolor{lightergray}
\quad\quad - \texttt{LN}
& 21.94\scriptsize{$\pm$0.58} 
& 33.57\scriptsize{$\pm$0.42} 
& 28.40\scriptsize{$\pm$1.06}

& 26.44\scriptsize{$\pm$0.01} 
& 75.43\scriptsize{$\pm$0.00} 
& 77.85\scriptsize{$\pm$0.00}

& 18.80\scriptsize{$\pm$0.21} 
& 15.15\scriptsize{$\pm$0.10} 
&  8.17\scriptsize{$\pm$0.28}
\\
\quad DeepConf
& 71.86\scriptsize{$\pm$0.70} 
& 69.57\scriptsize{$\pm$0.94} 
& 66.27\scriptsize{$\pm$1.15}

& 83.30\scriptsize{$\pm$0.07} 
& 96.23\scriptsize{$\pm$0.02} 
& 96.56\scriptsize{$\pm$0.09}

& 73.05\scriptsize{$\pm$0.08} 
& 48.76\scriptsize{$\pm$0.10} 
& 37.48\scriptsize{$\pm$0.14}
\\
\rowcolor{lightergray}
\quad\quad - \texttt{LN}
& 23.73\scriptsize{$\pm$0.62} 
& 34.08\scriptsize{$\pm$0.44} 
& 30.80\scriptsize{$\pm$1.06}

& 30.86\scriptsize{$\pm$0.01} 
& 77.30\scriptsize{$\pm$0.00} 
& 79.38\scriptsize{$\pm$0.00}

& 21.20\scriptsize{$\pm$0.24} 
& 15.50\scriptsize{$\pm$0.11} 
&  9.27\scriptsize{$\pm$0.16}
\\
\quad \ours~(TU, Ours)
& 56.49\scriptsize{$\pm$0.46}
& 48.24\scriptsize{$\pm$0.85}
& 52.13\scriptsize{$\pm$0.75}
& 73.98\scriptsize{$\pm$0.05}
& 93.27\scriptsize{$\pm$0.04}
& 93.13\scriptsize{$\pm$0.09}
& 54.86\scriptsize{$\pm$0.17}
& 25.97\scriptsize{$\pm$0.12}
& 27.97\scriptsize{$\pm$0.17}
\\
\rowcolor{lightergray}
\quad\quad - \texttt{LN}
& 82.47\scriptsize{$\pm$0.47} 
& 79.62\scriptsize{$\pm$0.33} 
& 74.00\scriptsize{$\pm$0.69}

& 81.01\scriptsize{$\pm$0.04} 
& 95.53\scriptsize{$\pm$0.05} 
& 95.54\scriptsize{$\pm$0.00}

& 85.33\scriptsize{$\pm$0.07} 
& 65.25\scriptsize{$\pm$0.01} 
& 43.91\scriptsize{$\pm$0.09}
\\
\quad \ours~(AU, Ours)
& 56.31\scriptsize{$\pm$0.47}
& 48.11\scriptsize{$\pm$0.84}
& 51.87\scriptsize{$\pm$0.68}
& 73.97\scriptsize{$\pm$0.02}
& 93.26\scriptsize{$\pm$0.03}
& 93.13\scriptsize{$\pm$0.09}
& 54.77\scriptsize{$\pm$0.18} 
& 25.90\scriptsize{$\pm$0.13}
& 27.93\scriptsize{$\pm$0.11}
\\
\rowcolor{lightergray}
\quad\quad - \texttt{LN}
& 82.43\scriptsize{$\pm$0.48} 
& 79.56\scriptsize{$\pm$0.35} 
& 74.00\scriptsize{$\pm$0.69}

& 80.97\scriptsize{$\pm$0.02} 
& 95.52\scriptsize{$\pm$0.03} 
& 95.49\scriptsize{$\pm$0.09}

& 85.31\scriptsize{$\pm$0.07} 
& 65.20\scriptsize{$\pm$0.02} 
& 43.89\scriptsize{$\pm$0.08}
\\
\quad \ours~(EU, Ours)
& 60.92\scriptsize{$\pm$0.46}
& 52.64\scriptsize{$\pm$0.71}
& 56.13\scriptsize{$\pm$1.36}
& 67.92\scriptsize{$\pm$0.72}
& 92.41\scriptsize{$\pm$0.24}
& 92.15\scriptsize{$\pm$0.41}
& 57.42\scriptsize{$\pm$0.23}
& 28.32\scriptsize{$\pm$0.16}
& 29.65\scriptsize{$\pm$0.10}
\\
\rowcolor{lightergray}
\quad\quad - \texttt{LN}
& 82.86\scriptsize{$\pm$0.42} 
& 81.35\scriptsize{$\pm$0.66} 
& 72.40\scriptsize{$\pm$1.20}

& 78.31\scriptsize{$\pm$1.58} 
& 94.91\scriptsize{$\pm$0.59} 
& 94.67\scriptsize{$\pm$0.77}

& 84.92\scriptsize{$\pm$0.28} 
& 65.57\scriptsize{$\pm$0.43} 
& 43.89\scriptsize{$\pm$0.27}
\\
\midrule
\quad Negative Length
& 78.11\scriptsize{$\pm$0.54} 
& 73.81\scriptsize{$\pm$0.28} 
& 68.80\scriptsize{$\pm$0.40}

& 73.64\scriptsize{$\pm$0.01} 
& 93.36\scriptsize{$\pm$0.00} 
& 93.54\scriptsize{$\pm$0.00}

& 81.20\scriptsize{$\pm$0.20} 
& 57.12\scriptsize{$\pm$0.31} 
& 41.57\scriptsize{$\pm$0.16}
\\

\bottomrule
\end{tabular}
}
\end{center}
\end{table*}

\subsection{Ablation Study}
\label{app:experiments-ablation}
This section presents an ablation study on our token-level uncertainty estimation method using low-rank perturbations.
\appref{app:ablation-sigma} examines the effect of varying perturbation strength $\sigma_q$, while \appref{app:ablation-tau} analyzes the impact of different decoding temperatures. {In \appref{app:ablation-ln}, we investigate the effect of length normalization in the context of hallucination detection tasks. Finally, in \appref{app:ablation-assump}, we assess the validity of Assumption~\ref{assump:stepwise} as it pertains to \ours.}

\subsubsection{The Effect of Perturbation Strength $\sigma_q$ on Uncertainty Estimation}
\label{app:ablation-sigma}

\begin{figure}[h]
    \centering
    \begin{subfigure}[t]{0.48\linewidth}
        \centering
        \includegraphics[width=\linewidth]{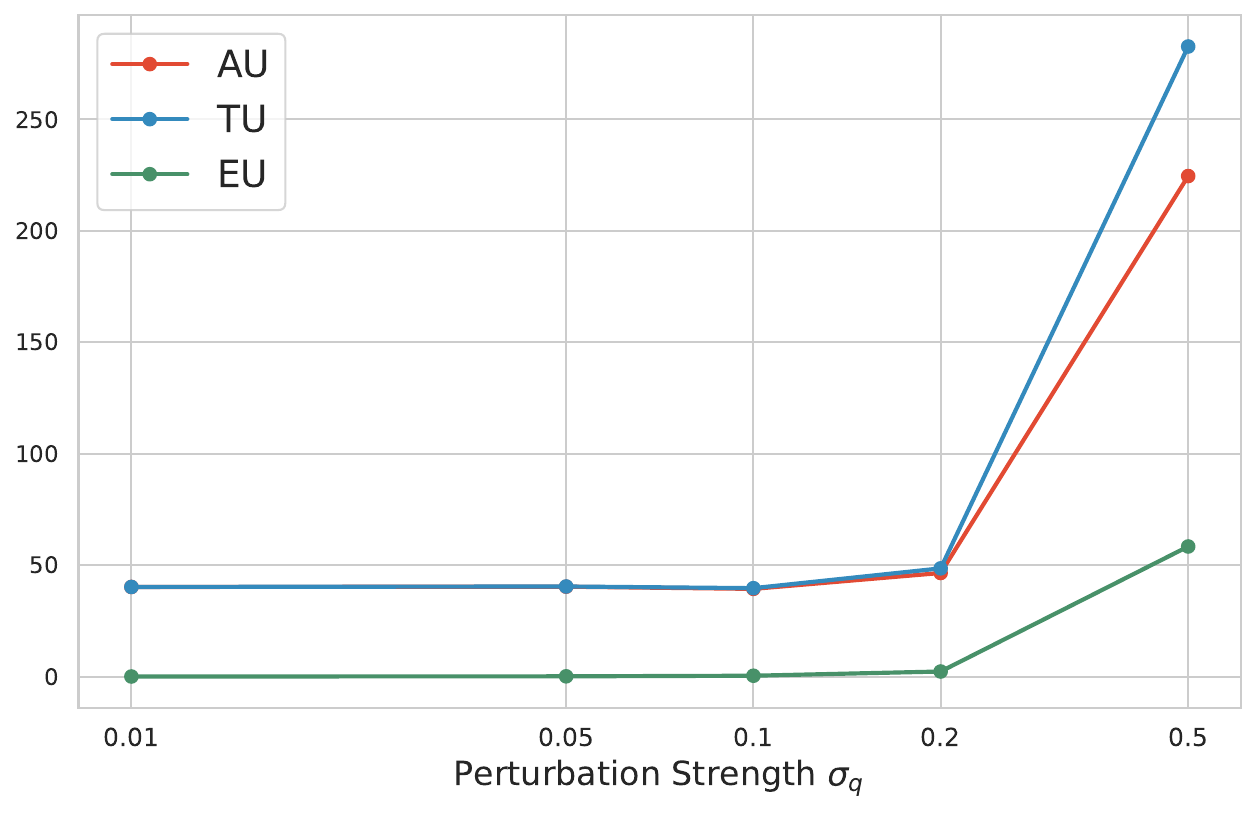}
        \label{fig:ablation-unc}
    \end{subfigure}
    \hfill
    \begin{subfigure}[t]{0.50\linewidth}
        \centering
        \includegraphics[width=\linewidth]{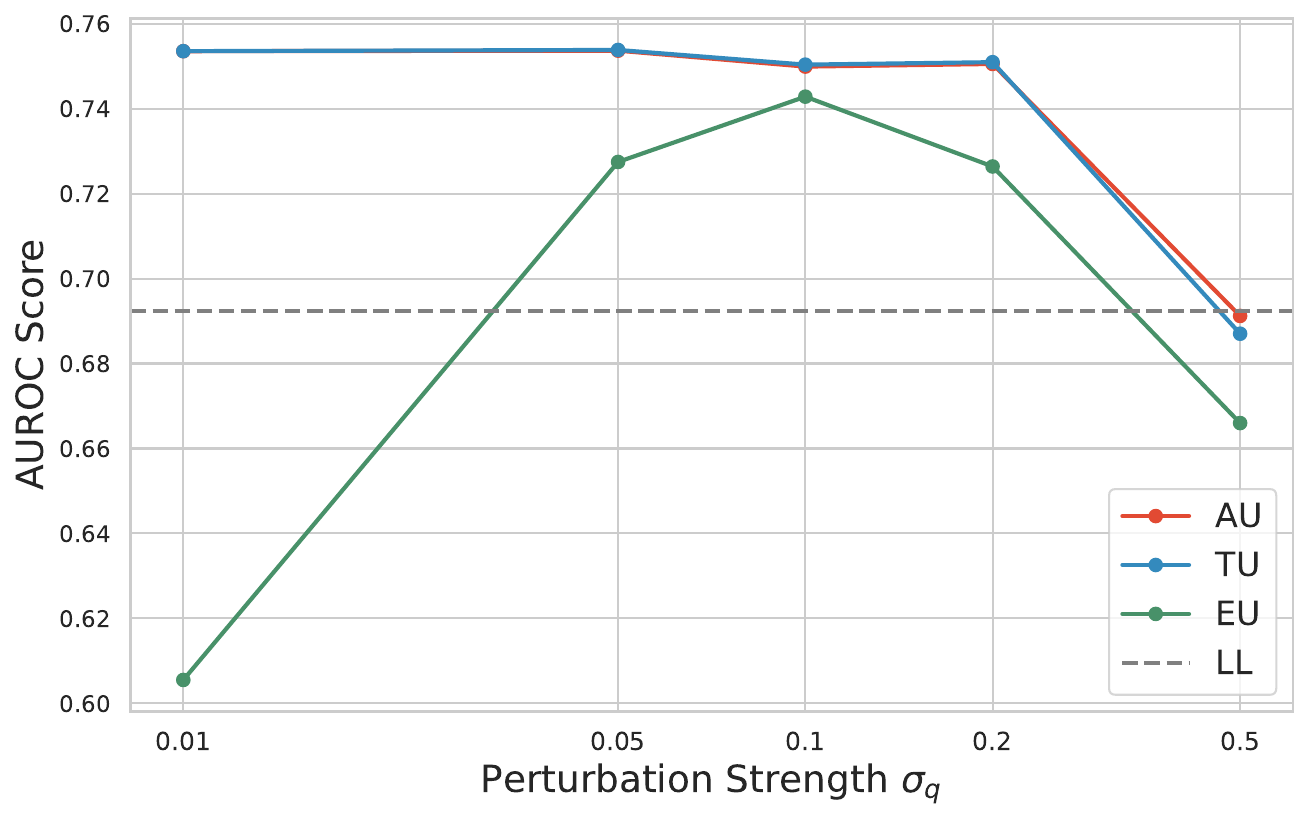}
        \label{fig:ablation-auc}
    \end{subfigure}
    \vspace{-1em}
    \caption{
        \textbf{Left:} Uncertainty estimation with different perturbation strength $\sigma_q$. 
        \textbf{Right:} Influence of perturbation strength on uncertainty-based AUROC scores. 
    } 
    \vspace{0em}
    \label{fig:ablation-sigma}
\end{figure}

To investigate the impact of perturbation strength on uncertainty estimation, \textbf{we conducted a series of experiments under varying $\sigma_q$ settings, as shown in \Figref{fig:ablation-sigma}.} First, we computed the average uncertainty estimates (TU, AU, and EU) on samples generated from the {GSM8K test} dataset using \texttt{Llama-3.2-1B-Instruct}. As illustrated in \Figref{fig:ablation-sigma}~\textbf{Left}, the model’s uncertainty increases steadily with higher perturbation strength. However, once $\sigma_q$ exceeds a critical threshold (e.g., 0.2), a sharp rise in uncertainty is observed. This rise illustrates that the current approximate distribution of the weights has deviated too far from the pre-trained point estimation of the parameters,  leading to unreliable uncertainty estimates.

\textbf{We further evaluate the effect of perturbation strength on downstream task performance.} Specifically, we assess how effectively the uncertainty estimates, obtained under different $\sigma_q$ values, can be used as scoring signals to distinguish between correct and incorrect samples, as described in \Secref{sec:unc-detect}. As shown in \Figref{fig:ablation-sigma}~\textbf{Right}, {for \ours (EU)}, too small $\sigma_q$ does not lead to meaningful improvements in log-likelihood, whereas an excessively large $\sigma_q$ harms performance {of all three \ours variants (AU, TU and EU)} by distorting the original semantic content. Based on these findings, we set $\sigma_q$~=~0.1 for the experiments reported in \Secref{sec:experiments}.

\subsubsection{The Effect of Token Decoding Temperature $\tau$ on Uncertainty Estimation}
\label{app:ablation-tau}

\begin{figure}[t]
    \centering
    \begin{subfigure}[t]{0.48\linewidth}
        \centering
        \includegraphics[width=\linewidth]{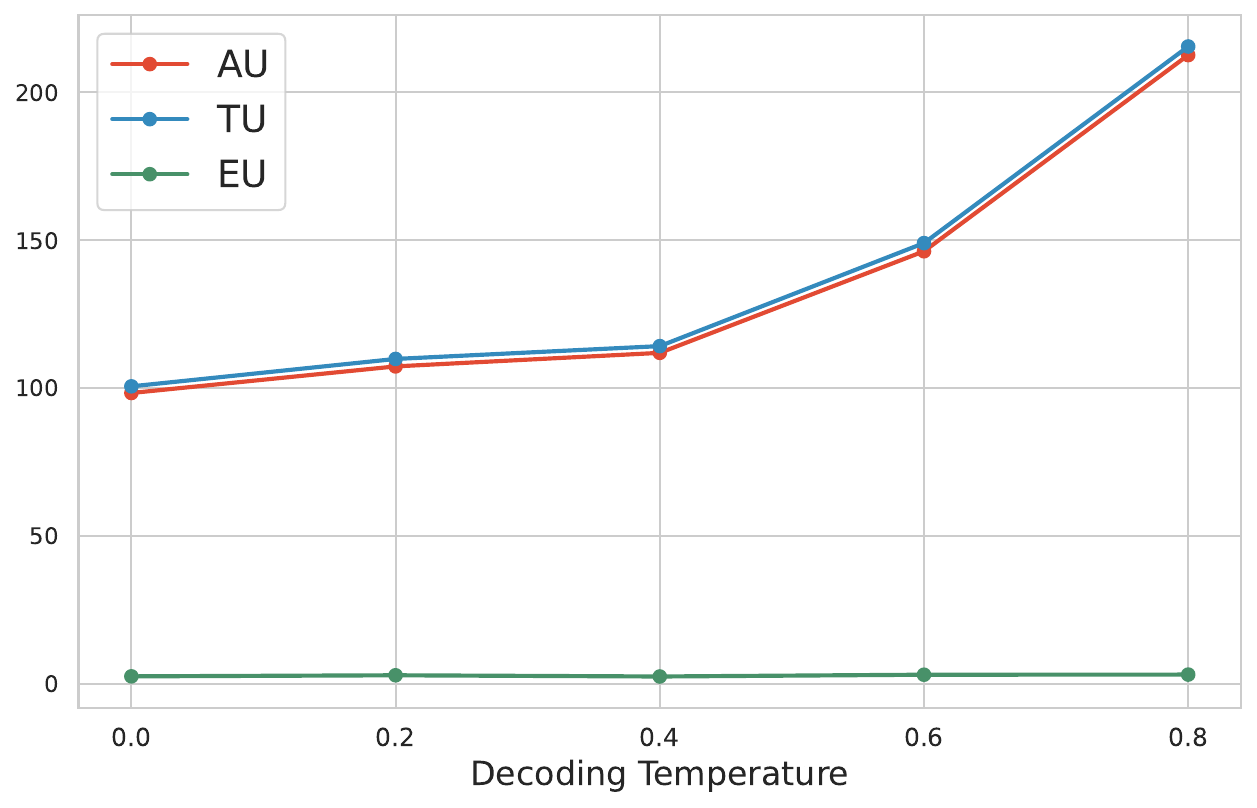}
        \label{fig:ablation-unc-temp}
    \end{subfigure}
    \hfill
    \begin{subfigure}[t]{0.50\linewidth}
        \centering
        \includegraphics[width=\linewidth]{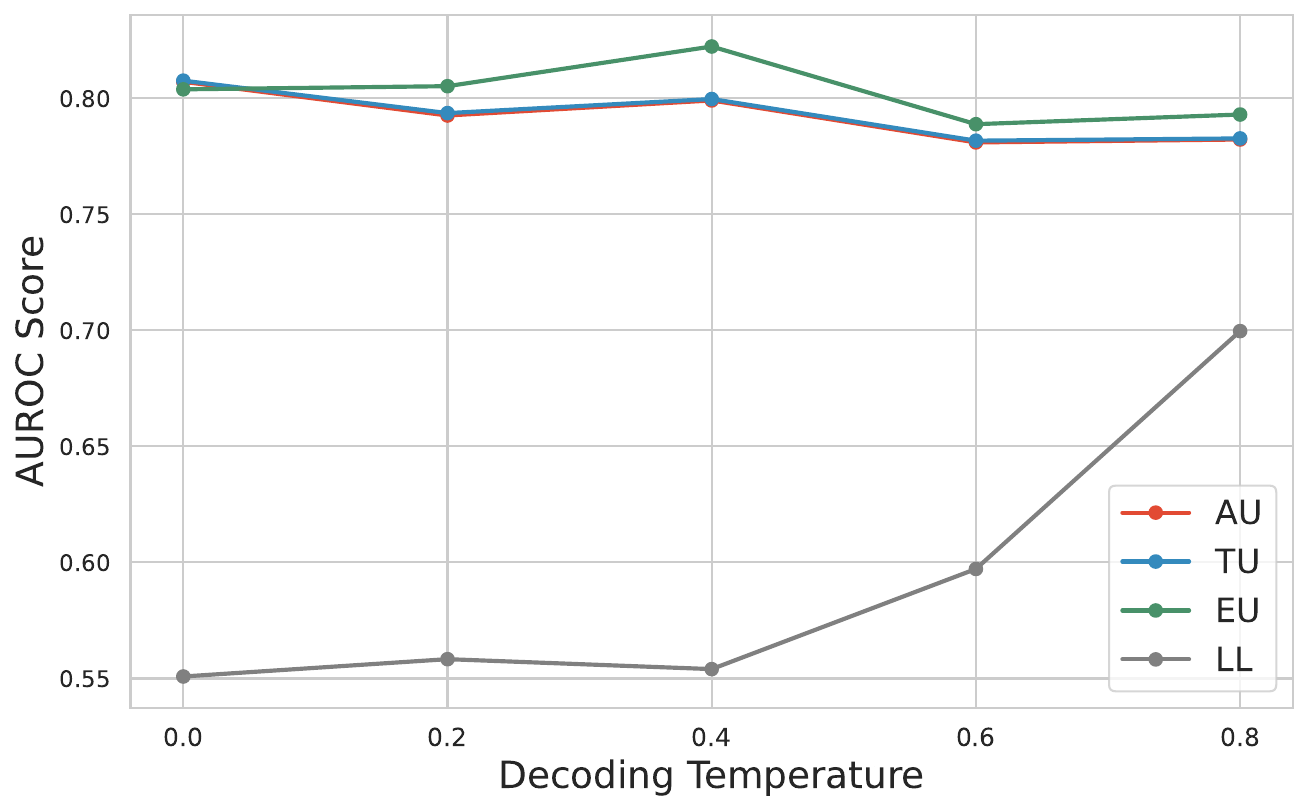}
        \label{fig:ablation-auc-temp}
    \end{subfigure}
    \vspace{-1em}
    \caption{
        \textbf{Left:} Uncertainty estimations in different token decoding temperature $\tau$.
        \textbf{Right:} Influence of token decoding temperature on uncertainty-based AUROC scores.
    } 
    \vspace{0em}
    \label{fig:ablation-temp}
\end{figure}

During text generation with large language models, the decoding temperature introduces uncertainty into the model's output. In general, higher temperatures lead to more diverse responses. In this section, we investigate the relationship between decoding temperature $\tau$ and uncertainties estimated by our token-level approach. Specifically, we use \texttt{Llama-3.2-1B-Instruct} to answer questions from the MATH500 dataset under different decoding temperature settings and estimate the average uncertainty of the model's responses.

As shown in \Figref{fig:ablation-temp}~\textbf{Left}, increasing the decoding temperature $\tau$ results in a notable rise in token-level Aleatoric Uncertainty (AU) of the model, whereas the Epistemic Uncertainty (EU) remains relatively unaffected. Additionally, we report the AUROC scores of various uncertainty estimation approaches across different temperature settings in~\Figref{fig:ablation-temp}~\textbf{Right}. {These results indicate that varying the temperature $\tau$ does not harm the performance of \ours, highlighting its robustness to changes in decoding temperature.}
\subsubsection{Ablation Study of Length Normalization}
\label{app:ablation-ln}
{Length normalization is a standard technique for aggregating \emph{token-level} uncertainty into \emph{sequence-level} uncertainty~\cite{fu2025deep, kang2025scalable}, as it mitigates the bias introduced by sequence length when evaluating generation confidence. However, as described in \Eqref{eq:unc-long-estimation}, we do not apply normalization when computing \ours. To assess the impact of sequence length on uncertainty estimation, we therefore conduct an ablation study on length normalization.}

\paragraph{Experimental Setup.} 
{We investigate the effect of length normalization on incorrect reasoning path detection across three datasets (\textsc{MATH500}, \textsc{GSM8K}, and \textsc{DeepScaleR}), following the same settings as in \Tabref{tab:uncertainty-single-greedy}. We compare \ours with and without \textbf{L}ength \textbf{N}ormalization (\textbf{LN}), along with representative baselines. In addition, we introduce a naive baseline, \textbf{Negative Length}, which uses sequence length alone as a confidence signal.}

\paragraph{Results.} 
{As shown in \Tabref{tab:uncertainty-single-greedy-full}, the impact of length normalization varies significantly across methods. For both \textbf{LL} and \ours, normalization consistently reduces AUROC and AUPRC, indicating that raw sequence length introduces a favorable bias that benefits uncertainty aggregation in de-hallucination tasks. This observation is further reinforced by the strong performance of the \textbf{Negative Length} baseline, which alone achieves competitive results across all datasets. In contrast, Self-Certainty and DeepConf show clear gains with normalization (e.g., Self-Certainty improves from 23.76 to 71.17 AUROC on \textsc{MATH500}), suggesting that normalization is essential for stabilizing their performance. Overall, these findings reveal that the role of length normalization is highly method-dependent.}

\begin{figure}[t]
    \centering
    \includegraphics[width=\linewidth]{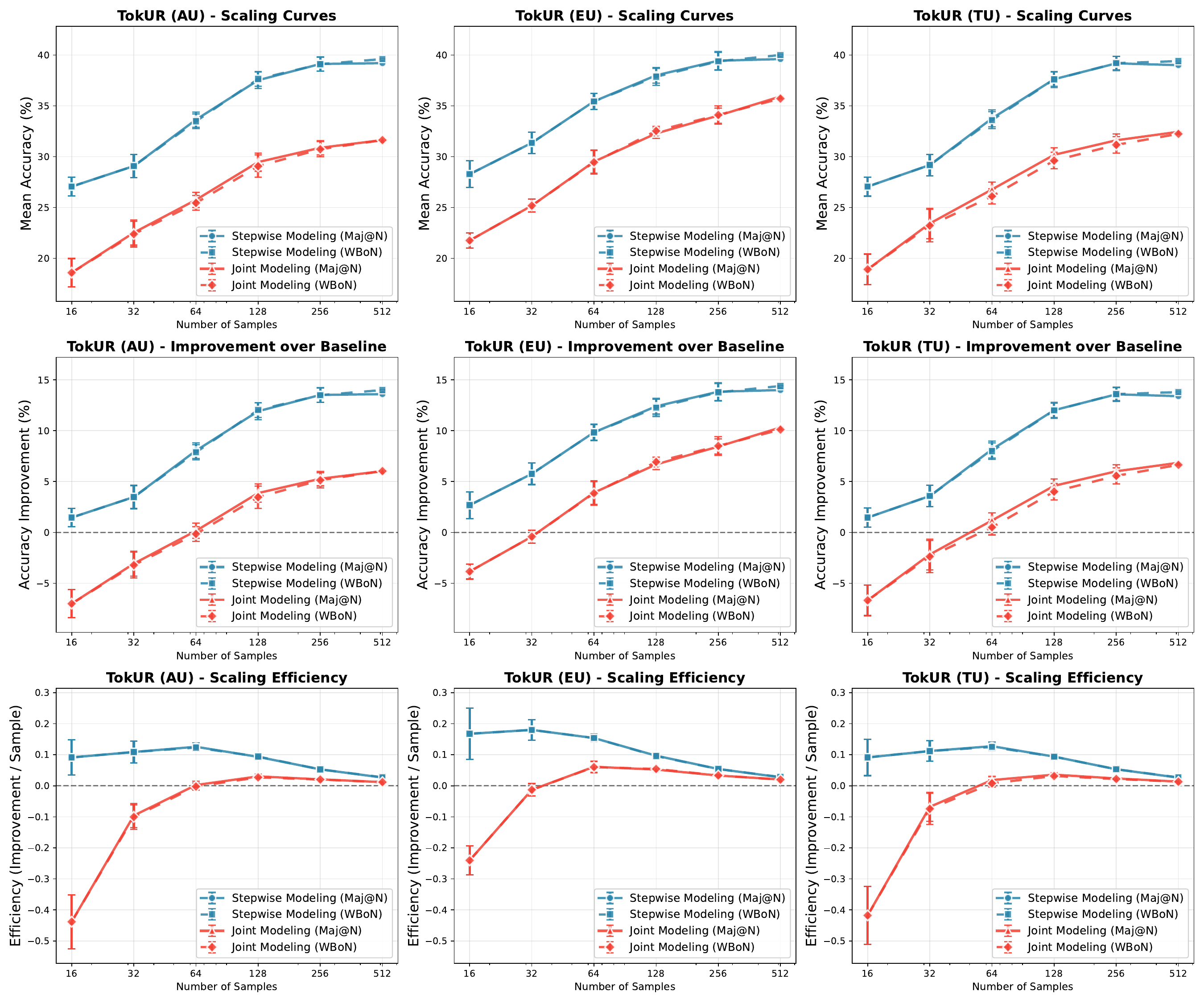}
    \caption{
        {\textbf{Ablation of stepwise posterior sampling.} 
        Comparison of \emph{stepwise} vs. \emph{joint} modeling on \texttt{Llama-3.2-1B-Instruct} across accuracy, improvement, and efficiency. 
        Stepwise modeling consistently achieves better scaling performance, validating Assumption~\ref{assump:stepwise}.}
    }
    \label{fig:scaling-assumption}
\end{figure}

\subsubsection{Ablation Study of Stepwise Posterior Sampling}
\label{app:ablation-assump}
{To examine the validity of Assumption~\ref{assump:stepwise}, \textbf{we perform an ablation study comparing \emph{stepwise} posterior sampling against the \emph{joint} posterior formulation.} Concretely, we evaluate test-time scaling on MATH500 dataset, using \ours with both stepwise and joint modeling on \texttt{Llama-3.2-1B-Instruct}, while keeping all other settings consistent with \Tabref{tab:scaling-models}. }

{As shown in \Figref{fig:scaling-assumption}, stepwise modeling consistently outperforms joint modeling across all uncertainty measures (AU, EU, and TU). Specifically, stepwise sampling achieves higher mean accuracy and larger improvements over the baseline, while also demonstrating superior scaling efficiency with increasing numbers of samples. These results provide strong empirical support for our assumption that posterior samples should not be shared across decoding steps, validating the design choice in Assumption~\ref{assump:stepwise}.}

\subsection{Case Study}
\label{app:vis-token}
In this section, we present several representative examples from the MATH500 and GSM8K datasets, along with their corresponding solutions generated by \texttt{Llama-3.2-1B-Instruct}. We estimate token-level uncertainty for each output using the definitions provided in \Eqref{eq:unc-token-tu}\textasciitilde\Eqref{eq:unc-token-eu}. The visualizations are shown in \Figref{fig:gsm8k-1178}\textasciitilde\Figref{fig:math-260}, where Aleatoric Uncertainty (\hlred{AU, in RED}) and Epistemic Uncertainty (\hlgreen{EU, in GREEN}) are visualized as text-heatmap. The background \hlgray{shading} of each token corresponds to the magnitude of its uncertainty: the darker the shade, the higher the uncertainty, indicating a lower model confidence for that token.

We observe that incorrect solutions consistently exhibit elevated uncertainty at or near the token where the wrong final answer is generated. For instance, as shown in \Figref{fig:gsm8k-1178}, sharp spikes in uncertainties happens with the arithmetic error of reversing ``9600 - 7200'' into ``7200 - 9600''. In contrast, correct solutions tend to show lower uncertainty overall and maintain low uncertainty on key answer tokens.

Furthermore, incorrect outputs tend to contain a higher density of high-uncertainty tokens throughout the solution, whereas correct outputs are generally more consistent and confident. These observations suggest that our token-level uncertainty estimation method can serve as a useful signal for identifying potential reasoning failures or unreliable outputs, offering a valuable diagnostic tool for both model interpretability and downstream error detection.
\begin{figure}[b]
    \centering
    \includegraphics[width=\linewidth]{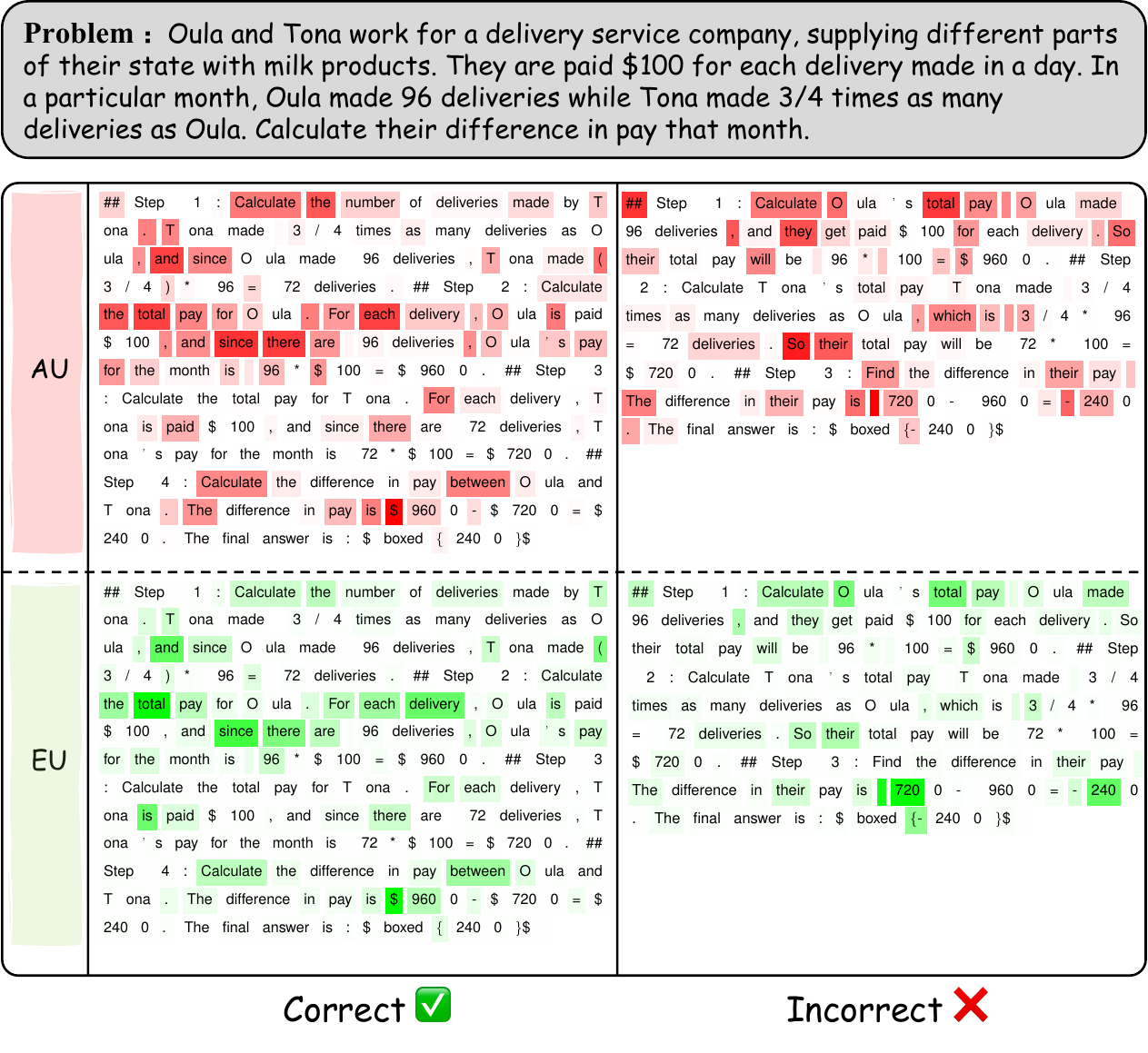}
    \caption{\textbf{Case Study (1/4):} The sample is from GSM8K, whose correct answer is $2400$. In the incorrect solution, the model demonstrated significant uncertainty when mistakenly reversing ``$9600 - 7200$'' as ``$7200 - 9600$'', and also exhibited high uncertainty at the negative sign ``$-$'' in the final answer.}
    \label{fig:gsm8k-1178}
\end{figure}
\newpage
\begin{figure}
    \centering
    \includegraphics[width=\linewidth]{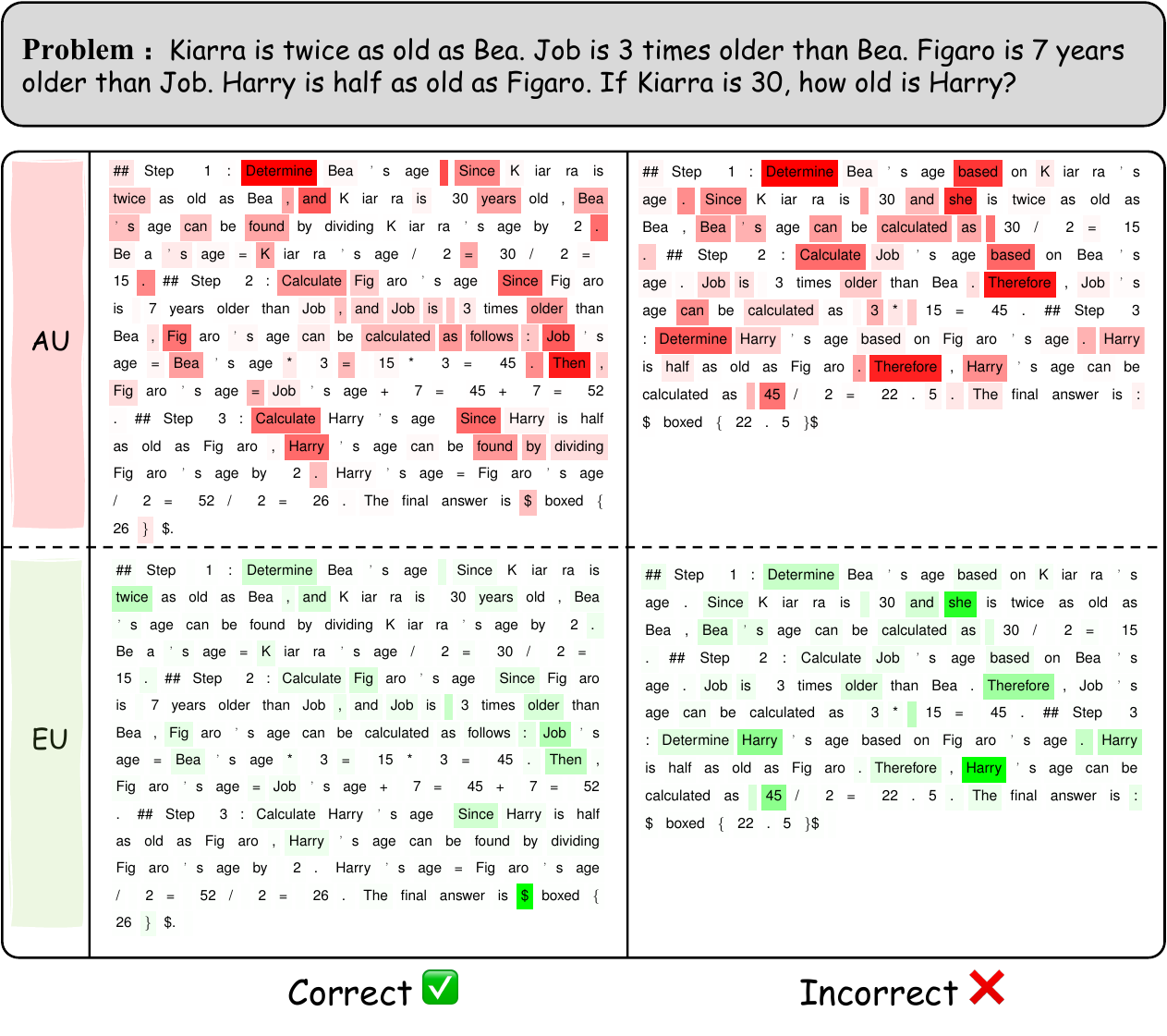}
    \caption{\textbf{Case Study (2/4):} The sample is from GSM8K. In this example, the incorrect solution ignores the critical condition that “Figaro is 7 years older than Job,” leading to the use of 45 instead of 52 in the final calculation. Notably, the model exhibits high uncertainty at the token ``45'' indicating a lack of confidence in its own response at that point. }
    \label{fig:gsm8k-1190}
\end{figure}

\begin{figure}
    \centering
    \includegraphics[width=\linewidth]{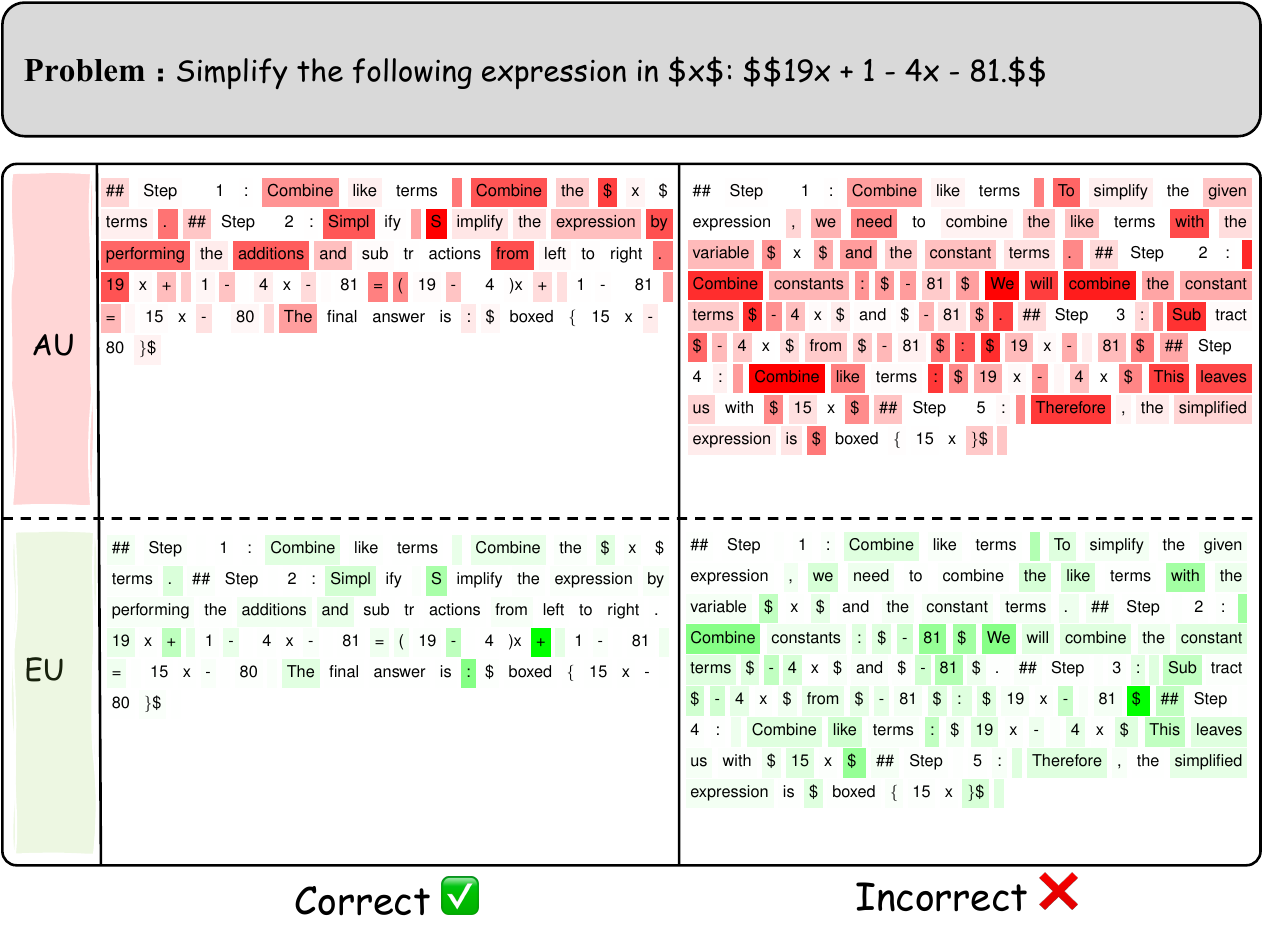}
    \caption{\textbf{Case Study (3/4):} The sample is from MATH500. In this example, the incorrect solution gives its final answer ``$15x$'' in step 4. The model exhibits high uncertainty at the token next to ``$15x$'' because it overlooks the constant term. Furthermore, it can be observed that tokens associated with high uncertainty occur more frequently in the incorrect solution.}
    \label{fig:math-1298}
\end{figure}

\begin{figure}
    \centering
    \includegraphics[width=\linewidth]{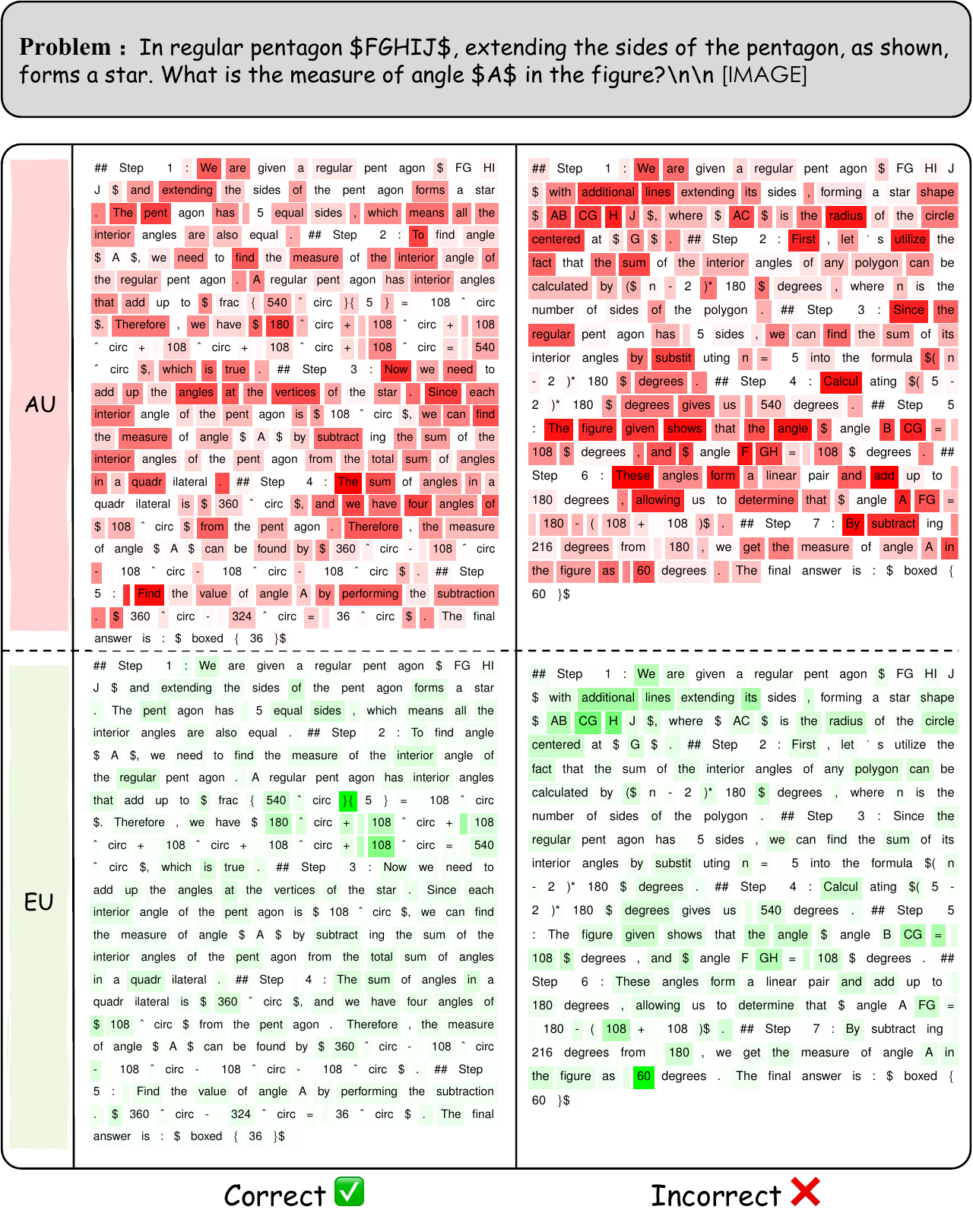}
    \caption{\textbf{Case Study (4/4):} The sample is from MATH500. In this example, the model demonstrated notably high uncertainty at the incorrect answer token ``$60$''. In the correct solution on the left, the model had low uncertainty for the correct answer ``$36$''.}
    \label{fig:math-260}
\end{figure}




\FloatBarrier

\end{document}